\documentclass[twoside,11pt]{article}

	\usepackage{jmlr2e}

	
	\firstpageno{1}
	
	\usepackage{amsmath}
	\usepackage{bbm}
	
	\makeatletter
	\@ifundefined {theorem} {
		\newtheorem{theorem}{Theorem}
		\newtheorem{lemma}[theorem]{Lemma} 
		\newtheorem{corollary}[theorem]{Corollary}
		\newtheorem{definition}[theorem]{Definition}
		\newtheorem{remark}[theorem]{Remark}
		\newtheorem{proof}[theorem]{Proof}
	}{}
	\makeatother
	\newtheorem{assumption}[theorem]{Assumption}

	\newcommand {\psucc} {\mathcal{P}}
	\newcommand {\psucce} {\mathcal{P}^{\varepsilon}}
	\newcommand {\eprs} {\mathcal{A}}
	\newcommand {\pr} {\mathbb{P}}

	\newcommand {\ops} {\eprs_0}
	\newcommand {\Etrav} {E_\text{trav}}
	\newcommand {\allNsa} {\langle N_{s,a} \rangle}
	\newcommand {\allNsas} {\langle N_{s,a,s'} \rangle}
	\newcommand {\allBarNsa} {\langle \bar{N}_{s,a} \rangle}
	\newcommand {\allBarNsas} {\langle \bar{N}_{s,a,s'} \rangle}
	\newcommand {\allsth}[1] {\langle #1 \rangle}
	\newcommand {\Vbf} {V^{\pi^{-+}_k}}
	\newcommand {\pbf} {{\pi^{-+}_k}} 
	\newcommand {\Var} {\mathrm{Var} \,}
	\newcommand {\mean} {\mathbb{E}}
	
	\newcommand{\dotsim}{\mathrel{\dot{\sim}}}
	
	\mathchardef \sslash = "2D
	\DeclareMathOperator*{\argmin}{arg\,min}
	
	\usepackage{enumitem}
	
	\usepackage{graphicx}
	\usepackage{caption}
	\usepackage{subcaption}
	\DeclareGraphicsExtensions{.pdf,.png}
	
	\usepackage{tikz}
	\usetikzlibrary{automata}
	\usetikzlibrary{arrows}
	\usetikzlibrary{shapes}
	
	\usepackage{bm} 

	\pdfinfo{
		/Title (Success Probability of Exploration: a Concrete Analysis of Learning Efficiency)
		/Author (Liangpeng Zhang, Ke Tang, Xin Yao)
	}
	
\begin{document}

\title{Success Probability of Exploration: \\a Concrete Analysis of Learning Efficiency} 

\author{\name Liangpeng Zhang \email udars@mail.ustc.edu.cn \\
	\name Ke Tang \email ketang@ustc.edu.cn\\
	\addr UBRI, School of Computer Science and Technology\\
	University of Science and Technology of China\\
	Hefei, Anhui, 230027, China\\
	\name Xin Yao \email x.yao@cs.bham.ac.uk\\
	\addr CERCIA, School of Computer Science\\
	The University of Birmingham\\
	Edgbaston, Birmingham B15 2TT, U.K. 
}

\editor{ }

\maketitle

\begin{abstract}
Exploration has been a crucial part of reinforcement learning, yet several important questions concerning exploration efficiency are still not answered satisfactorily by existing analytical frameworks.
These questions include exploration parameter setting, situation analysis, and hardness of MDPs, all of which are unavoidable for practitioners. 
To bridge the gap between the theory and practice, we propose a new analytical framework called \textit{the success probability of exploration}.
We show that those important questions of exploration above can all be answered under our framework, and the answers provided by our framework meet the needs of practitioners better than the existing ones.
More importantly, we introduce a concrete and practical approach to evaluating the success probabilities in certain MDPs without the need of actually running the learning algorithm.
We then provide empirical results to verify our approach, and demonstrate how the success probability of exploration can be used to analyse and predict the behaviours and possible outcomes of exploration, which are the keys to the answer of the important questions of exploration.
\end{abstract}

\begin{keywords}
  Reinforcement learning, exploration efficiency, learning theory, analytical framework.
\end{keywords}

\section{Introduction}
\label{secIntro}
\label{sec3Q}
Exploration is an essential process for Reinforcement Learning (RL) agents to resolve uncertainty \citep{SuttonBarto98}.
In an initially unknown environment, a learning agent has to explore through different states and actions to gather information, so that better policies can be discovered accordingly via its planning process.
Most RL algorithms include a specific part, often called an exploration strategy, that explicitly deals with exploration.
Numerous exploration strategies have been designed and proposed in the literature, and some of the most popular ones among them are $\varepsilon$-greedy, Boltzmann action selection, Explicit Explore or Exploit \citep{kearns2002near}, R-MAX \citep{brafman2002r}, Upper Confidence RL \citep{jaksch2010near}, and Bayesian approaches \citep{vlassis2012bayesian}.

There have been several long-standing crucial questions regarding exploration faced by every RL practitioners.
These questions can be categorized into three groups as follows.

\begin{description}
	\item[(Q1) Exploration parameter setting] For every common exploration strategy, there is at least one parameter that balances exploration and exploitation by controlling the activeness of exploration. 
	Very often the best setting of the exploration parameter varies in different learning tasks, and whether knowing it or not before running the algorithm has a great impact on the overall efficiency.
	Then, given a learning task, what is the corresponding best parameter setting for the exploration strategy?
	Is it possible to be found out without time-consuming trial-and-error, or even without running the learning algorithm at all?
	
	\item[(Q2) Situation analysis] RL practitioners often come into the situation where a learning algorithm has been executed for a while on a RL task, but the result is not satisfactory.
	In such cases, RL practitioners have to decide what to do next in order to improve the situation.
	Should the algorithm simply be kept running for some more time steps?
	Or should the practitioners tune its exploration parameter, or even reconsider its state and action representations?
	Is there any metric or descriptive statistics available that can be relied on to make the decision?
	
	\item[(Q3) Hardness of exploration] There have been many real-world applications where a vanilla algorithm could discover a surprisingly good policy within limited steps in an apparently complicated environment.
	There are also many seemingly simple environments that eventually turned out to be very hard to explore.
	RL practitioners have to decide how much resource should be allocated to the agent exploring the environment based on how difficult it is.
	Is there any practical methodology able to describe and quantify the hardness of exploration in different learning tasks?
	How to allocate resource for exploration accordingly?
\end{description}

Traditionally, these questions have been dealt with under different frameworks and approaches, but unfortunately none of them could give a solution practically useful and widely applicable.
Many solutions appear in the form of empirical rules extracted from experiences.
It often happens that these rules do not work well, and the time-consuming trail-and-error process has to be performed to find the best answer instead.
For example, in the scenario of exploration parameter setting where practitioner hopes to find the best value of $\varepsilon$ for the $\varepsilon$-greedy strategy, the default value $\varepsilon=0.1$ from the textbook \citep{SuttonBarto98} will usually be tried first.
If it leads to poor performance, then the practitioner has to manually try out values with different magnitude, say $0.001, 0.003, 0.01, 0.03, 0.1, 0.3$ etc., to decide the best one.

In addition to than the straightforward approach above, the first group of questions, exploration parameter setting, are mostly investigated under the framework of PAC analysis \citep{valiant1984theory, fiechter1994efficient, kakade2003sample, strehl2009pac} and the regret bound analysis \citep{auer2006logarithmic,jaksch2010near}.
In both of the frameworks, asymptotic bounds of some performance metric (sample complexity and regret, respectively) are derived for a given exploration strategy, and its parameter settings that lead to the corresponding bounds are specified.
Many well-known strategies, for example R-MAX and its variants (e.g. MoR-MAX \citep{szita2010mormax}, V-MAX \citep{rao2012v}, ICR and ICV \citep{zhang2015increasingly}), Model-Based Interval Estimation \citep{strehl2005mbie}, and UCRL$\gamma$ \citep{lattimore2014near}, have been proved to have sample complexity bounds polynomial to the scale parameters of the learning task.
The UCRL families are also proved to have regret bounds sublinear to the horizon of the cumulative rewards  \citep{jaksch2010near,ortner2012online}.

The main drawbacks of these analyses is that their theoretical results are not sufficiently relevant to the practical needs.
For example, the PAC theory for R-MAX \citep{strehl2009pac} requires its parameter $m$ to be set polynomial to the scale parameters of the learning task so that its sample complexity can be polynomial as well.
However, in practice $m$ is usually fixed to some value around 10-20 \citep{strehl2004empirical} regardless of the scale of the task, which violates the basic condition of the PAC theory.
Meanwhile, the PAC theory does not provide any prediction of the performance of R-MAX with its $m$ fixed to small values like 10-20.
This results in a strange dilemma where practitioners have to choose one between theoretical performance guarantee and actual efficiency, and in most cases the latter is chosen, leaving the former invalid in practice.
In \cite{zhang2015increasingly}, some workarounds are proposed so that the practitioners are not forced to discard theoretical guarantees in exchange for efficiency.
Nevertheless this cannot turn these existing guarantees to be more practically relevant and useful.

In general machine learning setting, situation analysis is often conducted via observing the learning curve, i.e. the plot of generalization error of the learned model against the size of the training dataset or the number of executed iterations \citep{perlich2011learning}.
This approach can also be applied to Reinforcement Learning by plotting the current total reward or the expected cumulative reward over time \citep{SuttonBarto98}.
However, the former does not directly represent the goodness of the learned policy, while evaluating the latter is actually a value prediction problem requiring an additional independent learning process, which can be as costly as the original learning process.
Even if such curves are accessible to practitioners, they are likely to be in a zigzag style in practice, and the decisions for improving the situation still have to be made according to experiences and domain knowledge.

There has been some works related to the hardness questions, such as action gap \citep{farahmand2011action} and distribution-norm \citep{maillard2014hard}.
The action gap captures the hardness of planning rather than exploration, and thus does not provide a direct answer to the hardness questions of exploration.
The distribution-norm is a hardness metric covering both planning and exploration parts of learning.
This metric explains why some common RL benchmarks are relatively easy in spite of having moderate hardness in terms of usual metrics such as problem size.
However, due to its abstractness, practitioners may find it difficult to turn this metric into any prediction of actual exploration behaviour of a learning agent in a given environment, or further into any practical advice on allocating resource for exploration.

In fact, the three groups of questions aforementioned can be better answered together rather than separately, because they are fundamentally interrelated.
The questions of situation analysis can be answered if we are able to predict what result can be expected for a given algorithm, executing under every reasonable parameter settings, on a given task for a certain time steps.
Being able to predict this directly provides us the answer to the questions of parameter setting, since what we need to do then is just to choose the parameter that yields best expected results within the time step budget.
The hardness questions can also be answered by comparing the expected goodness of results under the same setting of algorithm, its parameters, and the time step budget.

This paper proposes the \textit{success probability of exploration}, or success probability in short, as a unified answer to the three groups of questions.
The success probability of exploration is the probability $\psucc$ of an exploration strategy $\eprs$ under parameter setting $\theta$ yields a desired result $E$ on learning task $M$ at time step $\tau$.
We provide rigorous mathematical formulation of success probabilities, and present that knowing these probabilities is sufficient to answer all three groups of questions.

The success probability of exploration can be practically useful only if we are able to estimate it \textit{in prior} to executing the learning algorithm, or otherwise it will not free practitioners from tedious trial-and-error processes.
Therefore, we establish a concrete approach of assessing the success probability, by which its closed-form expression can be derived for certain prototype learning tasks.
In addition, we provide a practical approximation to the value of success probability, so that practitioners can use it in actual situations.
We then present our empirical results, which not only verify the correctness of our approach, but also display the high accuracy of our approximation.
Although our analyses are made mainly on the prototype tasks, we show that the results can be applied to a wider range of general domains and algorithms, which is also supported by the empirical results.

The rest of this paper is organized in three parts.
The first part consists of three sections.
In Section \ref{SectionPrelim}, we introduce the preliminary concepts of reinforcement learning that is relevant to this paper.
In Section \ref{SectionSPEBasic}, we formulate the success probability of exploration, compare it to the traditional PAC formulation, and provide its several elementary yet crucial properties.
In Section \ref{SectionSolve3Q}, we discuss how the success probability can be used to answer the three groups of questions related to exploration.

Then it comes to the second part, Sections \ref{SectionChainPerspective}, \ref{SectionSolvingSPE} and \ref{SectionApproxSPE}, where we elaborate our concrete approach to computing the success probability of exploration.
Specifically, in Section \ref{SectionChainPerspective}, we introduce the \textit{chain perspective}, which helps transform a general RL task to a more tractable one in the form of chain MDP.
In Section \ref{SectionSolvingSPE}, we derive the closed-form expression of success probability for a prototype exploration strategy running in chain MDPs.
Then in Section \ref{SectionApproxSPE}, we provide a practical approximation to the value of success probability.

The last part contains three sections as well.
In Section \ref{SectionExp}, we present our empirical results to justify our approach.
Readers may be more interested in Section \ref{SectionApplying}, where we demonstrate how our method can be applied to a general RL task, and then provide a short summary of our whole approach in the form of practice guide.
Finally in Section \ref{SectionDiscussion}, we discuss our approach at the macro level, and point out possible future researches relevant to this work.


\section{Preliminaries}
\label{SectionPrelim}

In this paper we follow the standard reinforcement learning framework in \citet{SuttonBarto98}, where an agent continuously interacts with a stochastic environment, learns its dynamic properties, and searches for the optimal policy that could lead to maximum expected cumulative rewards.
The environment here is formulated as a finite discounted Markov Decision Process (MDP) $M=(S, A, P, R, \gamma)$, where $S$ and $A$ are finite sets of states and actions respectively, $P$ is the transition function such that for all $s, s' \in S$ and $a \in A$, $P(s'|s,a):=\pr(s_{t+1}=s'|s_t=s,a_t=a)$ gives the fixed probability of state transition from state $s$ to $s'$ under action $a$ at arbitrary time $t$, $R:S \times A \times S \mapsto [{0, +\infty})$ is the reward function representing the numeric rewards of the transitions, and $\gamma \in (0,1)$ is a constant called the discount factor.

A (deterministic) policy $\pi: S \mapsto A$ maps each state to an action that should be taken by the agent when in that state.
The state value function of policy $\pi$, denoted by $V^\pi$, maps each state to the expected discounted cumulative reward the agent could get starting from that state and following policy $\pi$.
Given two arbitrary policies $\pi$ and $\pi'$, we write $V^\pi = V^{\pi'}$ to indicate that their state value functions satisfy $V^\pi(s) = V^{\pi'}(s)$ for all $s \in S$.

Let $\varPi$ denotes the set of all possible policies.
By Bellman equation, the following holds for any state $s \in S$ and any policy $\pi \in \varPi$:
\begin{equation}
\label{eqBellman1}
	V^\pi(s) = \sum_{s'\in S} P(s'|s,\pi(s))(R(s,\pi(s),s') + \gamma V^\pi(s')).
\end{equation}

A policy $\pi^* \in \varPi$ is called an optimal policy if $V^{\pi^*}(s) = \max_{\pi \in \varPi} V^\pi(s)$ holds for all $s \in S$, and the set of all optimal policies is denoted by $\varPi^*$.
There can be more than one optimal policies for some MDP.
However, for any given MDP, its optimal state value function $V^{\pi^*}$ is always unique by definition, and hence is often simply written as $V^*$.
The optimal state values satisfy Bellman optimality equation: for any state $s \in S$,
\begin{equation}
\label{eqBellman2}
	V^*(s) = \max_{a\in A} \sum_{s'\in S} P(s'|s,a)(R(s,a,s') + \gamma V^*(s')).
\end{equation}

If the environment is fully known by the agent, that is, the true transition function $P$ and reward function $R$ are given, then the optimal state values $V^*$ can be computed by planning algorithms designed based on Equations \ref{eqBellman1} or \ref{eqBellman2}.
Some popular planning algorithms, for example Value Iteration \citep{puterman1994markov}, have been proved that their calculated state values converge to the true optimal values in the limit, or to the near-optimal ones in polynomial time under some assumptions \citep{littman1995complexity}.

In more realistic reinforcement learning settings, the environment is initially unknown to the agent, and thus its dynamic properties must be estimated from the observations.
At each time step $t$, the agent receives an observation $(s_t, a_t, s_{t+1}, r_t)$ which represents the environment transiting from state $s_t$ to $s_{t+1}$ under the action $a_t$ while providing the agent an immediate reward of $r_t$. 
Obviously, the agent has $\tau$ observations $(s_1, a_1, s_{2}, r_1)$, $(s_2, a_2, s_{3}, r_2)$, ..., $(s_\tau, a_\tau, s_{\tau+1}, r_\tau)$ after $\tau$ steps.
This sequence of observations is called a trajectory (of length $\tau$) and is denoted by $\psi_\tau$.

We define the visit numbers of state-action pairs and transitions on the trajectory $\psi_\tau$ respectively as
\[
	N_{s,a}^{\psi_\tau} := \sum_{t=1}^\tau \mathbbm{1}(s_t = s)\mathbbm{1}(a_t = a), \;
\text{ and } \;
	N_{s,a,s'}^{\psi_\tau} := \sum_{t=1}^\tau \mathbbm{1}(s_t = s)\mathbbm{1}(a_t = a)\mathbbm{1}(s_{t+1} = s'),
\]
where $\mathbbm{1}(X)$ is the indicator function which equals 1 if expression $X$ is true and 0 otherwise.
Then the transition probabilities can be estimated straightforwardly by
$\hat{P}^{\psi_\tau}(s'|s,a) = N_{s,a,s'}^{\psi_\tau} / N_{s,a}^{\psi_\tau}$.
The immediate rewards, on the other hand, are deterministic with respect to the transitions, and thus estimating them is trivial (i.e. $\hat{R}^{\psi_\tau}(s,a,s') = r_t$ such that $(s_t,a_t,s_{t+1})=(s,a,s')$).
The resulting tuple $\hat{M}^{\psi_\tau} = (S, A, \hat{P}^{\psi_\tau}, \hat{R}^{\psi_\tau}, \gamma)$ is called an estimated model of the environment.

To simplify the notations, we remove $\psi_\tau$ from the visit numbers and estimated models and write them as $N_{s,a}$, $N_{s,a,s'}$, $\hat{P}$, $\hat{R}$, and $\hat{M}$ when there is no ambiguity from the context.
Further, we use $\allNsa$ and $\allNsas$ to collectively represent visit numbers of all state-action pairs and transitions.
In such cases, the subscripts do not refer to any specific state or actions.

A model-based reinforcement learning agent explicitly maintains an estimated model $\hat{M}$ of the true MDP $M$, and uses $\hat{M}$ instead of the unavailable $M$ as the input of its planning algorithm.
Intuitively, if more observations are used to estimate the transition probabilities, then the resulting $\hat{M}$ will probably be more accurate, and the output of the planning algorithm (optimal or near-optimal policy with respect to $\hat{M}$) will possibly get closer to the true ones with respect to $M$.
The model-free learning algorithms such as Temporal Difference and Q-Learning \citep{SuttonBarto98}, on the other hand, do not build models explicitly, but use Equations \ref{eqBellman1}, \ref{eqBellman2} or their modified versions to update the estimated values directly.
Still, they can be seen as model-based learning agents that utilize fast but degraded planning algorithms, and it is clear that they benefit from abundant observations just in the same way as model-based agents.

The key problem here is, the observations often come at a price, and in many real-world applications that involve interactions between concrete objects (e.g. Robotics), they can be even more expensive than the computational power.
Therefore, it is crucial that the agent choose the action wisely, avoiding unnecessary observations of the easily-estimated transitions (e.g. the deterministic ones) or the less relevant ones (for example, playing with a cat in a cooking task), and biasing to the more uncertain and relevant transitions. 
By doing so, more useful information can be gathered within fewer steps, and consequently high-quality policies are likely to be discovered earlier.

This bias or tactic is often referred as an \textit{exploration strategy}, denoted $\eprs$, which can be formulated as a function $\eprs(\psi_{t-1}, s_t)$ mapping from the current trajectory $\psi_{t-1}$ and state $s_t$ to an action $a_t$.
It can also be viewed as a non-stationary policy followed by the agent during learning that changes over time.
The exploration strategy provides the agent a useful heuristic regarding how many observations should be collected for each state-action pair, and in what order these state-action pairs should be visited.



\section{The Success Probability of Exploration}
\label{SectionSPEBasic}

As discussed in the last section, the observations usually don't come for free, and hence it is critical to know the relation between the cost, in terms of observations, and the outcome, namely the goodness of the policy derived from these observations.
In this section, we first formulate this cost-outcome relation through the success probability of exploration, then compare it to the PAC analysis, and finally highlight some elementary yet useful properties of the success probability of exploration.

\subsection{Formulating the Cost-outcome Relation}
The most straightforward representation of the cost in this context is the total number of observations.
It is the total number of time steps $\tau$ as well, since the agent receives exactly one observation at each time step.
Further, it is also the sum of visit numbers due to the fact that $\tau = \sum_{s,a} N_{s,a} = \sum_{s,a,s'} N_{s,a,s'}$.

In some more complicated settings, acquiring observations for certain state-action pairs might be more expensive than the others.
In this case, the cost can be represented as a weighted sum of visit numbers $\sum_{s,a} w_{s,a} N_{s,a}$.
However, this case can be transformed to the non-weighted one by augmenting the original MDP with sequences of trivial transitions corresponding to the weights.
For example, if the weight is 1 for all state-action pairs except for $(s_0, a_0)$, which has a weight of 4, then it is mathematically equivalent to the non-weighted case where $(s_0, a_0)$ always leads to some additional $(s_0', a_0')$, $(s_0'', a_0'')$, and $(s_0''', a_0''')$ before transiting to its original destination.
Therefore, it is sufficient to focus on the non-weighted case.

The formulation of the outcome should reflect the purpose of reinforcement learning, that is, looking for (near-)optimal policies.
Naturally, the outcome can be said to be desirable if and only if a learning algorithm outputs a (near-)optimal policy as its result of learning.
Therefore, we define the concept of \textit{success} used throughout this paper as follows.


\begin{definition}
\label{defEpsSuccess}
	A run of learning algorithm is said to be an \textbf{$\boldsymbol{\varepsilon}$-success} if and only if its output policy $\hat{\pi}^*$ satisfies $\hat{\pi}^* \in \varPi^{\varepsilon}$ where $\varepsilon$ is a non-negative number and $\varPi^{\varepsilon} := \{\pi|\forall s \in S,\, V^\pi(s) \geq V^*(s) - \varepsilon\}$. 
\end{definition}

\begin{remark}
\label{defSuccess}
	As a special case, when $\varepsilon=0$, the algorithm has to output an optimal policy $\hat{\pi}^* \in \varPi^*$ where $\varPi^{*} := \{\pi|\forall s \in S,\, V^\pi(s) = V^*(s)\}$ to achieve an $\varepsilon$-success.
	We say this kind of $\varepsilon$-success as a \textbf{strict success}.
\end{remark}

\begin{remark}
\label{defPiSuccess}
	Another kind of special case is that the algorithm achieves a success by exactly choosing one specific desired policy $\pi$ as its output.
	We say this kind of success as a \textbf{$\boldsymbol{\pi}$-success}.
\end{remark}

To simplify mathematical notation, we write $E^{\varepsilon}$ to represent an $\varepsilon$-success event, and write $E^*$ to represent a strict success event.
We also write $E^\pi$ to indicate a $\pi$-success event which ends up with the output being $\pi$.


Although many reinforcement learning algorithms are guaranteed to converge to optimal policies if all state-action pairs have been visited infinitely many times \citep{SuttonBarto98}, in reality the resources for acquiring observations are far less than infinite.
With a limited budget of observations, the estimates of transitions made by the agent can be inaccurate.
In that case, the algorithm can sometimes fails to discover a (near-)optimal policy, and hence the probability it achieves a success can be less than 1.
We call this probability the \textit{success probability of exploration}; the formal definition is as follows.

\begin{definition}
\label{defSPE}
	Let $M$ be an MDP, $\eprs$ be an exploration strategy, $\theta$ be its parameters, $\tau$ be a number of time steps, and $E^{\varepsilon}$ be an $\varepsilon$-success event.
	Then the \textbf{$\boldsymbol{\varepsilon}$-success probability of exploration} for $\eprs(\theta)$ running $\tau$ steps in $M$ is defined as: 
	\begin{equation*}
		\psucce_{M,\eprs(\theta),\tau} := \pr(E^{\varepsilon}|M,\eprs(\theta),\tau).
	\end{equation*}
\end{definition}

In the special case where $\varepsilon=0$, which corresponds to the case of strict success, we write $\psucc^*_{M,\eprs(\theta),\tau}$.
The difference between the near-optimal success and strict success is trivial from the mathematical perspective, since the choice of the value of $\varepsilon$ has impact only on the set of desired policies ($\varPi^\varepsilon$ or $\varPi^*$) in both cases.
Therefore, if the exact value of $\varepsilon$ is not important for the discussion, we can drop them from $\psucce_{M,\eprs(\theta),\tau}$ and write $\psucc_{M,\eprs(\theta),\tau}$ for convenience.
Additionally, if the subscripts are clear from the context, then we can drop them as well and simply write $\psucce_{\theta,\tau}$, $\psucce$, or $\psucc$.

\begin{remark}
For the special case of $\pi$-success, we write $\psucc_{M,\eprs(\theta),\tau}^\pi$ to represent the corresponding success probability, which equals to $\pr(E^\pi|M,\eprs(\theta),\tau)$.
In this case, the exact value of $\varepsilon$ is irrelevant because the desired policy has already been explicitly specified.
\end{remark}

The success probability of exploration defined above provide us a precise description of the cost-outcome relation we are interested.
The number of time steps $\tau$ represents the cost, the success event corresponds to the outcome, and the conditional probability $\psucc$ connects them together.
We put emphasis on exploration rather than planning here because it is the exploration strategy $\eprs(\theta)$ that decides the distribution of the observations $\allNsa$ and $\allNsas$, which is the unique source of information that the planning algorithms relies on to make plans.

\subsection{Comparison to the Traditional PAC Analysis}
\label{secCompTradPAC}
Our formulation is inspired by the notion of Probably Approximately Correct (PAC, \cite{valiant1984theory, fiechter1994efficient, kakade2003sample}) which tries to figure out how many observations are needed to be $\varepsilon$-optimal with probability at least $1-\delta$.
However, there are several key differences between our new formulation and the existing PAC notions when applied to reinforcement learning.



Firstly, the traditional PAC analysis provides results in the form of asymptotic bounds of sample complexity.
There lies a big gap between the best upper and lower bounds \citep{szita2010mormax, lattimore2014near} been discovered.
This means that either the upper bound is too loose, or the specific hard problem used to derive the lower bound is still not difficult enough.
Whichever the case, the current gap makes it difficult to definitely compare the efficiency between algorithms.
Therefore, a non-asymptotic approach of analysis can be more helpful in practice.
Our formulation and approach to the success probability of exploration, as can be seen in the later sections, are not based on asymptotic analysis.
By applying our approach, concrete relations between the cost and outcome can be obtained, which can be more convenient and functional compared to loose bounds.

Secondly, the traditional PAC analysis focuses on the theoretical sample complexity bounds for the most difficult MDP under a given setting of scale parameters ($|S|$, $|A|$, $\gamma$, etc.).
However, the actual MDPs most RL practitioners encounter in real-world applications may not be as difficult as the ones used for PAC analysis.
This leads to a paradoxical situation where, if one decide to set the exploration parameters according to the PAC theories, then the learning agent is very likely to over-explore as if it is in the most difficult MDP, resulting in poor actual performance despite its PAC guarantee \citep{kolter2009near,zhang2015increasingly}.
Our formulation directly addresses the success probability with respect to the given MDP, rather than to the hardest one, thus avoids this problem.

Thirdly, the sample complexity assessed in PAC-MDP analysis does not actually correspond to the total number of observations required to achieve an $(\varepsilon;\delta)$-PAC.
Instead, it corresponds to the number of locally non-$\varepsilon$-optimal steps, that is, the steps where the current policy has a non-$\varepsilon$-optimal value at the current state.
There can be arbitrarily many locally $\varepsilon$-optimal steps between any two successive locally non-$\varepsilon$-optimal steps.
In extreme cases, an agent with a polynomial sample complexity may still requires infinitely many steps to discover a globally $\varepsilon$-optimal policy (i.e. has $\varepsilon$-optimal values at all states).
This is not desirable if a locally $\varepsilon$-optimal step is as costly as a locally non-$\varepsilon$-optimal step, which is more likely to be the case in reality.
Therefore, it is crucial to have some theoretical result revealing the relation between the \textit{total number of steps}, including both locally $\varepsilon$-optimal and non-$\varepsilon$-optimal ones, and the outcome, which is exactly what we are trying to achieve in this paper.

Finally, the optimality discussed here is mathematically stronger than the ones in the literature concerning PAC analyses in RL.
The PAC optimality in \cite{fiechter1994efficient} refers to a local $\varepsilon$-optimality in the fixed start state, while in the PAC-MDP analyses \citep{kakade2003sample,strehl2009pac} it refers to a local $\varepsilon$-optimality along the states the agent actually visits during learning.
In our formulation, the output policy has to be $\varepsilon$-optimal in all states of the MDP in order to be $\varepsilon$-successful.
Therefore, an $\varepsilon$-success must be $\varepsilon$-optimal in the PAC framework of \cite{fiechter1994efficient} and \cite{kakade2003sample}, but the converse is not necessarily correct.

\subsection{Basic Properties of the Success Probability of Exploration}
\label{secBasicProperties}

To provide some more clear ideas to the readers about the success probability of exploration, we introduce some of its elementary yet useful properties in this subsection.
The first lemma below states the relationship between $\psucce$ and $\psucc^\pi$.

\begin{lemma}[First-level Decomposition]
	\label{lemL1Decomp}
	\begin{equation*}
	\psucce_{\theta,\tau} = \sum_{\pi \in \varPi^\varepsilon} \psucc_{\theta,\tau}^\pi.
	\end{equation*}
\end{lemma}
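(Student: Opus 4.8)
The plan is to recognize the right-hand side as the probability of a disjoint union, so that the identity reduces to finite additivity of $\pr$. The only ingredients needed are the definitions of the events $E^\varepsilon$ and $E^\pi$ together with the finiteness of the policy set.

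First I would unwind the definitions. By Definition \ref{defEpsSuccess}, on a given run the event $E^\varepsilon$ occurs precisely when the output policy $\hat{\pi}^*$ lands in $\varPi^\varepsilon$, while by Remark \ref{defPiSuccess} the event $E^\pi$ occurs precisely when $\hat{\pi}^* = \pi$. Hence $E^\varepsilon$ occurs if and only if $\hat{\pi}^* = \pi$ for some $\pi \in \varPi^\varepsilon$, which is exactly the set identity
\begin{equation*}
E^\varepsilon = \bigcup_{\pi \in \varPi^\varepsilon} E^\pi .
\end{equation*}
Next I would argue that the events on the right are pairwise disjoint: on any single run the learning algorithm returns one and only one output policy, so $E^\pi$ and $E^{\pi'}$ cannot both hold when $\pi \neq \pi'$, giving $E^\pi \cap E^{\pi'} = \emptyset$. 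Since $S$ and $A$ are finite, $\varPi$ and therefore $\varPi^\varepsilon$ are finite, so this is a \emph{finite} disjoint union. Applying finite additivity of the conditional probability $\pr(\,\cdot \mid M, \eprs(\theta), \tau)$ then yields
\begin{equation*}
\psucce_{\theta,\tau} = \pr(E^\varepsilon) = \sum_{\pi \in \varPi^\varepsilon} \pr(E^\pi) = \sum_{\pi \in \varPi^\varepsilon} \psucc_{\theta,\tau}^\pi ,
\end{equation*}
which is the claim.

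There is no genuinely hard step: the statement is essentially a law-of-total-probability bookkeeping identity over the partition of $E^\varepsilon$ by the value of the output policy. The only points deserving explicit care are (i) disjointness, which relies on the implicit assumption that each run produces a single deterministic output policy, so that the $E^\pi$ genuinely partition $E^\varepsilon$ rather than merely cover it; and (ii) the finiteness of $\varPi^\varepsilon$, which licenses a plain summation and avoids any countable-additivity subtlety. I would state these two observations explicitly, as they are precisely what make the decomposition an exact equality rather than an inequality.
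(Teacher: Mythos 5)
Your proposal is correct and follows essentially the same route as the paper's own proof: the paper likewise decomposes $E^\varepsilon = \bigcup_{\pi \in \varPi^\varepsilon} E^\pi$, notes the $E^\pi$ are mutually exclusive because the planner outputs exactly one policy, and sums the probabilities. Your explicit remarks on finiteness of $\varPi^\varepsilon$ and on the single-output assumption are sound refinements of the same argument rather than a different approach.
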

\begin{proof}
	Because the planning algorithms output only one policy at a time, success events $E^\pi$ and $E^{\pi'}$ for any $\pi \neq \pi'$ cannot happen at the same time.
	Therefore, we have $E^\varepsilon = \bigcup_{\pi \in \varPi^\varepsilon} E^\pi$, 
	and hence $\psucce_{\theta,\tau} 
	= \pr(\bigcup_{\pi \in \varPi^\varepsilon} E^\pi|\theta,\tau) 
	= \sum_{\pi \in \varPi^\varepsilon} \pr(E^\pi|\theta,\tau)
	= \sum_{\pi \in \varPi^\varepsilon} \psucc_{\theta,\tau}^\pi$.
\end{proof}

Lemma \ref{lemL1Decomp} actually states that the $\varepsilon$-success probability equals to the sum of  $\pi$-success probabilities ending up with each policy in $\varPi^\varepsilon$.
The most intriguing part of this lemma is, it shows that the $\varepsilon$-success probability $\psucce_{\theta,\tau}$ in only determined by the optimality parameter $\varepsilon$ in the set of relevant policies $\varPi^\varepsilon$.
Therefore, changing $\varPi^\varepsilon$ to any arbitrary set of interested policies does not make this lemma invalid, even if the policies in the new set share no $\varepsilon$-optimality, as stated in Remark \ref{remarkL1Decomp}.

\begin{remark}
	The set of $\varepsilon$-optimal policies $\varPi^\varepsilon$ can be changed to any subset of policies $\tilde{\varPi} \subseteq \varPi$ and the first-level decomposition still holds, that is, 
	\begin{equation*}
		\psucc^{\tilde{\varPi}}_{\theta,\tau} = \sum_{\pi \in \tilde{\varPi}} \psucc_{\theta,\tau}^\pi.
	\end{equation*}
	\label{remarkL1Decomp}
\end{remark}

This can be particularly convenient in real-world applications, because it is often impossible to figure out what the $\varepsilon$-optimality is for the desired policies. 
Actually, it is often the case that RL practitioners have a set of desired policies in their mind, but cannot specify what $\varepsilon$ should be to describe this set.
By Lemma \ref{lemL1Decomp}, the practitioners only need to work out all $\pi$-success probabilities for the desired policies, and add them together to obtain the overall success probability, avoiding the necessity of specifying $\varepsilon$.
Nevertheless, we will continue using $\varPi^\varepsilon$ to refer to the set of desired policies in the rest of this paper.

To determine the $\pi$-success probabilities, we need to know how exactly the planning algorithm chooses its output policy.
Therefore, the following assumptions about the algorithm discussed in this paper has to be introduced in order to specify its behaviour.

\begin{assumption}
	\label{assPlan}
	The planning algorithm outputs exactly one policy $\pi$ that satisfies $\hat{V}^\pi = \hat{V}^*$, where $\hat{V}^*(s) = \max_{\pi \in \varPi} \hat{V}^{\pi}(s)$ for any $s$.
	If there is more than one policy satisfying $\hat{V}^\pi = \hat{V}^*$, the planning algorithm randomly and uniformly choose one of them as its output.
\end{assumption}

\begin{assumption}
	If a state-action pair is never visited, then the output policy of the planning algorithm will not contain that state-action pair. 
\end{assumption}

The first assumption is rather mild and reasonable because it actually holds for most popular planning algorithms.
The second one is also reasonable because without any information about a state-action pair, the agent should have no idea how good it is, and hence should not choose it if it has other better options.
This assumption is violated when generalization or prior knowledge is involved in planning.
By utilizing such techniques and knowledge, the algorithm may have non-trivial estimates to the values of the unvisited state-action pairs. 
This issue will be dealt with in Section \ref{SectionChainPerspective}, where we propose the chain perspective to include the effect of generalization into the representation of environment.

Now it is clear that in order to obtain a $\pi$-success, the agent must visit all of the state-action pairs that are related to $\pi$ at least once.
This leads to the lemma which we call \textit{the second-level decomposition} of success probability as below.

\begin{definition}
	\label{defTrav}
	A \textbf{traverse event of $\boldsymbol{\pi}$}, denoted $E^\pi_\text{trav}$, is said to have occurred if and only if $N_{s, \pi(s)} \geq 1$ holds for all $s \in S$.
\end{definition}


\begin{lemma}[Second-level decomposition]
	\label{lemL2Decomp}
	\begin{equation*}
		\psucc^{\pi}_{\theta,\tau} = 
		\pr(E^{\pi}|\Etrav^{\pi},\theta,\tau) \,
		\pr(\Etrav^{\pi}|\theta,\tau).
	\end{equation*}
\end{lemma}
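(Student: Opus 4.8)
The plan is to recognize this identity as essentially the definition of conditional probability, with the crucial observation being that a $\pi$-success cannot occur unless the traverse event $\Etrav^\pi$ has occurred. Equivalently, the event $E^\pi$ is contained in the event $\Etrav^\pi$, and once this containment is established the decomposition is immediate.

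First I would prove the containment $E^\pi \subseteq \Etrav^\pi$. Suppose toward contradiction that $E^\pi$ occurs while $\Etrav^\pi$ does not. By Definition \ref{defTrav}, the failure of $\Etrav^\pi$ means there is some state $s \in S$ with $N_{s,\pi(s)} = 0$, i.e. the state-action pair $(s,\pi(s))$ is never visited along the trajectory. By the second assumption above, the output policy of the planning algorithm cannot contain any state-action pair that is never visited, so the output policy must differ from $\pi$ at $s$. This contradicts the occurrence of $E^\pi$, which by definition is the event that the output equals $\pi$. Hence $E^\pi \subseteq \Etrav^\pi$, and therefore $E^\pi \cap \Etrav^\pi = E^\pi$.

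With this containment in hand, the result follows from the chain rule for conditional probability. Conditioning throughout on $M, \eprs(\theta), \tau$ (abbreviated $\theta, \tau$), I would write $\pr(E^\pi|\theta,\tau) = \pr(E^\pi \cap \Etrav^\pi|\theta,\tau) = \pr(E^\pi|\Etrav^\pi,\theta,\tau)\,\pr(\Etrav^\pi|\theta,\tau)$, where the first equality uses $E^\pi \cap \Etrav^\pi = E^\pi$ and the second is just the definition of conditional probability. Since $\psucc^\pi_{\theta,\tau} = \pr(E^\pi|\theta,\tau)$ by Definition \ref{defSPE}, this is exactly the claimed identity.

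There is no real computational obstacle; the entire content of the lemma is the logical step $E^\pi \Rightarrow \Etrav^\pi$, which rests on the second assumption. The only point requiring care is to apply that assumption in the correct direction---an unvisited pair is \emph{excluded} from the output---so that missing even a single visit to $(s,\pi(s))$ suffices to rule out $\pi$ being returned. I would also note that the identity holds even when $\pr(\Etrav^\pi|\theta,\tau)=0$, since then both sides vanish, so no positivity assumption on the conditioning event is needed.
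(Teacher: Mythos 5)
Your proof is correct and is essentially the paper's argument: the paper applies the law of total probability over $\Etrav^{\pi}$ and $\neg\Etrav^{\pi}$ and shows $\pr(E^{\pi}|\neg\Etrav^{\pi},\theta,\tau)=0$ via the same key step you use (an unvisited pair $(s,\pi(s))$ cannot appear in the output policy), whereas you phrase this as the containment $E^{\pi}\subseteq\Etrav^{\pi}$ followed by the chain rule---an equivalent bookkeeping of the identical idea. Your remark that the identity survives $\pr(\Etrav^{\pi}|\theta,\tau)=0$ is a minor point of care the paper's version glosses over, but it does not change the substance.
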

\begin{proof}
	By the law of total probability, it holds that 
	$\psucc^{\pi}_{\theta,\tau} = 
	\pr(E^{\pi}|\Etrav^{\pi},\theta,\tau)
	\pr(\Etrav^{\pi}|\theta,\tau) + 
	\pr(E^{\pi}|\neg \Etrav^{\pi},\theta,\tau)
	\pr(\neg \Etrav^{\pi}|\theta,\tau)$,
	where $\neg \Etrav^{\pi}$ represents that the traverse event of $\pi$ is not happened.
	However, if the traverse event of $\pi$ is not happened, then there exists some state $s$ such that $N_{s, \pi(s)} = 0$, and hence the action of the output policy at state $s$ will never be $\pi(s)$.
	This leads to $\pr(E^{\pi}|\neg \Etrav^{\pi},\theta,\tau) = 0$, and hence the theorem.
\end{proof}

Lemma \ref{lemL2Decomp} allows us to compute the probability of successful traverse events and the conditional success probability separately, then combine them together to get the unconditional success probability.
As can be seen in later sections, this can be very helpful because the conditional success probability is easier to work out than the unconditional one.

Additionally, this lemma provides an intuition about why the optimism principle proposed by \cite{kaelbling1996reinforcement} is so broadly accepted in designing the exploration strategies.
The optimism principle requires every state-action pair be tried for some times before any conclusion on its utility is made.
This often leads to the occurrence of traverse events, which enables the possibility of success, and thus plays a key role in theoretically guaranteeing the performance of learning algorithms, even if such a success is not the exact objective for these algorithms.

In contrast, some naive strategies without optimism, $\varepsilon$-greedy for example, are more likely to fail in traversing, which result in further failures in planning high-quality policies due to lack of key information.
Although non-optimistic strategies are widely utilized in real-world applications (e.g. \cite{abbeel2005exploration, riedmiller2009reinforcement, mnih2015human}), their successes are often more dependent on the state/action feature engineering, prior knowledge, the (near-)deterministic environment, and generalization techniques.
In less deterministic environments with less prior knowledge available to both machines and RL practitioners, the impact of insufficient exploration becomes more significant, worsening the performance of vanilla strategies. 

It is worth noting that, although literally traversing an MDP can be impractical in large-scale real-world environments due to state explosion, the discussion for traverse events remains essential and reasonable for analyses.
To deal with large-scale (or even continuous) MDPs, we will introduce \textit{the chain perspective} in Section \ref{SectionChainPerspective}.
In short, the chain perspective reduce an arbitrary MDP to a chain MDP with much smaller scale, such that the success probability of exploration in the original MDP can be approximated by that in the reduced chain MDP.
It is sufficient to traverse the reduced chain MDP, rather than the original one, for a learning agent to achieve a success in our analyses.
This properly reflects the fact that, with generalization techniques applied, learning agents are able to discover desirable policies if all key states are sufficiently explored.
Traversing the reduced chain MDPs are both realistic and sensible milestones to success for agents, and therefore the discussion of traverse events are necessary even in realistic scenarios.
We will come back to this topic in later sections.

Combining the first-level and the second-level decomposition lemmas, we get the following theorem.
\begin{theorem}[Divide-and-conquer the success probability]
	\begin{equation*}
		\psucce_{\theta,\tau} = 
		\sum_{\pi \in \varPi^\varepsilon} \pr(E^{\pi}|\Etrav^{\pi},\theta,\tau) \, \pr(\Etrav^{\pi}|\theta,\tau).
	\end{equation*}
	\label{theoDecomp}
\end{theorem}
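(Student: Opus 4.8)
The plan is to obtain the statement as a direct composition of the two decomposition lemmas already established, so that no genuinely new probabilistic argument is needed. First I would invoke the first-level decomposition, Lemma~\ref{lemL1Decomp}, to write $\psucce_{\theta,\tau} = \sum_{\pi \in \varPi^\varepsilon} \psucc^\pi_{\theta,\tau}$. This reduces the $\varepsilon$-success probability to a finite sum of $\pi$-success probabilities indexed by $\varPi^\varepsilon$, and it rests only on the fact (used in that lemma) that the planning algorithm outputs a single policy, so the events $E^\pi$ are mutually exclusive and jointly cover $E^\varepsilon$.

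Next, for each individual summand I would apply the second-level decomposition, Lemma~\ref{lemL2Decomp}, which factorises each $\pi$-success probability as $\psucc^\pi_{\theta,\tau} = \pr(E^{\pi}|\Etrav^{\pi},\theta,\tau)\,\pr(\Etrav^{\pi}|\theta,\tau)$. Substituting this product into every term of the first-level sum yields exactly $\psucce_{\theta,\tau} = \sum_{\pi \in \varPi^\varepsilon} \pr(E^{\pi}|\Etrav^{\pi},\theta,\tau)\,\pr(\Etrav^{\pi}|\theta,\tau)$, which is the claim. Since $\varPi^\varepsilon$ is finite, interchanging the summation with the term-by-term substitution is unproblematic, and no limiting or measure-theoretic subtleties arise.

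The only point needing care, and hence the closest thing to an obstacle, is to confirm that the two lemmas are stated under identical conditioning on $M$, $\eprs(\theta)$, and $\tau$, so that the factorisation supplied by the second lemma can be dropped into the sum supplied by the first without any mismatch of the conditioning context. Because both lemmas suppress $M$ and $\eprs$ uniformly and condition on the same $\theta$ and $\tau$, the composition is clean, and I expect the entire argument to reduce to a single substitution once both lemmas are in hand.
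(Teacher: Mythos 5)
Your proposal is correct and is exactly the paper's argument: the authors prove Theorem~\ref{theoDecomp} by citing Lemma~\ref{lemL1Decomp} and Lemma~\ref{lemL2Decomp}, i.e.\ applying the first-level decomposition and then substituting the second-level factorisation into each summand, just as you do. Your extra check that the conditioning on $M$, $\eprs(\theta)$, and $\tau$ is uniform across both lemmas is a reasonable point of care that the paper leaves implicit.
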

\begin{proof}
	By Lemma \ref{lemL1Decomp} and Lemma \ref{lemL2Decomp}.
\end{proof}

By Theorem \ref{theoDecomp}, the success probability can be dealt with in a divide-and-conquer manner.
In this way, the necessity of analysing a complex event as a whole is avoided, turning the complicated task of deriving the expression of success probability to a more tractable one.

\section{The Solution to the Three Groups of Questions of Exploration}
\label{SectionSolve3Q}
Having established a new perspective of the cost-outcome relation, it is crucial to examine whether this new perspective is able to reflect our practical needs.
In the following subsections, we demonstrate that our success probability of exploration can be used to answer the three groups of questions mentioned in Section \ref{sec3Q}.
To focus on this purpose, we leave the details of our approach to concretely estimating the success probability of exploration to later sections.
For now, let us just assume that we have already derived the closed-form expression of the $\psucc_{\theta,\tau}$ for MDP $M$ and exploration strategy $\eprs$. 

\subsection{(Q1) Questions of Exploration Parameter Setting}
\label{secSolveQ1}
The first group of questions concern how to find the most suitable parameter setting $\theta$ for the given learning task such that minimum number of observations $\tau$ are required to guarantee the failure probability $(1-\psucc_{\theta,\tau})$ not exceeding some threshold $\delta$.
It corresponds to the question that the traditional PAC analysis tries to answer (``With what parameters the algorithm can be PAC with polynomial sample complexity?'') but have not come with a satisfying result (see Section \ref{secCompTradPAC}).
Therefore, the success probability of exploration ought to be more productive in answering these questions.

More formally, we are interested in the best parameter setting $\theta_\delta$ such that
\begin{equation}
\label{eqThetaDelta}
	\theta_\delta = \argmin_\theta \tau: 1-\psucc_{\theta,\tau} \leq \delta.
\end{equation}
The success probability $\psucc_{\theta,\tau}$ here can be seen as a function of $\tau$ and write as $\psucc_\theta(\tau)$.
Intuitively, if more observations are obtained and are used to build the model $\hat{M}$, then $\hat{M}$ is more likely to be accurate, and because its estimates of transitions are unbiased, the success probability should increase in most cases.
In the ideal case where $\psucc_\theta(\tau)$ strictly increases with $\tau$ and thus has an inverse function $\psucc^{-1}_\theta(\delta)$, we have
\begin{align*}
	\theta_\delta 
	&= \argmin_\theta \tau: \psucc_\theta(\tau) \geq 1 -\delta \\
	&= \argmin_\theta \tau: \tau \geq \psucc^{-1}_\theta(1-\delta) \\
	&= \argmin_\theta \psucc^{-1}_\theta(1-\delta),
\end{align*}
and thus the best parameter setting can be discovered directly by calculating the minimum point of the function above.
Even if $\psucc_\theta(\tau)$ is not strictly monotonic, Equation \ref{eqThetaDelta} can still be view as a constrained optimization problem and be solved via  function optimisation techniques.
By doing so, the best parameter setting of $\eprs$ for achieving ($\varepsilon,\delta$)-optimality can be discovered without trial-and-error. 
In this way, the actual budget of observations required for training the agent can be significantly reduced, at the price of merely some numerical computations.
This can be precious for real-world applications where observations are far more expensive than computational power.

In conclusion, knowing the closed-form expression of the success probability is sufficient to answer the questions of exploration parameter setting.

\subsection{(Q2) Questions of Situation Analysis}

Suppose that we have run the learning algorithm for a while, but have not achieved a success.
We then have to conduct situation analysis, i.e. evaluate the learning process so far to figure out whether we have to increase the sample size, change the parameters of the exploration strategy, or even improve the representation of states and actions.

The need of situation analysis arises even if a practitioner chooses the best parameter setting with respective to ($\varepsilon,\delta$)-optimality using the method introduced above.
The reason is that parameter setting only guarantees a probability $1-\delta$ of success, and there still exist chances of failure that cannot be ignored.
When a failure occurs, the practitioner has to conduct situation analysis to decide what should be done with the information already obtained as well as the remaining resource budgets.
Although the need of situation analysis can be removed by controlling the failure probability to a tiny value, this often leads to a greater need of time steps $\tau$ for success, or a worse $\varepsilon$-optimality.
Therefore, it is reasonable to trade off some success probability for a better optimality and a less need of time steps, and rely on situation analysis to cope with undesirable consequences in a iterative manner.

Comparing the success probability of exploration under different settings leads us to a clear solution to the situation analysis.
Specifically, given the parameter setting $\theta$ and the time step $\tau$, the possible current situations and their corresponding solutions are as follows:
\begin{enumerate}[label={(\alph*)}]
	\item
	\begin{description}[align=left,noitemsep]
		\item [Situation] The success probability $\psucc_{\theta,\tau}$ is sufficiently high. 
		\item [Solution] The current failure is probably due to bad luck (e.g. some rare events happened in a row). 
		Therefore, re-run the learning algorithm with the same setting of $\theta$ and $\tau$ should be sufficient.
	\end{description}
	\item
	\begin{description}[align=left,noitemsep]
		\item [Situation] The success probability $\psucc_{\theta,\tau}$ is not high enough. 
		However, if we increase $\tau$ to a higher yet reasonable value $\tau'$, then $\psucc_{\theta,\tau'}$ becomes acceptable.
		\item [Solution] The observations are insufficient. 
		If there is still enough time step budget, then we should keep the parameter setting $\theta$ unchanged, continue learning and wait for the agent to collect more observations.
	\end{description}
	\item
	\begin{description}[align=left,noitemsep]
		\item [Situation] The success probability $\psucc_{\theta,\tau}$ is not high enough. 
		However, if we change the parameter setting $\theta$ to some reasonable $\theta'$, then $\psucc_{\theta',\tau}$ notably improves.
		\item [Solution] The current parameter setting of the exploration strategy is highly risky, and we were not lucky enough to achieve a success. 
		We should change the parameters to $\theta'$ for higher success probability and re-run the learning algorithm for $\tau$ steps.
	\end{description}
	\item
	\begin{description}[align=left,noitemsep]
		\item [Situation] The success probability $\psucc_{\theta,\tau}$ is not high enough. 
		Furthermore, no matter how we change the value of  $\theta$ and $\tau$ within reasonable range, the success probability remains unacceptable.
		\item [Solution] It is too risky to expect the learning succeeds under the current problem formulation or representation.
		Unless there is sufficient time step budget to bear the risk, we may have to examine the problem formulation or the state/action representation and try to improve them.
	\end{description}
\end{enumerate}

As we can see above, the success probability of exploration is helpful for assessing the present situation of the learning process, identifying the critical factors that may influence the performance and providing the possible solutions accordingly.
Additionally, this method of analysis is not based on the learning curves, and hence does not require to keep track of the performance of the output policies at every time step, avoiding the potential problems mentioned in Section \ref{sec3Q}.

\subsection{(Q3) Questions of Hardness of Exploration}
\label{secSolveQ3}
The hardness of reinforcement learning tasks is not obvious, especially for the exploration part since it involves uncertainty.
Literature have shown that some benchmarks considered non-trivial are actually relatively easy \citep{maillard2014hard}, while some seemingly impossible tasks are solved efficiently by rather simple algorithms (e.g. \cite{mnih2015human,silver2016mastering}).
Therefore, it is crucial to develop some metrics that can be used to compare the hardness of exploration in different MDPs.
Further, this metric should be directly related to the cost of learning so that practitioners can make their budget accordingly.

The success probability of exploration reflects the internal properties of MDPs as well as the interaction between exploration strategies and MDPs, and thus provides effective solution to these questions.
Given two MDPs $M_1$ and $M_2$, their success probabilities $\psucc_{M_1,\eprs(\theta),\tau}$ and $\psucc_{M_2,\eprs(\theta),\tau}$ using some baseline exploration strategy $\eprs$ can be compared to decide their relative hardness.
For example, fixing the exploration parameter setting $\theta$, if we find that for any number of steps $\tau>0$ it holds that $\psucc_{M_1,\eprs(\theta),\tau} > \psucc_{M_2,\eprs(\theta),\tau}$, then undoubtedly the MDP $M_1$ is easier than $M_2$ for exploration strategy $\eprs(\theta)$.
Of course, comparing the success probabilities under all $\tau>0$ is not always necessary for practical purpose, and it may suffices to compare them within some reasonable budget range $\tau_\text{min} < \tau < \tau_\text{max}$.

Alternatively, given some small failure probability $\delta$, say 0.05, we can compare the minimum numbers of steps $\tau_1$ and $\tau_2$ required to satisfy $\psucc \geq (1-\delta) = 0.95$, respectively in $M_1$ and $M_2$.
Here $\tau_1$ and $\tau_2$ can be computed by function optimization methods as in Section \ref{secSolveQ1}, namely by solving the minimization problems
\[
	\tau_1 = \min_\theta \tau : \psucc_{M_1,\eprs(\theta)}(\tau) \geq 1-\delta, \,
	\text{ and } \,
	\tau_2 = \min_\theta \tau : \psucc_{M_2,\eprs(\theta)}(\tau) \geq 1-\delta,
\]
and for strictly increasing functions, they can be simplified to  
\[
	\tau_1 = \min_\theta \psucc^{-1}_{M_1,\eprs(\theta)} (1-\delta), \,
	\text{ and } \,
	\tau_2 = \min_\theta \psucc^{-1}_{M_2,\eprs(\theta)} (1-\delta).
\]
Comparing $\tau_1$ and $\tau_2$ reflects, under the best parameters settings of $\eprs$ respectively for $M_1$ and $M_2$, which MDP needs more steps to explore.

Furthermore, we can draw the whole $\theta$-$\tau$-$\psucc_{\theta,\tau}$ surface for $M_1$ and $M_2$, and see whether one of them is above the other (i.e. $\psucc_{M_1,\eprs(\theta),\tau} \geq \psucc_{M_2,\eprs(\theta),\tau}$ for any $\theta$ and $\tau$).
If the two surfaces share the same analytic expression except for some parameters, it is possible that just comparing these parameters is sufficient to decide the relative hardness of these MDPs.
These parameters may even give a clue about some essential properties of these MDPs, leading to further discovery and deeper understanding of the tasks.

In conclusion, the success probability of exploration is helpful for describing and discerning the hardness of exploration over different MDPs, and thus provide possible answers the third group of questions related to exploration.

%
%
%

\section{The Chain Perspective of MDP}
\label{SectionChainPerspective}
Last section has shown that the success probability of exploration does provide useful answers to all three groups of questions of exploration, and obtaining the closed-form expression of the success probability is the key.
Working out the closed-form expression itself, on the other hand, is not a trivial task since it involves complicated interactions between MDPs and the learning algorithms.
To avoid falling into the trap of ad hoc analysis where one has to analyse every MDP from scratch, we introduce the \textit{chain perspective of MDP}, which helps form a generalizable foundation for the later discussion.

In short, the chain perspective of MDP is to abstract a more complicated MDP as a chain MDP, which is composed of several key elements as follows:

\begin{description}
	\item [Start state] There is one unique start state $s_1$ in a chain, and the learning agent always begins the interaction with the environment from this state.
	
	\item [Goal state \& Goal reward] There is one unique goal state $s_n$ at the end of the chain, where $n$ is the length of the chain, or, the number of chain states. 
	There is a positive goal reward $r_G$ for taking the right action at the goal state.
	It is set large enough such that the optimal policy should always tries to obtain it as frequently as possible.
	
	\item [Other chain states] The main body of a chain MDP is the states $s_2, ..., s_{n-1}$ connecting the start state $s_1$ and the goal state $s_n$.
	
	
	\item [Forward actions] At each chain state $s_i$ except for the goal state, there is at least one forward action by which the agent is most likely to travel to the next state $s_{i+1}$ and thus getting \textit{closer to} the goal state.
	
	\item[Goal action] This is the only action that collects the goal reward at the goal state $s_n$.
	
	\item [Backward actions \& Distracting rewards] At each state $s_i$, there is one or more backward actions by which the agent is more likely to travel back to state $s_1, ..., s_{i-1}$, getting \textit{away from} the goal state.
	A learning agent might be confused by these possible actions because they often yield or lead to some distracting rewards $r_D$, which might seem to be more appealing than the goal reward for the agent under uncertainty.
\end{description}

\begin{figure}
	\centering
	\begin{tikzpicture}[
		auto, ->, >=latex,
		thick, 
		font=\scriptsize,
		main node/.style={circle,draw,font=\small},
		special node/.style={circle,draw,font=\small, line width=2pt},
		b reset/.style={densely dashed, bend left = 33, max distance = 40},
		b start/.style={densely dashed, in = 160, out = 205, looseness = 8, left = 1},
		a fail/.style={in = 105, out = 60, looseness = 8, above = 1},
		a goal/.style={in = -20, out = 25, looseness = 8, right = 1},
	]

		\foreach [evaluate=\x as \pos using 2.5*\x-1] \x in {1, 5}
		{
			\node[special node] (\x) at (\pos,0) {$\x$};
		}
		\foreach [evaluate=\x as \pos using 2.5*\x-1] \x in {2,...,4}
		{
			\node[main node] (\x) at (\pos,0) {$\x$};
		}

		\foreach \x/\y in {1/2,2/3,3/4,4/5}
		{
			\path (\x) 	edge [bend left = 20, looseness = 0.8] node {$a^+, 0.8, 0$} (\y)
						edge [a fail] node {$a^+, 0.2, 0$} (\x);
		}
		\foreach  [evaluate=\x as \bent using 0.7-0.07*\x] \x in {2,...,4}
		{
			\path (\x)  edge [b reset, looseness = \bent] node {} (1);
		}
		\path (5) edge [b reset] node {$a^-, 1, 0$} (1);
		\path (1) edge [b start] node {$a^-, 1, +0.1$} (1);
		\path (5) edge [a goal] node {$a^+, 1, +1$} (5);
		
	\end{tikzpicture}
	\caption{A typical chain MDP used as benchmark in the literature.}
	\label{figChainExample}
\end{figure}
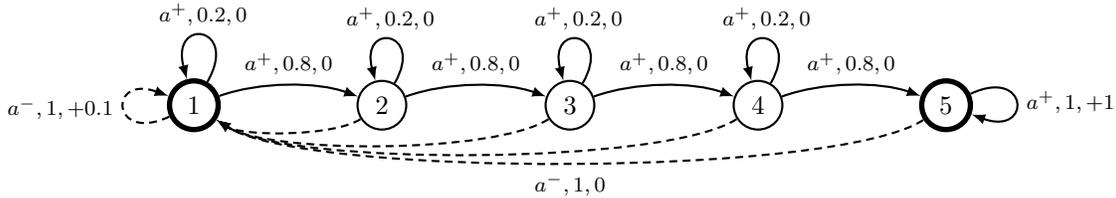

A typical chain MDP that can be easily found as a benchmark problem in various literature (e.g. \cite{dearden1998q,strehl2004empirical,kolter2009near}) is shown in Figure \ref{figChainExample}. 
There are five chain states in this example, and the start state $s_1$ and the goal state $s_5$ are highlighted using bold circles.
The triple $(a,p,r)$ written next to a transition indicates that this transition occurs with probability $p$ under action $a$ and yields reward $r$.
For clarity and convenience, all forward actions as well as the goal action are written as $a^+$ and drawn in solid arrows, while all backward ones are written as $a^-$ and drawn in dashed arrows.

As can be seen from the figure, forward actions at state $s_1$ to $s_4$ have a $p=0.8$ chance of sending the agent to the next state, and a $(1-p)=0.2$ chance of remaining at the current state.
In contrast, all backward actions transit the agent back to the start state $s_1$ with probability 1.
The only non-zero distracting reward $r_{1,a^-} = 0.1$ in this example is produced by taking backward action $a^-$ at the start state.
Although this reward is small compared to the goal reward $r_{5,a^+} = 1$, it is still distracting for those agents with pessimistic estimates to the probability $p_i$'s of moving closer to the goal by taking forward actions.
In addition, if an agent has never collected the goal reward before, then staying at the start state and taking action $a^-$ may appear to be optimal, in spite that the real optimal policy is to take action $a^+$ at every state $s\in S$, given that the discounting factor $\gamma$ is sufficiently large.

Chain MDPs have played a key role in analysing the efficiency of reinforcement learning algorithms due to the simplicity of its structure.
In \cite{whitehead1991q}, it has been proved that in a homogeneous problem solving task, the expected number of observations required by a Q-Learning agent with an $\varepsilon$-greedy exploration strategy to find an optimal policy is exponential to the number of steps required by the optimal policy to arrive at the goal state.
Another expression of this theorem, proposed by \cite{li2012sample}, is as follows: in a chain MDP, a Q-Learning agent with an $\varepsilon$-greedy exploration strategy, starting from the start state, needs observations exponential to the length of the chain in order to reach the goal state. 
This theorem indicates the inefficiency of $\varepsilon$-greedy exploration strategy, and inspired a deeper investigation to the exploration strategies thereafter.

While the simplicity of chain MDPs serves well for the purpose of theoretical analysis, they also contain most of the fundamental elements necessary to describe complex real-world tasks.
Specifically, there is always some kind of objective for the agent to achieve in a task, which can be represented by a goal state.
In order to achieve the objective, the agent has to select the right actions throughout a series of states, without wrongly taking hazardous actions or being distracted by irrelevant rewards.

\begin{figure}
	\centering
	\begin{subfigure}[b]{1\textwidth}
		\centering
		\includegraphics[scale=0.6]{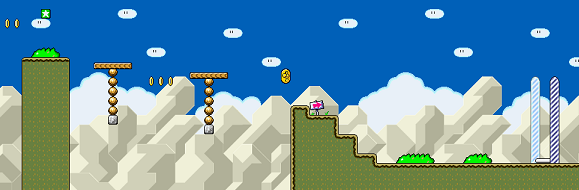}
		\caption{}
	\end{subfigure}
	\\
	\begin{subfigure}[b]{1\textwidth}
		\centering
		\begin{tikzpicture}[
			auto, ->, >=latex,
			thick, 
			font=\scriptsize,
			main node/.style={circle,draw,font=\small},
			special node/.style={circle,draw,font=\small, line width=2pt},
			b reset/.style={densely dashed, bend left = 33, max distance = 40},
			b start/.style={densely dashed, in = 160, out = 205, looseness = 8, left = 1},
			a fail/.style={in = 100, out = 65, looseness = 8, above = 1},
			a goal/.style={in = -20, out = 25, looseness = 8, right = 1},
			a goal new/.style={bend left = 80, looseness = 0.9,  max distance = 26},
		]
	
			\foreach [evaluate=\x as \pos using 1.8*\x-1] \x in {1, 7}
			{
				\node[special node] (\x) at (\pos,0) {$\x$};
			}
			\foreach [evaluate=\x as \pos using 1.8*\x-1] \x in {2,...,6}
			{
				\node[main node] (\x) at (\pos,0) {$\x$};
			}
	
			\foreach \x/\y in {1/2,2/3,3/4,4/5,5/6,6/7}
			{
				\path (\x) 	edge [bend left = 20, looseness = 0.8] node {$a^+$} (\y);
			}
			\foreach  [evaluate=\x as \bent using 0.7-0.07*\x] \x in {2,...,3}
			{
				\path (\x)  edge [b reset, looseness = \bent] node {} (1);
			}
			\path (4)  edge [b reset, looseness = 0.4] node {$a^-$} (1);
			\foreach  [evaluate=\x as \bent using 0.7-0.07*\x] \x/\y in {5/4, 6/5, 7/6}
			{
				\path (\x)  edge [densely dashed, bend left = 20, looseness = 0.8] node {$a^-$} (\y);
			}
			\path (1) edge [b start] node {$a^-$} (1);
			\path (7) edge [a goal, transparent] node {$aa$} (7);
			\path (7) edge [a goal new] node {$a^+, r_G$} (1);
			
		\end{tikzpicture}
		\caption{}
	\end{subfigure}
	\caption{(a) A part of the game stage in Super Mario World by Nintendo. 
	(Picture from http://www.vgthought.com/article/smw\_frames\_and\_themes)
	(b) Its chain MDP abstraction.}
	\label{figMario}
\end{figure}

An illustration of this is provided in Figure \ref{figMario}.
In the game stage shown in (a), Mario has to make some successful jumps from the leftmost platform to the rightmost one without falling into the pit, and then run through the goal sign.
One possible chain abstraction for this is (b), where the agent starts from the state numbered 1 corresponding to the leftmost platform, travels through the chain states 2 to 6 by forward actions, eventually reaches the state 7 indicating the goal sign, and takes the goal action to collect the goal reward.

Although much details of the original game stage are lost in the chain abstraction, it nevertheless captures the main dynamic properties of the original stage.
In particular, the hardness of exploring the game stage is properly reflected in its chain abstraction.
Taking wrong actions in the first half of the stage (the short platforms and the cliff) will cause Mario to fall into the pit and die, which means Mario has to re-challenge the game stage from the start point.
This hazardousness is appropriately expressed in state $s_1$ to $s_4$ since the backward actions at these states send the agent back to $s_1$.
The second half of the stage, on the other hand, are less difficult because taking wrong actions only bring Mario back to the previous positions without killing him.
Exploring $s_5$ to $s_7$ in the chain abstraction is correspondingly easier than $s_1$ to $s_4$. 
Finally, arriving the goal point indicates a ``level complete'', resulting in Mario being sent to the next stage.
If the player wants to continue exploring the current stage, Mario has to restart from the start state.
This is correctly represented in the chain abstraction as well.

The most important aspect of the chain abstraction is, if the effect of generalization techniques applied coincide the chain abstraction, i.e. the underlying policy sees the environment exactly as the reduced chain, then the success probability of the learning algorithm executed on the original environment is equivalent to that of an algorithm without generalization executed on the chain MDP.
In the Mario example above, this can be done by create seven grids on the horizontal axis accordingly, and represent the state by the number of the grid where Mario is currently in.

In this way, the effect of generalization is completely encapsulated into the representation of the environment.
Not only this removes the necessity of simultaneously considering generalization and exploration in our analyses, it also provides useful hints regarding which kind of generalization can be more efficient.
Specifically, if some generalization technique reduce the original learning task on MDP $M$ to a chain MDP $M'$, and learning on $M'$ has a sufficiently high success probability within reasonable resource budget, then that generalization technique is very likely to be an efficient one.


\section{Solving the Success Probability in Chain MDPs}
\label{SectionSolvingSPE}
In this section, we elaborate our approach to the closed-form expression of success probability in chain MDPs.

\subsection{The Dependency Graph of Relevant Variables}
Considering that the success probability involves almost everything in reinforcement learning, it is crucial to figure out the interrelationship of all relevant variables as a preliminary step.
The resulting \textit{dependency graph} is shown in Figure \ref{figDependency}.
If there is a directed edge from node $X$ to $Y$, then it means that the variable $X$ is dependent to variable $Y$.
If some variable is written in angle brackets, then it represents a tuple of all variables of the same kind.
For example, $\langle E^\pi \rangle$ denotes the tuple of $E^\pi$ of all possible policies $\pi \in \varPi$.

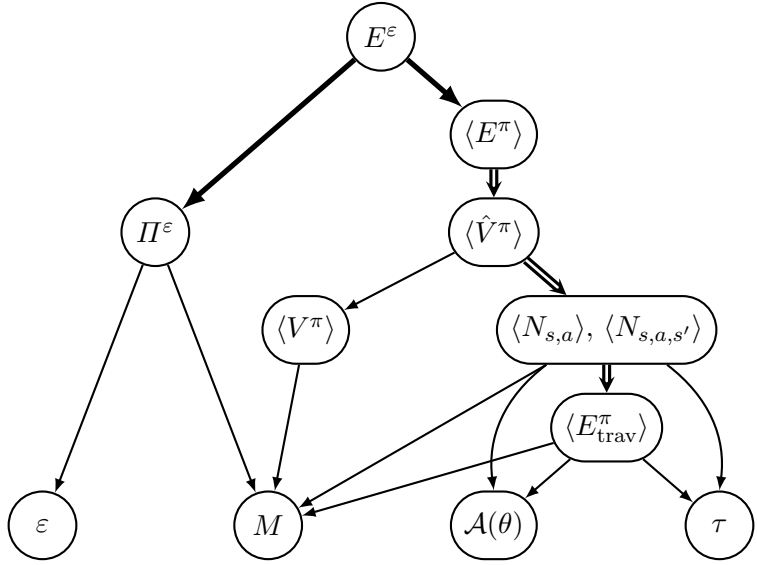
\begin{figure}
	\centering
	\begin{tikzpicture}[
		auto, <-, >=latex,
		thick, 
		font=\normalsize,
		main node/.style={draw, rounded rectangle, minimum size=9mm},
		1st/.style={line width=2pt},
		2nd/.style={very thick, double distance=1pt, >=stealth},
	]

		\node[main node] (eps) at (0,0) {$\varepsilon$};
		\node[main node] (M) at (3,0) {$M$};
		\node[main node] (algo) at (6,0) {$\eprs(\theta)$};
		\node[main node] (tau) at (9,0) {$\tau$};
		
		\node[main node] (Etrav) at (7.5,1.3) {$\langle \Etrav^\pi \rangle$};
		
		\node[main node] (Nsa) at (7.5,2.6) {$\allNsa$, $\allNsas$};
		\node[main node] (Vpi) at (3.5,2.6) {$\langle V^\pi \rangle$};
		
		\node[main node] (pie) at (1.5,3.9) {$\varPi^\varepsilon$};
		\node[main node] (hatVpi) at (6,3.9) {$\langle \hat{V}^\pi \rangle$};
		\node[main node] (Epi) at (6,5.2) {$\langle E^\pi \rangle$};
		
		\node[main node] (Eeps) at (4.5,6.5) {$E^\varepsilon$};
				
		\path (eps) edge [] node {} (pie);
		\path (M) edge [] node {} (pie);
		\path (M) edge [] node {} (Vpi);
		\path (M) edge [] node {} (Nsa);
		\path (algo) edge [bend left] node {} (Nsa);
		\path (algo) edge [] node {} (Etrav);
		\path (tau) edge [] node {} (Etrav);
		\path (M) edge [] node {} (Etrav);
		\path (tau) edge [bend right] node {} (Nsa);
		\path (Etrav) edge [2nd] node {} (Nsa);
		\path (Nsa) edge [2nd] node {} (hatVpi);
		\path (Vpi) edge [] node {} (hatVpi);
		\path (hatVpi) edge [2nd] node {} (Epi);
		\path (Epi) edge [1st] node {} (Eeps);
		\path (pie) edge [1st] node {} (Eeps);
		
	\end{tikzpicture}
	\caption{Dependency graph of $\varepsilon$-success event.}
	\label{figDependency}
\end{figure}

The thick arrows from $\varepsilon$-success event $E^\varepsilon$ to the set of $\varepsilon$-optimal policies $\varPi^\varepsilon$ and the tuple of all $\pi$-success events $\langle E^\pi \rangle$ correspond to Lemma \ref{lemL1Decomp}, the first-level decomposition of $\varepsilon$-success probability, in which the probability of $E^\varepsilon$ equals to the sum of probabilities of $E^\pi$ over all $\pi \in \varPi^\varepsilon$.

The double arrows connecting $\langle E^\pi \rangle$, estimated values $\langle \hat{V}^\pi \rangle$, visit numbers $\allNsa$ and $\allNsas$, and traverse events $\langle \Etrav^\pi \rangle$ correspond to Lemma \ref{lemL2Decomp}, the second-level decomposition.
The occurrence of traverse events will be considered first.
Only when it occurs, all relevant visit numbers will be non-zero, so that they can be used to estimate the state values.
The estimation process (or the planning process) also involves the expressions of state values $\langle V^\pi \rangle$, which are usually derived implicitly from Bellman Equations.
The estimated state values are thereafter compared with each other, deciding which $\pi$-success event occurs.

The detailed analysis of $\psucc_{M_C, \ops(m), \tau}$ will be carried out in a bottom-up manner according to the dependency graph in the following sections.
We start from the in-depth discussions of chain MDPs and exploration strategies respectively in Sections \ref{secProtoChains} and \ref{secProtoStrategy}.
They are followed by $\langle \Etrav^\pi \rangle$ in Section \ref{secClosedTrav}, then $\allNsa$ in Section \ref{secClosedVisitNum}.
Then it comes to be the actual and estimated value functions $\allsth{V^\pi}$ and $\allsth{\hat{V}^\pi}$ together in Section \ref{secClosedValueFunc}.
Finally, the probability of $\pi$-success and $\varepsilon$-success events will be assembled in Section \ref{secClosedSPE}.

\subsection{The Prototypes of Chain MDPs}
\label{secProtoChains}
As hinted by the dependency graph, there are four critical factors involved in the success probability of exploration: degree of near-optimality $\varepsilon$, MDP $M$, exploration strategy $\eprs$ and its parameter setting $\theta$, and the total number of steps (or observations) $\tau$.
It is unrealistic to express the very nature of the MDP (even for chain MDPs) and the exploration strategy in only a limited number of numerical variables.
Therefore, it is not likely that there exists some uniform expression suitable for all possible MDPs and exploration strategies.
Rather, we focus the discussion to some \textit{prototypes} of chain MDPs and exploration strategies, so that the result derived in this paper can be extended with additional effort as less as possible.

This subsection focuses on the four prototypes of chain MDPs.
These prototypes should capture the main characteristics that has critical impact on the difficulty of exploration.
In general, there are two characteristics of interest in chain MDPs: hazardousness of backward actions, and productivity of goal action.

The \textit{hazardousness of backward actions} refers to how bad the situation will be when a backward action is taken.
The degree of hazardousness, denoted by $H$, can be reflected through the number of chain states the agent is thrown back when taking backward actions.
In the least hazardous case, the agent is transited one state back, meaning that it will be sent to $s_{i-1}$ from $s_i$ by taking a backward action.
In the most hazardous case, on the other hand, the agent is transited to the start state $s_1$ regardless how far it is from the current state.
We write the former case as $H=1$, while the latter as $H=\infty$.
Other cases lie between these two extreme point, taking other positive integers as the degree of hazardousness.
A typical chain MDP is a mix of these cases, with each chain state $s_i$ having a hazardousness level $H_i$.
It is of particular interest to investigate the two extreme scenarios for chain MDPs, namely the ones with all states having $H_i=1$, and the ones with all states having $H_i = \infty$.

The \textit{productivity of goal action}, denoted by $G$, is the second key characteristic, which decides the value of goal as a function of goal reward $r_G$.
In the most fruitful case, taking the goal action at the goal state not only yields the goal reward $r_G$, but also transits the agent to exactly the same goal state $s_n$. 
This allows the agent to receive the goal reward every time step thereafter by simply repeating the goal action.
The other extreme case is that the goal action send the agent back to the start state $s_1$ while providing the goal reward.
In this case, the agent has to travel again through all chain states in order to obtain the goal reward once more, resulting in the least productivity.
We write the former case as $G=1$, while the latter as $G=\frac 1 \infty$.
Just as the hazardousness, the productivity of goal action for typical chain MDPs lie between these two extreme cases, and therefore it is suitable to regard the extreme cases as the prototype.

\begin{figure}
	\centering
	\begin{subfigure}[b]{1\textwidth}
		\centering
		\begin{tikzpicture}[
			auto, ->, >=latex,
			thick, 
			font=\scriptsize,
			main node/.style={circle,draw,font=\small},
			special node/.style={circle,draw,font=\small, line width=2pt},
			b reset/.style={densely dashed, bend left = 33, max distance = 40},
			b start/.style={densely dashed, in = 160, out = 205, looseness = 8, left = 1},
			a fail/.style={in = 100, out = 70, looseness = 8, above = 1},
			a goal/.style={in = -20, out = 25, looseness = 8, right = 1},
			hidden node/.style={draw=none,font=\huge, line width=2pt},
		]
	
			\node[special node] (1) at (1.5,0) {$1$};
			\node[special node] (5) at (11.5,0) {$n$};
			\foreach [evaluate=\x as \pos using 2.5*\x-1] \x in {2,...,3}
			{
				\node[main node] (\x) at (\pos,0) {$\x$};
			}
			\node[hidden node] (4) at (9,0) {$...$};
	
			\foreach \x/\y in {1/2,2/3,3/4}
			{
				\path (\x) 	edge [bend left = 20, looseness = 0.8] node {$p_\x$} (\y);
			}
			\path (4) 	edge [bend left = 20, looseness = 0.8] node {$p_{n-1}$} (5);
			\foreach \x in {1,...,3}
						{
				\path (\x) 	edge [a fail] node {$1- p_\x$} (\x);
			}
			
			\foreach \x/\y in {1/2,2/3,3/4,4/5}
			{
				\path (\y) 	edge [densely dashed, bend left = 20, looseness = 0.8] node {$1$} (\x);
			}
			\path (1) edge [b start] node {$1, r_D$} (1);
			\path (5) edge [a goal] node {$1, r_G$} (5);
			
		\end{tikzpicture}
		\caption{$H = 1, G = 1.$}
	\end{subfigure}
	\\
	\begin{subfigure}[b]{1\textwidth}
		\centering
		\begin{tikzpicture}[
			auto, ->, >=latex,
			thick, 
			font=\scriptsize,
			main node/.style={circle,draw,font=\small},
			special node/.style={circle,draw,font=\small, line width=2pt},
			b reset/.style={densely dashed, bend left = 33, max distance = 40},
			b start/.style={densely dashed, in = 160, out = 205, looseness = 8, left = 1},
			a fail/.style={in = 100, out = 70, looseness = 8, above = 1},
			a goal/.style={in = -20, out = 25, looseness = 8, right = 1},
			hidden node/.style={draw=none,font=\huge, line width=2pt},
		]
	
			\node[special node] (1) at (1.5,0) {$1$};
			\node[special node] (5) at (11.5,0) {$n$};
			\foreach [evaluate=\x as \pos using 2.5*\x-1] \x in {2,...,3}
			{
				\node[main node] (\x) at (\pos,0) {$\x$};
			}
			\node[hidden node] (4) at (9,0) {$...$};
	
			\foreach \x/\y in {1/2,2/3,3/4}
			{
				\path (\x) 	edge [bend left = 20, looseness = 0.8] node {$p_\x$} (\y);
			}
			\path (4) 	edge [bend left = 20, looseness = 0.8] node {$p_{n-1}$} (5);
			\foreach \x in {1,...,3}
						{
				\path (\x) 	edge [a fail] node {$1- p_\x$} (\x);
			}
			
			\foreach  [evaluate=\x as \bent using 0.7-0.07*\x] \x in {2,...,4}
			{
				\path (\x)  edge [b reset, looseness = \bent] node {} (1);
			}
			\path (5) edge [b reset] node {$1$} (1);
			\path (1) edge [b start] node {$1, r_D$} (1);
			\path (5) edge [a goal] node {$1, r_G$} (5);
			
		\end{tikzpicture}
		\caption{$H = \infty, G = 1.$}
	\end{subfigure}
	\\
	\begin{subfigure}[b]{1\textwidth}
		\centering
		\begin{tikzpicture}[
			auto, ->, >=latex,
			thick, 
			font=\scriptsize,
			main node/.style={circle,draw,font=\small},
			special node/.style={circle,draw,font=\small, line width=2pt},
			b reset/.style={densely dashed, bend left = 33, max distance = 40},
			b start/.style={densely dashed, in = 160, out = 205, looseness = 8, left = 1},
			a fail/.style={in = 100, out = 70, looseness = 8, above = 1},
			a goal/.style={in = -20, out = 25, looseness = 8, right = 1},
			a goal new/.style={in = 310, out = 250, looseness = 8,  max distance = 20},
			hidden node/.style={draw=none,font=\huge, line width=2pt},
		]
	
			\node[special node] (1) at (1.5,0) {$1$};
			\node[special node] (5) at (11.5,0) {$n$};
			\foreach [evaluate=\x as \pos using 2.5*\x-1] \x in {2,...,3}
			{
				\node[main node] (\x) at (\pos,0) {$\x$};
			}
			\node[hidden node] (4) at (9,0) {$...$};
	
			\foreach \x/\y in {1/2,2/3,3/4}
			{
				\path (\x) 	edge [bend left = 20, looseness = 0.8] node {$p_\x$} (\y);
			}
			\path (4) 	edge [bend left = 20, looseness = 0.8] node {$p_{n-1}$} (5);
			\foreach \x in {1,...,3}
						{
				\path (\x) 	edge [a fail] node {$1- p_\x$} (\x);
			}
			
			\foreach \x/\y in {1/2,2/3,3/4,4/5}
			{
				\path (\y) 	edge [densely dashed, bend left = 15, looseness = 0.6] node {$1$} (\x);
			}
			\path (1) edge [b start] node {$1, r_D$} (1);
			\path (5) edge [a goal, transparent] node {$1, r_G$} (5);
			\path (5) edge [a goal new] node {$1, r_G$} (1);
			
		\end{tikzpicture}
		\caption{$H = 1, G = \frac 1 \infty.$}
	\end{subfigure}
	\\
	\begin{subfigure}[b]{1\textwidth}
		\centering
		\begin{tikzpicture}[
			auto, ->, >=latex,
			thick, 
			font=\scriptsize,
			main node/.style={circle,draw,font=\small},
			special node/.style={circle,draw,font=\small, line width=2pt},
			b reset/.style={densely dashed, bend left = 33, max distance = 40},
			b start/.style={densely dashed, in = 160, out = 205, looseness = 8, left = 1},
			a fail/.style={in = 100, out = 70, looseness = 8, above = 1},
			a goal/.style={in = -20, out = 25, looseness = 8, right = 1},
			a goal new/.style={bend left = 80, looseness = 0.9,  max distance = 36},
			hidden node/.style={draw=none,font=\huge, line width=2pt},
		]
	
			\node[special node] (1) at (1.5,0) {$1$};
			\node[special node] (5) at (11.5,0) {$n$};
			\foreach [evaluate=\x as \pos using 2.5*\x-1] \x in {2,...,3}
			{
				\node[main node] (\x) at (\pos,0) {$\x$};
			}
			\node[hidden node] (4) at (9,0) {$...$};
	
			\foreach \x/\y in {1/2,2/3,3/4}
			{
				\path (\x) 	edge [bend left = 20, looseness = 0.8] node {$p_\x$} (\y);
			}
			\path (4) 	edge [bend left = 20, looseness = 0.8] node {$p_{n-1}$} (5);
			\foreach \x in {1,...,3}
						{
				\path (\x) 	edge [a fail] node {$1- p_\x$} (\x);
			}
			
			\foreach  [evaluate=\x as \bent using 0.7-0.07*\x] \x in {2,...,4}
			{
				\path (\x)  edge [b reset, looseness = \bent] node {} (1);
			}
			\path (5) edge [b reset] node {$1$} (1);
			\path (1) edge [b start] node {$1, r_D$} (1);
			\path (5) edge [a goal, transparent] node {$1, r_G$} (5);
			\path (5) edge [a goal new] node {$1, r_G$} (1);
			
		\end{tikzpicture}
		\caption{$H = \infty, G = \frac 1 \infty.$}
	\end{subfigure}
	\caption{Four prototype chains with $\text{length} = n$.}
	\label{figProtoChains}
\end{figure}
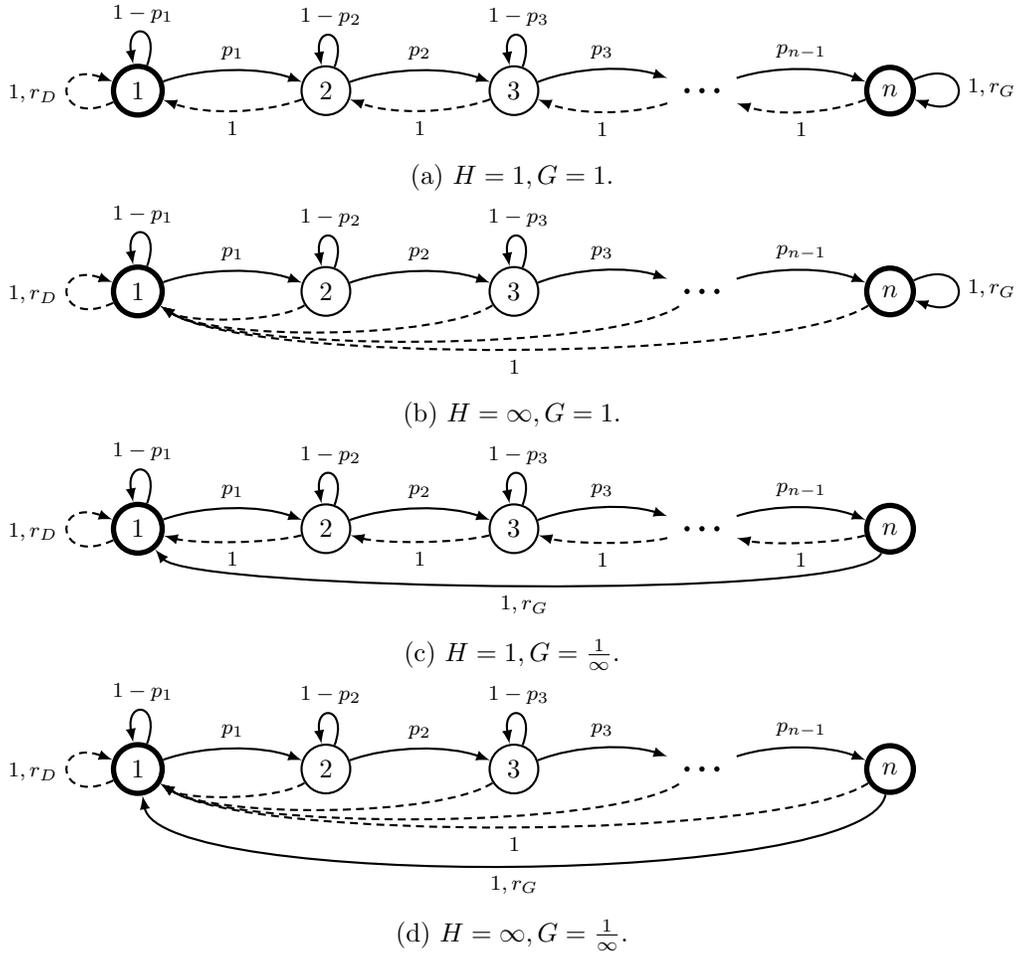

Combining the two characteristics, each with two extreme cases, four prototype chains are formed, as illustrated in Figure \ref{figProtoChains}.
The corresponding hazardousness $H$ and goal productivity $G$ of prototype chains are labelled in their captions.
For clarity, zero-value rewards as well as the action marks $a^+$ and $a^-$ are omitted in this figure. 

Actions are still distinguishable by their line pattern.
As in previous sections, forward and goal actions are drawn in solid lines, while backward actions are drawn in dashed lines.
By taking a forward action at state $s_i$ ($i<n$), the agent has probability $p_i$ to be transited to $s_{i+1}$, and $1-p_i$ to remain at $s_i$.
The transition probabilities $p_1, ..., p_{n-1}$ are not required to be equal in the same chain, but they must be non-zero, otherwise the goal state is unreachable.

The goal reward $r_G$ is assumed to be set large enough than the distracting reward $r_D$ such that the optimal policy in these prototype chains is to take forward action $a^+$ at any state.
The exact constraint that should be met for $r_G$ and $r_D$ will be left until Section \ref{secClosedValueFunc}.

For convenience, these prototype chains will be referred collectively as $M_C$, and respectively as $M_C(H=1,G=1)$, $M_C(H=\infty,G=1)$, $M_C(H=1,G=\frac 1 \infty)$, and $M_C(H=\infty,G=\frac 1 \infty)$ in the rest of this paper.
Also, we write $M_C(H=1)$ to refer to both $M_C(H=1,G=1)$ and $M_C(H=1,G=\frac 1 \infty)$, and likewise for $M_C(H=\infty)$, $M_C(G=1)$, and $M_C(G=\frac 1 \infty)$.
Although the length of the chain $n$ and forward probabilities $\allsth {p_i}$ are equally important factors, we will not explicitly state them here in order to keep notation short and clear.

\subsection{The Prototype Exploration Strategy}
\label{secProtoStrategy}
The second issue that should be addressed before further investigation is the prototype of exploration strategies.
It is difficult to find a representative strategy that is able to reflect the fundamental properties of both vanilla strategies such as $\varepsilon$-greedy and Boltzmann selection rule \citep{SuttonBarto98}, and the more advanced strategies.
Fortunately, there does exist a common principle shared by a large number of non-Bayesian advanced exploration strategies.
It is often known as ``optimism in the face of uncertainty'' or ``the optimism principle'' \cite{kaelbling1996reinforcement}, which has been already mentioned in the discussion of the second-level decomposition (Lemma \ref{lemL2Decomp}).

The main idea of the optimism principle is that the agent should assume any state-action pair to be highly rewarding ones as long as there is no sufficient evidence against this optimistic assumption for that state-action pair.
This principle forces the agent to try every action at every state for several times, ensuring a base amount of observations for all visited state-action pairs.
The optimism principle is widely adopted among PAC-MDP strategies \citep{kakade2003sample, szita2010mormax, lattimore2014near, zhang2015increasingly} as well as the strategies with regret bound guarantees \citep{jaksch2010near}.
It is also applied in some Monte-Carlo tree search algorithms such as UCT \citep{kocsis2006bandit}.

As discussed in Section \ref{secBasicProperties}, it is less likely for a primitive exploration strategy without optimism, for example $\varepsilon$-greedy, to complete any of the traverse events.
As a result, without the help of some additional techniques that could efficiently exploit the prior knowledge, the success probability of exploration for these strategies can be rather low, sometimes even close to zero.
There also exists a different family of non-optimistic exploration strategies, namely the Bayesian strategies \citep{vlassis2012bayesian}.
However, these works are based on a different optimality criteria called Bayesian optimality, which is not equivalent to the traditional criteria used in this paper.
Therefore, within the scope of this paper, it is sufficient to base our prototype strategy on the optimism principle.

Putting all consideration together, we have the following prototype strategy.
\begin{definition}[Optimistic Prototype Strategy]
	\label{defProtoStrategy} 
	The Optimistic Prototype Strategy, denoted by $\ops(m)$, is an exploration strategy that can be described as follows.
	\begin{itemize}[nosep]
		\item There is a universal parameter, denoted by $m$, which is a non-negative integer. 
		
		\item At any state $s \in S$, if there exists some action $a \in A$ that its number of observations $N_{s,a} < m$, then state $s$ is said to be \textit{under-explored}, and the agent takes action $a$ to collect one more observation.
		
		\item If there is more than one such action, then the agent randomly and uniformly choose one of them.
		
		\item If there is no such action at all, then state $s$ is said to be \textit{fully explored}, and the agent moves to the under-explored state with least expected step required from $s$.
		
		\item If all visited states have become fully explored, then the agent stops optimistic exploration and outputs a policy according to Assumption \ref{assPlan}. The agent follows the output policy thereafter, if the learning process is not stopped.
	\end{itemize}
\end{definition}

This prototype strategy is based on the famous baseline PAC-MDP strategy R-MAX \citep{brafman2002r, kakade2003sample}.
Actually, there is no behavioural difference between R-MAX and the Optimistic Prototype Strategy when they are executed on finite chain MDPs, given that the discount factor $\gamma$ is sufficiently large.
Other advanced optimistic strategies, no matter PAC-MDP or not, can be seen as an extension of this prototype, where the universal parameter $m$ is changed into a function $m_\psi(s,a)$ such that its value varies for different state-action pairs, and may even changes during learning.
Therefore, we will focus on analysing the exploration behaviour of this prototype strategy in the rest of this paper, and make our analysis as flexible and compatible as possible for other optimistic strategies.

\subsection{The Traverse Events}
\label{secClosedTrav}
By Definition \ref{defTrav}, a $\pi$-traverse event is said to have occurred if and only if for all $s \in S$, state-action pair $(s, \pi(s))$ has been tried at least once.
Although this seems to be dependent to the visit numbers, rather than the converse as in Figure \ref{figDependency}, the traverse events are placed deliberately at a level lower than the visit numbers. 
The reason for this is, the traverse events only care about a small fraction of state-action pairs, and thus is easier to decide than the visit number of all state-action pairs.

In the scenario where optimistic prototype strategy $\ops(m)$ is applied to the agent in prototype chain $M_C$'s, the following lemma can be figured out rather effortlessly:
\begin{lemma}
	For any policy $\pi \in \varPi$, a traverse event $\Etrav^\pi$ occurs if and only if the agent successfully arrives at the goal state $s_n$.
\end{lemma}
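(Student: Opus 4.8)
The plan is to read both events off the visit numbers $\langle N_{s,a}\rangle$ accumulated by $\ops(m)$ and to prove the two implications separately. The direction $\Etrav^\pi \Rightarrow$ ``reaches $s_n$'' is immediate and uses neither the chain topology nor the strategy: by Definition \ref{defTrav}, $\Etrav^\pi$ forces $N_{s,\pi(s)} \geq 1$ for every $s \in S$, in particular $N_{s_n,\pi(s_n)} \geq 1$, and a count for the pair $(s_n,\pi(s_n))$ can only be incremented while the agent physically occupies $s_n$; hence the agent must have arrived at the goal state.

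For the converse I would first record two structural facts and then combine them. (i) In each prototype chain of Figure \ref{figProtoChains} the only index-increasing transitions are the forward actions $s_i \to s_{i+1}$, which advance the index by exactly one, whereas every backward or goal action either holds the index or sends it back toward $s_1$. Thus the set of states ever visited is always a prefix $\{s_1,\ldots,s_k\}$, and reaching $s_n$ is equivalent to $k=n$, i.e. to having occupied every state $s_1,\ldots,s_n$. (ii) By the terminating rule of Definition \ref{defProtoStrategy}, optimistic exploration does not halt until every visited state is fully explored, i.e. satisfies $N_{s,a}\geq m$ for all $a\in A$. Combining (i) and (ii): if the agent reaches $s_n$ then $k=n$, so at completion of the exploration phase each of $s_1,\ldots,s_n$ is fully explored and obeys $N_{s_i,a}\geq m\geq 1$ for every action $a$ (this uses the standard case $m\geq 1$; the degenerate $m=0$ performs no exploration and is set aside). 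Taking $a=\pi(s_i)$ gives $N_{s_i,\pi(s_i)}\geq 1$ for all $i$, which is exactly $\Etrav^\pi$.

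The step I would be most careful about is that the equivalence must be read at the completion of optimistic exploration, not at an arbitrary prefix of the trajectory $\psi_\tau$: the agent can momentarily sit at $s_n$ while some not-yet-sampled backward action $\pi(s_i)$ still keeps $\Etrav^\pi$ from having occurred, as a short two-state example shows. The prefix description neutralises this obstacle, since on the visited prefix $\{s_1,\ldots,s_k\}$ every action has already been sampled $m$ times while the complementary suffix has all visit numbers zero; the traverse condition over all of $S$ therefore holds exactly when $k=n$, i.e. exactly when $s_n$ was reached. Committing to this prefix formulation from the outset is what makes both directions fall out together and keeps the proof as effortless as the surrounding text promises.
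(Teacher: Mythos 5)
Your proof is correct and, in substance, it is the same argument the paper has in mind --- the paper's own proof is just the one-line remark that the claim is ``obvious by Figure \ref{figProtoChains}'' because the agent starts from $s_1$, so you have essentially written out the details the authors left implicit: forward transitions advance the index by exactly one, hence the visited set is a prefix and reaching $s_n$ forces every state to be visited; the terminating rule of Definition \ref{defProtoStrategy} then pushes every $N_{s,a}$ up to $m \geq 1$ on the visited prefix, which yields $N_{s,\pi(s)} \geq 1$ for all $s$ and any $\pi$. The one place where you go beyond the paper is your caveat that the biconditional must be read at the ending of exploration $\tau_m$ rather than at an arbitrary prefix of the trajectory, and this is a genuine observation: at the first arrival at $s_n$ the agent may never have sampled some backward action $\pi(s_i)$ (under $\ops(m)$ it can run forward on a lucky streak), so $\Etrav^\pi$ need not yet hold, and the ``if'' direction is literally false mid-run. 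The paper never states this qualification but silently adopts it, since the lemma is only ever invoked through $\pr(\Etrav^\pi \mid M_C, \ops(m), \tau_m)$ in Theorem \ref{theoTrav}; your prefix formulation makes the statement unambiguous and also cleanly isolates the degenerate $m=0$ case, which the paper does not mention. No gaps.
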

\begin{proof}
	Because the agent starts from $s_1$, this lemma is obvious by Figure \ref{figProtoChains}.
\end{proof}

Whether or not the agent could reach $s_n$ depends on $M_C$ and $\ops(m)$, and of course, the total number of steps $\tau$.
Actually, there is some $\tau_m$ that is of particular importance: the exact time step that the agent finishes its active exploration behaviour promoted by the optimism principle.
Its formal definition is as follows.

\begin{definition}
	The \textbf{ending of exploration} for $\ops(m)$, denoted by $\tau_m$, is the first time step that, after the transition takes place, $N_{s, a} \geq m$ holds for all $s \in S \setminus S_u$ and $a \in A$, where $S_u:=\{s|\forall a \in A, N_{s,a}=0\}$ is the set of unvisited states.
\end{definition}

After this time step, the agent stops optimistic exploration and begins exploitation according to Definition \ref{defProtoStrategy} of prototype strategy.
It is less likely that a traverse event not occurred so far happens after $\tau_m$.
Consequently, the probability of traverse events (as well as the success probabilities discussed in later sub-sections) can be directly inferred from the ones at $\tau_m$.
In the case of $\tau < \tau_m$, on the other hand, the probabilities can be interpolated through different setting of $m$.
Therefore, it is sufficient to consider the case of $\tau=\tau_m$.
By considering the ending of exploration, the number of steps $\tau$ are actually integrated into the parameter $m$, simplifying the following discussions.

Now we have the closed-form expression for the probability of $\pi$-traverse events as follows.

\begin{theorem}[Probability of Traverse Events]
	\label{theoTrav}
	For any $\pi \in \varPi$, it holds that 
	\begin{equation*}
		\pr(\Etrav^{\pi}|M_C,\ops(m),\tau_m) = \prod_{i=1}^{n-1}(1-(1-p_i)^m).
	\end{equation*}
\end{theorem}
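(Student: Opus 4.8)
The plan is to combine the preceding lemma---which, for any $\pi \in \varPi$, reduces $\Etrav^{\pi}$ to the single event that the agent reaches the goal state $s_n$---with a careful bookkeeping of the forward-action outcomes at each chain state. First I would introduce, for each $i \in \{1,\dots,n-1\}$, an i.i.d.\ sequence of Bernoulli($p_i$) random variables $X_{i,1},X_{i,2},\dots$ recording the outcomes of the successive forward actions taken at $s_i$, where a success means a transition to $s_{i+1}$. By the Markov property these sequences are mutually independent across $i$, since distinct state--action pairs have independent transition outcomes. I would then set $W_i := \{\text{at least one of } X_{i,1},\dots,X_{i,m} \text{ is a success}\}$, so that $\pr(W_i)=1-(1-p_i)^m$ and the events $W_1,\dots,W_{n-1}$ are independent.

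The core of the argument is the claim that the agent reaches $s_n$ if and only if every $W_i$ occurs, which I would prove by induction on the \emph{frontier} (the furthest state reached). The key observation is that, while $s_{i+1}$ remains unvisited, the agent can take a forward action at $s_i$ only as an exploration step: by the rules of $\ops(m)$, once $s_i$ is fully explored the agent merely navigates toward an \emph{under-explored} state, and the unvisited $s_{i+1}$---which the agent does not yet know to exist---is never such a target. Consequently the frontier advances from $s_i$ to $s_{i+1}$ iff at least one of the $m$ exploration forward actions at $s_i$ succeeds, that is, precisely on $W_i$; whereas if all $m$ fail then $s_i$ becomes fully explored with $s_{i+1}$ still unvisited, leaving the frontier stuck at $s_i$ forever. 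Conditioning on reaching $s_i$ (equivalently, by the inductive hypothesis, on $W_1\cap\dots\cap W_{i-1}$), the agent thus uses exactly its first $m$ forward outcomes $X_{i,1},\dots,X_{i,m}$ at $s_i$ by time $\tau_m$, and reaches $s_{i+1}$ iff $W_i$ holds.

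Granting the claim, the result follows immediately from independence:
\begin{equation*}
\pr(\Etrav^{\pi}|M_C,\ops(m),\tau_m) = \pr\Bigl(\bigcap_{i=1}^{n-1} W_i\Bigr) = \prod_{i=1}^{n-1}\pr(W_i) = \prod_{i=1}^{n-1}\bigl(1-(1-p_i)^m\bigr).
\end{equation*}

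I expect the main obstacle to lie in the inductive claim, specifically in pinning down that the forward trials governing frontier advancement are exactly the first $m$ (the exploration trials) and are never contaminated by navigation trials. This rests on the fact that navigation targets are restricted to \emph{visited} under-explored states, so that (i) an unvisited $s_{i+1}$ can only ever be reached by an exploration success at $s_i$, and (ii) the first $m$ forward outcomes at $s_i$ necessarily precede any later navigation forward trip through $s_i$, which can occur only after $s_i$ is fully explored and the frontier has already advanced beyond it. Handling the interleaving of exploration and navigation---and confirming that the agent does complete all $m$ exploration trials at each reached state by $\tau_m$---is the delicate part; once that is settled, the independence calculation is routine.
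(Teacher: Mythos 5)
Your proposal is correct and takes essentially the same route as the paper's proof: reduce the traverse event to reaching $s_n$ via the preceding lemma, observe that the frontier advances past $s_i$ precisely when one of the $m$ optimistic forward trials of $a^+$ at $s_i$ succeeds (if all $m$ fail, $s_{i+1}$ is never discovered and no further attempt toward the goal is ever made), and multiply the per-state probabilities $1-(1-p_i)^m$ by independence across states. Your write-up is in fact more careful than the paper's terse argument---notably your explicit i.i.d.\ coupling, the frontier induction, and the separation of the first $m$ exploration trials from later navigation trials, which the paper compresses into the remark that these $m$ tries are ``the only chances''---but the underlying mechanism is identical.
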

\begin{proof}
If the agent intends to reach $s_n$ eventually, it has to succeed moving ahead through forward actions $a^+$ at every chain state $s_1$, $s_2$, ..., $s_{n-1}$ at least once.
According to Definition \ref{defProtoStrategy}, the agent will try $a^+$ at least $m$ times at any chain state $s_i$.
However, these are also the only chances for the agent to try $a^+$.
The reason is, if all $m$ tries at $s_i$ fail, then the agent will never be informed of the existence of $s_{i+1}$, nor the existence of a goal reward.
In the following learning process, the agent will either explore the states $s_1$, ..., $s_{i-1}$, the backward action $a^-$ at $s_i$, or exploits the distraction reward $r_D$; no further attempt to reach goal will be carried out.
Therefore, the probability of a traverse event happens is the probability that the agent successively succeeds moving forward at least once at every chain state within $m$ attempts, hence the theorem.
\end{proof}

An interesting fact of this theorem is that it is dependent neither to $\pi$, nor to the hazardousness $H$ or productivity $G$ of prototype chain $M_C$.
Considering that $\tau_m$ has been integrated into $m$, it suffices to write the traverse probability simply as $\pr(\Etrav|m)$ in later discussions.

\subsection{The Visit Numbers}
\label{secClosedVisitNum}
In this subsection we investigate the visit numbers $\allNsa$ and $\allNsas$ given that the traverse event $\Etrav$ has happened.
All following discussions are based on the occurrence of the traverse event unless otherwise stated, and this precondition will not be repeated every time for the sake of briefness.

The direct impact of the occurrence of traverse event on the visit numbers is $N_{s_i, a^+, s_{i+1}} \geq 1$ for all $i<n$.
Consequently, the agent is informed of the existence of the all chain states, including the goal state $s_n$.
This guarantees that there actually exists an ending step of exploration $\tau_m$ within finite steps such that $N_{s,a} \geq m$ holds for all $s \in S$ and $a \in A$.
Then it leads to the estimated transition probability $\hat{P}(s'|s,a) = \frac{N_{s,a,s'}} {N_{s,a}} > 0$ for any $s,s' \in S$ and $a \in A$ in $M_C$.

However, due to the stochastic nature of environment, it is not very likely that two independent runs of learning algorithm result in the same trajectory.
Therefore, the visit numbers should be treated as random variables, and their distributions have to be revealed in order to tell the distributions of the estimated transition probabilities $\hat{P}$.

The major obstacle here comes from the very nature of interacting in an MDP: the state at the next time step must be the state the agent be transited to at the current time step.
This leads to the inter-dependency within all visit numbers $\allNsa$ and $\allNsas$.
Fortunately, not all part of this dependency is relevant to the success probability.
As an example, assume that by time step $\tau$, some transition $(s_i, a_j, s_k)$ occurred three times within five tries of $a_j$ at $s_i$.
Then exactly at which time steps these $(s_i, a_j, s_k)$ transitions took place does not make any difference for  estimating the transition probability $\hat{P}(s_k|s_i,a_j)$; it will always be $3 / 5 = 0.6$.
Actually, it is not even relevant which state is the current one, although it is one of the most important factors in interacting with the environment.

Therefore, in the context of deriving the success probability, it suffices to deal with the visit numbers in the following manner.

\begin{definition}[The Three Rules of Visit Numbers] \
	\begin{description}
		\item [Rule 1] Regard the expected visit numbers $\allBarNsa$ and $\allBarNsas$ as constant numbers that are only dependent to other state-actions that share a common state.
		Specifically, the expected total number of actions taken at any state should equal to the expected total number of being transited into that state.
		
		\item [Rule 2] Regard the actual state-action visit numbers $\allNsa$ as constant numbers that equals to $\allBarNsa$.
		
		\item [Rule 3] Regard the actual transition visit numbers $\allNsas$ as random variables following some binomial distributions. 
		More precisely, $N_{s,a,s'} \sim \text{Binomial}(\bar{N}_{s,a}, P(s'|s,a))$ for all $s,s' \in S$ and $a \in A$.
	\end{description}
	\label{defThreeRulesNsa}
\end{definition}

Although the last two rules may introduce a loss of dispersion to the actual visit numbers, this loss is rather insignificant and thus negligible.
The reason is, the agent exploits the information from visit numbers through estimated transition probabilities, which are computed by $\hat{P}(s'|s,a) = N_{s,a,s'} / N_{s,a}$.
As long as $N_{s,a,s'} \sim \text{Binomial}(\bar{N}_{s,a}, P(s'|s,a))$, the estimates $\hat{P}(s'|s,a)$ are unbiased, and the loss of dispersion only has impact on the variance of $\hat{P}(s'|s,a)$, which is decided mainly by the variance introduced by the binomial distributions rather than $\allNsa$.
The empirical results in Section \ref{secVerifyNsa} also shows that the dispersion are actually preserved well by these rules, further supporting their effectiveness.
Therefore, the above three rules are sufficient for the purpose of this paper.

Now let us consider the closed-form expression of expected visit numbers $\allBarNsa$ for prototype strategy $\ops(m)$ running in chains $M_C$.
For convenience, we write $\bar{N}_{s_i, a^+}$,  $\bar{N}_{s_i, a^-}$, $\bar{N}_{s_i, a^+, s_j}$, and $\bar{N}_{s_i, a^-, s_j}$ respectively as $\bar{N}_i^+$,  $\bar{N}_i^-$, $\bar{N}_{i,j}^+$, and $\bar{N}_{i,j}^-$ in the rest of this section.

Let $\lambda_i$ denotes the expected total number of being transited into chain state $s_i$ by any transitions except for $(s_{i-1}, a^+, s_i)$.
This includes failures of forwarding $(s_i, a^+, s_i)$,  goal transitions $(s_n, a^+, s_i)$, and backward transitions $(s_j, a^-, s_i)$ with $j \geq i$.
More formally,

\begin{definition}
	For any $1 \leq i \leq n$, we define $\lambda_i := \bar{N}_{i,i}^+ + \bar{N}_{n,i}^+ + \sum_{j=i}^{n} \bar{N}_{j,i}^- $.
	\label{defLambda}
\end{definition}

\begin{lemma}
	The following hold in corresponding chain MDP $M_C$.
	\begin{description}[labelwidth = 3.2cm]
		\item[(a) $\bm{M_C(H=1)}$] $\lambda_{i} =  (1-p_i)\bar{N}_{i}^+ + \bar{N}_{i+1}^-$, for $1<i<n$.
		
		\item[(b) $\bm{M_C(H=\infty)}$] $\lambda_{i} = (1-p_i)\bar{N}_{i}^+$, for $1<i<n$.
		
		\item[(c) $\bm{M_C(G=1)}$] $\lambda_n = \bar{N}_n^+$.
		
		\item[(d) $\bm{M_C(G=\frac 1 \infty)}$] $\lambda_n = 0$.
		
	\end{description}
	\label{lemLambda}
\end{lemma}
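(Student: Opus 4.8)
The plan is to prove each case directly from Definition~\ref{defLambda} by identifying, for the transition structure of the relevant prototype in Figure~\ref{figProtoChains}, exactly which of the expected transition counts on the right-hand side are non-zero. The single ingredient needed throughout is the expectation-level identity $\bar{N}_{i,j}^{\pm} = P(s_j \mid s_i, a^{\pm})\,\bar{N}_i^{\pm}$, which is immediate from Rule~3 of Definition~\ref{defThreeRulesNsa}: since $N_{s,a,s'} \sim \text{Binomial}(\bar{N}_{s,a}, P(s'\mid s,a))$, its mean is $\bar{N}_{s,a}\,P(s'\mid s,a)$, and $\bar{N}_{s,a,s'}$ is by definition this mean. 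I would then read the transition probabilities straight off the figure: forward actions give $P(s_i\mid s_i,a^+)=1-p_i$ and $P(s_{i+1}\mid s_i,a^+)=p_i$; backward actions give $P(s_{i-1}\mid s_i,a^-)=1$ when $H=1$ and $P(s_1\mid s_i,a^-)=1$ when $H=\infty$; the goal action gives $P(s_n\mid s_n,a^+)=1$ when $G=1$ and $P(s_1\mid s_n,a^+)=1$ when $G=\frac{1}{\infty}$.

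For parts (a) and (b), with $1<i<n$, I would treat the three summands of $\lambda_i$ in turn. The forward-failure term is $\bar{N}_{i,i}^+ = (1-p_i)\bar{N}_i^+$ in both cases. The goal term $\bar{N}_{n,i}^+$ vanishes for every interior $i$, because the goal action lands on either $s_n$ or $s_1$, never on $s_i$. The backward sum $\sum_{j=i}^n \bar{N}_{j,i}^-$ is where the cases diverge: for $H=1$ the only $j$ whose backward action reaches $s_i$ is $j=i+1$ (since $s_{j-1}=s_i$ forces $j=i+1$), contributing $\bar{N}_{i+1,i}^- = \bar{N}_{i+1}^-$ and yielding part (a); for $H=\infty$ every backward action resets to $s_1\neq s_i$, so the whole sum is zero, giving part (b). The restriction $1<i$ is precisely what lets me discard the start-state backward self-loop at $s_1$, and $i<n$ is what keeps the goal inflow out of the picture.

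For parts (c) and (d), $i=n$, and the delicate point — which I expect to be the main obstacle — is the bookkeeping at the goal state, where the forward-failure term and the goal term in Definition~\ref{defLambda} both reduce to the single quantity $\bar{N}_{n,n}^+$ associated with the goal-action self-loop and so must be counted only once; there is no separate forward action at $s_n$. Backward actions never return to $s_n$ (they land on $s_{n-1}$ or on $s_1$, both distinct from $s_n$ since $n>1$), so $\sum_{j=n}^n \bar{N}_{j,n}^- = \bar{N}_{n,n}^- = 0$ in both prototypes. It then remains only to evaluate the goal self-loop: for $G=1$ the goal action stays at $s_n$ with probability one, contributing $\bar{N}_n^+$ and giving $\lambda_n=\bar{N}_n^+$ for part (c); for $G=\frac{1}{\infty}$ the goal action sends the agent to $s_1$, so $\bar{N}_{n,n}^+=0$ and $\lambda_n=0$, giving part (d). Beyond routine substitution, the only genuine care required is to keep the goal-state inflow counted exactly once and to confirm from the figure that no backward edge terminates at $s_n$.
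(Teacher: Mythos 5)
Your proposal is correct and follows essentially the same route as the paper, whose entire proof is the forward-failure identity $\bar{N}_{i,i}^+ = (1-p_i)\bar{N}_i^+$ for $1 \leq i < n$ followed by ``then by Definition \ref{defLambda}''---you have simply made explicit the term-by-term bookkeeping the paper leaves to the reader. Your careful handling of the goal state, where the forward-failure and goal-transition terms of Definition \ref{defLambda} collapse to the single quantity $\bar{N}_{n,n}^+$ and must be counted once, is a genuine subtlety the paper only addresses implicitly by restricting its stated identity to $i<n$, and your treatment of it is the right one.
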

\begin{proof}
	For all $1 \leq i<n$ it holds that $\bar{N}_{i, i}^+ = (1-p_i)\bar{N}_{i}^+$.
	Then by Definition \ref{defLambda}.
\end{proof}

Lemma \ref{lemLambda} shows an interesting trait of chain MDPs, that is, if two chains share the same hazardousness $H$ at state $s_i$, then they share the expression of $\lambda_{i}$ for $1<i<n$ as well; the same can be said for the productivity $G$ and $\lambda_n$,
Actually it is easy to see that these also hold for non-prototype chains, meaning that even for a chain with mixed hazardousness and productivity, the properties of $\lambda$ above can be directly applied.
This could be very helpful when analysing more complicated chains, as can be seen in Section \ref{SectionApplying}.

According to Rule 1 of Definition \ref{defThreeRulesNsa}, the following equations hold for all four prototype chains.

\begin{equation}
\begin{cases}
\bar{N}_i^+ + \bar{N}_i^- = \lambda_i + p_{i-1} \bar{N}_{i-1}^+ & (i \neq 1) \\
\bar{N}_1^+ + \bar{N}_1^- = \lambda_1.
\end{cases}
\label{eqNsa}
\end{equation}

The above equations are only a paraphrase of Rule 1, in the sense that the left side of equations are the expected total number of tries at $s_i$,  while the right side are the expected total number of being transited into $s_i$.
The latter includes successful forwarding transitions, which is expected to be $p_{i-1} \bar{N}_{i-1}^+$ times, and the other transitions with $\lambda_i$ times.
In the special case of start state, there is no forwarding transitions, and hence the right side being simply $\lambda_1$.

By applying Lemma \ref{lemLambda} to Equations \ref{eqNsa}, the expected visit numbers $\allBarNsa$ in four prototype chains can be derived accordingly.
The result is as follows.

\begin{lemma} [The Expected Visit Numbers at $\bm{\tau=\tau_m}$] \
	\begin{description}
		\item[(a) Backward actions] In all four prototype chains, it holds that $\bar{N}_i^- = m$ $(1 \leq i \leq n)$.
		
		\item[(b) Goal actions] In all four prototype chains, it holds that $\bar{N}_n^+ = m$.
		
		\item[(c) Forward actions] The following holds for $1 \leq i < n$.
		\begin{description}[labelwidth = 5cm]
			\item[(c.1) $\bm{M_C(H=1, G=1)}$] $\bar{N}_i^+ = \frac{m \;(+1)}{p_i}$.
			
			\item[(c.2) $\bm{M_C(H=\infty, G=1)}$] $\bar{N}_i^+ = \frac{(n-i)m \;(+1)}{p_i}$.
		
			\item[(c.3) $\bm{M_C(H=1, G=\frac 1 \infty)}$] $\bar{N}_i^+ = \frac{2m}{p_i}$.
			
			\item[(c.4) $\bm{M_C(H=\infty, G=\frac 1 \infty)}$] $\bar{N}_i^+ = \frac{(n+1-i)m}{p_i}$.
		\end{description}
	\end{description}
	\label{lemExpressionNsa}
\end{lemma}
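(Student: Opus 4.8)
The plan is to establish parts (a) and (b) directly from the behaviour of $\ops(m)$ prescribed in Definition \ref{defProtoStrategy}, and then to obtain part (c) algebraically by feeding Lemma \ref{lemLambda} together with parts (a) and (b) into the conservation relations of Equation \ref{eqNsa}, solving the resulting recursion in $i$ from a boundary condition at the start state. For parts (a) and (b), the starting point is that the backward action at every state and the goal action at $s_n$ are deterministic (they fire with probability $1$ in every $M_C$), so each attempt yields exactly one observation. Under $\ops(m)$ an action is taken either to explore (while its count is below $m$) or to navigate through an already fully-explored state toward the nearest under-explored one. The key observation I would prove is that exploration expands the frontier monotonically forward: starting at $s_1$, the set of fully-explored states is always an initial segment $\{s_1,\dots,s_k\}$, so the nearest under-explored state always lies forward of the agent. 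Consequently navigation only ever uses forward actions, and $a^-$ at each $s_i$ as well as the goal action at $s_n$ are taken purely for exploration, exactly $m$ times each; via Rule 2 of Definition \ref{defThreeRulesNsa} this gives $\bar{N}_i^- = m$ for $1 \le i \le n$ and $\bar{N}_n^+ = m$.

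For part (c) I would substitute $\bar{N}_i^- = \bar{N}_{i+1}^- = m$ and the appropriate $\lambda_i$ from Lemma \ref{lemLambda} into Equation \ref{eqNsa}. For $M_C(H=1)$ the $\bar{N}_{i+1}^-$ term cancels and the relation collapses to $p_i\bar{N}_i^+ = p_{i-1}\bar{N}_{i-1}^+$, so the forward flux $p_i\bar{N}_i^+$ is constant in $i$; for $M_C(H=\infty)$ it collapses to $p_i\bar{N}_i^+ = p_{i-1}\bar{N}_{i-1}^+ - m$, an arithmetic recursion in the flux. Solving the start-state equation $\bar{N}_1^+ + \bar{N}_1^- = \lambda_1$, with $\lambda_1$ read off Definition \ref{defLambda} (it collects $2m$ or $nm$ of backward inflow according to $H=1$ or $H=\infty$, plus an extra $m$ of goal inflow when $G=\frac{1}{\infty}$), fixes the base value of the flux, and propagating the recursion yields the four closed forms of part (c) — $m/p_i$ and $2m/p_i$ for $M_C(H=1)$, and $(n-i)m/p_i$ and $(n+1-i)m/p_i$ for $M_C(H=\infty)$ — each before the parenthetical correction.

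The delicate point, and the main obstacle, is the optional $(+1)$ appearing in the $G=1$ cases. Rule 1 is a balanced-flow assumption (total inflow equals total outflow at each state), so Equation \ref{eqNsa} produces fluxes that ignore a net one-unit imbalance. To recover the $(+1)$ I would instead count crossings of the cut separating $\{s_1,\dots,s_i\}$ from $\{s_{i+1},\dots,s_n\}$: left-to-right crossings equal $p_i\bar{N}_i^+$, while right-to-left crossings equal the backward transitions (and, when $G=\frac{1}{\infty}$, the goal transition) that land in the left block. Since the agent ends its exploration at the goal state, it finishes on the right side of every such cut precisely when the goal action does not reset it to $s_1$, that is exactly when $G=1$, forcing one extra left-to-right crossing and hence the $+1$; when $G=\frac{1}{\infty}$ the final goal transition deposits the agent back in the left block and the crossings balance, so no correction appears. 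Making this ending-side argument rigorous, and reconciling it with the expected-value smoothing imposed by the Three Rules of Definition \ref{defThreeRulesNsa}, is the step that requires the most care.
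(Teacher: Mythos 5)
Your proposal is correct (to the same heuristic standard as the paper's own argument) and shares the paper's overall skeleton --- parts (a) and (b) from the behaviour of $\ops(m)$, part (c) by feeding Lemma \ref{lemLambda} into Equation \ref{eqNsa} --- but it differs genuinely in two places. First, where the paper argues informally that the bottleneck of exploration always lies toward the goal, so backward and goal actions are ``not likely'' to be reused for navigation, you state a monotone-frontier invariant (fully-explored states form an initial segment, hence navigation is always forward). This is the same idea in sharper form, but be aware it is not a deterministic invariant: under the uniform choice among under-explored actions in Definition \ref{defProtoStrategy}, the backward count at an early state can lag behind a later state's completion with small probability, so the invariant holds only typically --- which is precisely the level of rigour of the paper's own wording. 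Second, and more substantively, your treatment of the $(+1)$ is a different and cleaner derivation than the paper's: the paper solves the recursion and then patches the $G=1$ cases verbally (the goal action, being a self-loop, is likely the last to complete, so one final traversal costs one extra forward success per state), explicitly conceding that the right adjustment is ``a value between 0 and 1'' and negligible. Your cut-crossing argument instead identifies the $+1$ as the start/end-state imbalance that Rule~1 of Definition \ref{defThreeRulesNsa} discards --- the exact identity being that actions taken at $s$ equal arrivals at $s$ plus $\mathbbm{1}(s=s_1)$ minus $\mathbbm{1}(s = s_{\mathrm{end}})$ --- and it recovers all four formulas, with base fluxes $p_1\bar{N}_1^+$ equal to $m$, $2m$, $(n-1)m$, $nm$ in the four chains, in one uniform computation with no case-by-case patch. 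What it inherits, as you rightly flag, is the same unproved presumption as the paper's: that in $G=1$ chains the goal action finishes last so the agent ends at $s_n$; if instead $a^-$ at $s_n$ finishes last, the agent ends left of some cuts and the correction differs, which is exactly why the paper hedges to a value in $[0,1]$. One small caution: when reading $\lambda_1$ off Definition \ref{defLambda}, it also contains the self-loop failure term $(1-p_1)\bar{N}_1^+$; your stated base fluxes come out correctly only once that term is retained in the start-state equation.
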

\begin{proof}
	\begin{description}[leftmargin = 0pt]
		\item[(a)] By Definition \ref{defProtoStrategy}, it holds that $N_i^- \geq m$ at time step $\tau_m$.
		Because it is significantly easier to move backward than forward in prototype chains, the bottleneck of exploration is always at the state nearer to the goal state.
		Thus it is not likely that the agent take backward action $a^-$ at any $s_i$ after it has been tried $m$ times in order to get back to an under-explored state $s_j$ with $j<i$.
		Therefore, $\bar{N}_i^- = m$ holds for any $1 \leq i \leq n$ at $\tau_m$.
		
		\item[(b)] As in (a), the bottleneck of exploration is to reach the states closer to the goal state.
		Therefore, it is not likely that the agent further takes $a^+$ at $s_n$ after $\bar{N}_n^+ = m$ in order to move to other under-explored states, and hence the result.
		
		\item[(c.1)-(c.4)] The results can be computed recursively by Equation \ref{eqNsa} from $n-1$ down to $1$, with all $\lambda$'s taking values according to Lemma \ref{lemLambda}. 
		
	\end{description}
	Note that the goal transitions in chains with $G=1$ are self-loops, and exploring the goal actions in these chains has no impact on the exploration of other states.
	Therefore, these state-actions are likely be explored less than $m$ times when all other state-actions are fully explored.
	In this case, all forward actions have to succeed exactly one more time, so that the agent could get to the goal state and finish its exploration by repeating the goal action, resulting in an additional $+1$ adjustment to the numerator for (c.1) and (c.2).
	The most appropriate adjustment should be a value between 0 and 1, but the impact of the actual choice here is negligible.
\end{proof}

\begin{remark}
	By definition it holds that $\tau_m = \sum_{i=1}^{n} (N_i^+ + N_i^-)$ and $\mean \, \tau_m = \sum_{i=1}^{n} (\bar{N}_i^+ + \bar{N}_i^-)$ at $\tau=\tau_m$.
	This can be used to evaluate how many steps have passed when the exploration process ends.
\end{remark}

Having worked out the closed-form expressions for expected visit numbers $\allBarNsa$, the distribution of actual visit numbers $\allNsa$ and $\allNsas$ can be obtained without effort by applying Rule 2 and Rule 3 of Definition \ref{defThreeRulesNsa}.

\subsection{The Actual and Estimated State Value Functions}
\label{secClosedValueFunc}
In this subsection we will derive the expressions of the state value functions for chain prototypes.
As can be seen in the dependency graph (Figure \ref{figDependency}), the actual state value functions ${V^\pi}$ are only dependent to the MDP $M$, and thus this is a relatively easy part compared to the others.
According to Bellman equation (see Equation \ref{eqBellman1}), actual state value $V^\pi(s)$ for a state $s \in S$ can be seen as an expression in terms of all relevant transition probabilities $P(s'|s,\pi(s))$ and rewards $R(s,\pi(s),s')$.
Therefore, the main task here is to solve Bellman equations for all chain states $s_1, ..., s_n$ to obtain the expressions of state values in terms of $p_1, ..., p_{n-1}$, $r_G$, and $r_D$.

After the expressions of actual state values ${V^\pi}$ are acquired, it is trivial to turn them to estimated values ${\hat{V}^\pi}$.
Actually, expressions of ${\hat{V}^\pi}$ can be obtained by directly replacing all $P(s'|s,a)$ with $\hat{P}(s'|s,a) = N_{s,a,s'}/N_{s,a}$, and all $R(s,a,s')$ with $\hat{R}(s,a,s')$.
Although most planning algorithms do not estimate state values explicitly in this way, utilizing any iterative algorithm that is able to asymptotically converge to the actual values is nevertheless effectively equivalent in limit to estimating them straightforwardly via these expressions, and in this paper the computation cost is of no concern.

The only additional matter here is to decide which policies should be considered, because there are $2^n$ different possible policies in total for a chain MDP with length $n$.
Fortunately, in chain MDPs, most of the possible policies are unimportant because they are unconditionally dominated by a specific family of policies, meaning that no matter what the values are for $p_1, ..., p_{n-1}$, $r_G$, and $r_D$, these dominated policies have some state value lower than the dominants.
The dominant family can be specified as follows.

\begin{definition}
	Let $\pbf$ be a policy that chooses $a^-$ at states $s_1$, $s_2$, ..., $s_{k}$, and chooses $a^+$ at states $s_{k+1}$, ..., $s_n$, where $0 \leq k  \leq n$.
	Let $\allsth{\pbf}$ collectively denotes all such policies.
	\label{defPiBackFwd}
\end{definition}

Clearly, there are $(n+1)$ policies in $\allsth{\pbf}$.
In the extreme case where $k=0$, the policy always chooses $a^+$ at any state, while in the other extreme case where $k=n$, the policy always chooses $a^-$.
The dominating power of $\allsth{\pbf}$ will be explained in later part of this subsection.
Before that, some useful properties of $\allsth{\pbf}$ will be introduced first.

\begin{lemma}
	Let $s_i$ be an arbitrary non-goal chain state in $M_C$.
	For any policy $\pi$ such that $\pi(s_i)=a^+$, it holds that $V^\pi(s_i) = \frac{\gamma p_i}{1 - \gamma (1-p_i)} V^\pi(s_{i+1})$.
	\label{lemViToNext}
\end{lemma}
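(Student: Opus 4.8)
The plan is to apply the Bellman equation (Equation~\ref{eqBellman1}) at the state $s_i$ under the policy $\pi$, using the hypothesis $\pi(s_i)=a^+$, and then solve the resulting scalar equation for $V^\pi(s_i)$. The essential observation is that, across all four prototype chains in Figure~\ref{figProtoChains}, the forward action at any non-goal state $s_i$ has an identical local structure: it transits to $s_{i+1}$ with probability $p_i$ and self-loops back to $s_i$ with probability $1-p_i$, and both of these forward transitions carry zero reward. This uniformity is precisely why the lemma can be stated for $M_C$ without distinguishing the hazardousness $H$ or the productivity $G$, and why it is irrelevant what $\pi$ prescribes at states other than $s_i$.

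Concretely, I would substitute $P(s_{i+1}|s_i,a^+)=p_i$, $P(s_i|s_i,a^+)=1-p_i$, and the zero rewards into Equation~\ref{eqBellman1}. Because these are the only two destinations reachable from $s_i$ under $a^+$, the sum over $s'$ collapses to two terms, giving
\begin{equation*}
V^\pi(s_i) = \gamma p_i V^\pi(s_{i+1}) + \gamma(1-p_i) V^\pi(s_i).
\end{equation*}
Collecting the $V^\pi(s_i)$ terms on the left yields $\left(1 - \gamma(1-p_i)\right)V^\pi(s_i) = \gamma p_i V^\pi(s_{i+1})$. Since a prototype chain requires $p_i>0$ (otherwise the goal state would be unreachable) and $\gamma\in(0,1)$, the coefficient $1-\gamma(1-p_i)$ is strictly positive, so I may divide through by it to recover the claimed identity $V^\pi(s_i) = \frac{\gamma p_i}{1 - \gamma(1-p_i)} V^\pi(s_{i+1})$.

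I do not anticipate any serious obstacle here: the entire argument is a single Bellman substitution followed by elementary rearrangement. The only point that deserves a moment's care is verifying that the self-loop probability $1-p_i$ and the vanishing forward rewards are genuinely common to all four prototypes, so that one expression suffices for $M_C$ uniformly; this is immediate from inspecting Figure~\ref{figProtoChains} together with the convention, noted in its caption, that zero-value rewards are omitted from the drawing.
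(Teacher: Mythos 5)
Your proposal is correct and matches the paper's own proof, which likewise applies the Bellman equation at $s_i$ with $\pi(s_i)=a^+$ to get $V^\pi(s_i) = \gamma\bigl(p_i V^\pi(s_{i+1}) + (1-p_i)V^\pi(s_i)\bigr)$ and rearranges; you simply spell out the substitution and the divisibility check more explicitly. (One small over-caution: the hypothesis $p_i>0$ is not needed for the division, since $\gamma\in(0,1)$ already guarantees $1-\gamma(1-p_i)\geq 1-\gamma>0$.)
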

\begin{proof}
	By Bellman equation it holds that $V^\pi(s_i) = \gamma (p_i V^\pi(s_{i+1}) + (1-p_i)V^\pi(s_i))$, hence the lemma.
\end{proof}
\begin{corollary}
	$\Vbf(s_i) = \frac{\gamma p_i}{1 - \gamma (1-p_i)} \Vbf(s_{i+1})$ holds for all $k<i<n$ in $M_C$.
	\label{lemVBFiToNext}
\end{corollary}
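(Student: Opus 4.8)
The plan is to obtain this identity directly from Lemma \ref{lemViToNext}, which already establishes the recurrence $V^\pi(s_i) = \frac{\gamma p_i}{1 - \gamma (1-p_i)} V^\pi(s_{i+1})$ for \emph{any} policy $\pi$ selecting the forward action at the non-goal chain state $s_i$. The corollary is a pure specialization of that lemma to the particular policy $\pbf$, so the only real task is to confirm that $\pbf$ meets the hypothesis $\pi(s_i)=a^+$ at every index in the stated range.

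First I would recall from Definition \ref{defPiBackFwd} that $\pbf$ chooses $a^-$ at $s_1, \ldots, s_k$ and $a^+$ at $s_{k+1}, \ldots, s_n$. Hence for any index $i$ with $k < i < n$ we have $i \geq k+1$, so $s_i$ lies in the forward-action block and $\pbf(s_i) = a^+$. In particular $s_i$ is a non-goal chain state (since $i<n$), so both preconditions of Lemma \ref{lemViToNext} are satisfied. Applying that lemma with $\pi = \pbf$ then yields $\Vbf(s_i) = \frac{\gamma p_i}{1 - \gamma (1-p_i)} \Vbf(s_{i+1})$ for each such $i$, which is exactly the claim.

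I do not expect any substantial obstacle here; the corollary follows by a single substitution into the preceding lemma. The only point requiring even minimal care is the index bookkeeping: verifying that the range $k < i < n$ sits entirely inside the forward block $\{k+1, \ldots, n\}$ of $\pbf$. This holds because the lower bound $i > k$ places $s_i$ strictly past the last backward state, while the upper bound $i < n$ keeps it strictly before the goal state, so $\pbf(s_i)=a^+$ throughout and the recurrence applies term by term.
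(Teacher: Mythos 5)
Your proof is correct and matches the paper's intent exactly: the paper states this as an unproved corollary of Lemma \ref{lemViToNext}, precisely because it follows by specializing that lemma to $\pi = \pbf$, using Definition \ref{defPiBackFwd} to check that $\pbf(s_i)=a^+$ for all $k<i<n$. Your careful index bookkeeping makes explicit the one step the paper leaves implicit; nothing is missing.
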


Now we are ready for working out the closed-form expressions of state value functions for $\allsth{\pbf}$ in prototype chains.

\begin{lemma}[State Value Functions of $\bm{\allsth{\pbf}}$] 
	Let $F_j := \prod_{i=j}^{n-1} \frac{\gamma p_i}{1 - \gamma (1-p_i)}$ for $j<n$ and $F_n := 1$. 
	Then the following equations hold in corresponding prototype chains.
	\begin{description}[labelwidth = 4.7cm]
		\item[(a) $\bm{j \leq k, \; M_C(H = 1)}$] $V^{\pbf}(s_j) = \frac{\gamma^{j-1}}{1-\gamma} r_D$.
		
		\item[(b) $\bm{j \leq k, \; M_C(H = \infty)}$] $V^{\pbf}(s_j) = \frac{\gamma}{1-\gamma} r_D$ if $j>1$, 
		and $V^{\pbf}(s_1) = \frac{1}{1-\gamma} r_D$.
		
		\item[(c) $\bm{j > k, \; M_C(G = 1)}$] $V^{\pbf}(s_j) = \frac{F_j}{1-\gamma} r_G$.
		
		\item[(d) $\bm{j > k, \; M_C(G = \frac 1 \infty)}$] $V^{\pbf}(s_j) = \frac{F_j}{1-\gamma F_1} r_G$ if $k=0$, and 
		
		\item 
		$V^{\pbf}(s_j) = F_j(r_G+\frac{\gamma}{1-\gamma}r_D)$ if $k>0$.
	\end{description}
	\label{lemExpressionV}
\end{lemma}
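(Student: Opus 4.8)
The plan is to solve the Bellman equations (Equation \ref{eqBellman1}) for $\pbf$ directly on the chain, splitting into the backward region $j \le k$, where $\pbf$ chooses $a^-$, and the forward region $j > k$, where $\pbf$ chooses $a^+$, since the two regions are driven by entirely different dynamics. Throughout, the single most useful observation is that the start state always carries an $a^-$ self-loop with reward $r_D$, so its Bellman equation $\Vbf(s_1) = r_D + \gamma \Vbf(s_1)$ immediately yields $\Vbf(s_1) = \frac{r_D}{1-\gamma}$ whenever $k \ge 1$. This anchors the backward region and also reappears as a feedback term in the forward region when $G=\frac 1 \infty$.

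For the backward region (parts (a) and (b)) I would propagate outward from this anchor. In $M_C(H=1)$, action $a^-$ at $s_j$ for $1<j\le k$ gives zero reward and moves to $s_{j-1}$, so $\Vbf(s_j) = \gamma \Vbf(s_{j-1})$; iterating down to $s_1$ gives $\Vbf(s_j) = \gamma^{j-1}\Vbf(s_1) = \frac{\gamma^{j-1}}{1-\gamma}r_D$, which is (a). In $M_C(H=\infty)$, action $a^-$ at any $s_j$ with $j>1$ jumps straight back to $s_1$ in a single zero-reward step, so $\Vbf(s_j) = \gamma \Vbf(s_1) = \frac{\gamma}{1-\gamma}r_D$, while $\Vbf(s_1)$ retains its self-loop value $\frac{1}{1-\gamma}r_D$; this is (b).

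For the forward region (parts (c) and (d)) the work is organized around $\Vbf(s_n)$. Corollary \ref{lemVBFiToNext} gives $\Vbf(s_i) = \frac{\gamma p_i}{1-\gamma(1-p_i)}\Vbf(s_{i+1})$ for every $k < i < n$, so telescoping the product from $j$ up to $n$ collapses the whole forward segment into $\Vbf(s_j) = F_j\,\Vbf(s_n)$, with $F_n=1$ covering the boundary $j=n$. It then remains only to evaluate $\Vbf(s_n)$ from its own Bellman equation, which is where the split by $G$ (and, within $G=\frac 1 \infty$, by $k$) enters. When $G=1$ the goal action self-loops, so $\Vbf(s_n) = r_G + \gamma \Vbf(s_n) = \frac{r_G}{1-\gamma}$, and multiplying by $F_j$ gives (c). When $G=\frac 1 \infty$ the goal action instead returns the agent to $s_1$, so $\Vbf(s_n) = r_G + \gamma \Vbf(s_1)$, and the value $\Vbf(s_1)$ depends on what $\pbf$ does there.

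The hard part will be the $G=\frac 1 \infty$, $k=0$ subcase of (d), because there the goal action feeds back into the \emph{start} of the forward chain rather than into an absorbing self-loop, creating a global cycle through all $n$ states. Here $\pbf$ chooses $a^+$ at $s_1$, so Corollary \ref{lemVBFiToNext} is now valid for all $1 \le i < n$ and gives $\Vbf(s_1) = F_1 \Vbf(s_n)$; substituting into $\Vbf(s_n) = r_G + \gamma \Vbf(s_1)$ produces the fixed-point equation $\Vbf(s_n) = r_G + \gamma F_1 \Vbf(s_n)$, whose solution $\Vbf(s_n) = \frac{r_G}{1-\gamma F_1}$ yields the $\frac{1}{1-\gamma F_1}$ denominator after multiplying by $F_j$. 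By contrast, when $k>0$ the cycle is broken because $\pbf$ self-loops on $r_D$ at $s_1$, so the anchor gives $\Vbf(s_1) = \frac{r_D}{1-\gamma}$ and hence $\Vbf(s_n) = r_G + \frac{\gamma}{1-\gamma}r_D$, producing the remaining line of (d). Each individual piece is a short Bellman manipulation; the only genuine subtlety is recognizing that $k=0$ versus $k>0$ switches $s_1$ between a forward node of the cycle and an absorbing distractor, which is exactly what separates the two formulas in (d).
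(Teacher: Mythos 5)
Your proposal is correct and follows essentially the same route as the paper's proof: anchoring the backward region at the self-loop equation $\Vbf(s_1) = r_D + \gamma \Vbf(s_1)$, telescoping the forward region to $\Vbf(s_j) = F_j \Vbf(s_n)$ via Corollary \ref{lemVBFiToNext}, and resolving $\Vbf(s_n)$ by the same $G$-dependent (and, for $G=\frac{1}{\infty}$, $k$-dependent) Bellman manipulations, including the fixed-point equation $\Vbf(s_n) = r_G + \gamma F_1 \Vbf(s_n)$ in the $k=0$ subcase. Your treatment is if anything slightly more explicit than the paper's (e.g.\ noting $F_n=1$ covers $j=n$ and explaining the cycle-versus-absorption distinction behind the two formulas in (d)), but no step differs in substance.
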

\begin{proof}
		\textbf{(a)}
		This rule (as well as (b)) will not be used if $k=0$ because $j$ ranges from 1 to n.
		In the case that $k>0$, $\Vbf(s_1)=r_D + \gamma \Vbf(s_1) = \frac{1}{1-\gamma} r_D$.
		In chains with $H=1$, we have $\Vbf(s_j) = \gamma \Vbf(s_{j-1}) = ... = \gamma^{j-1} \Vbf(s_{1})$ for $1 < j \leq k$, hence the result.
		
		\textbf{(b)}
		As in (a), $\Vbf(s_1) = \frac{1}{1-\gamma} r_D$ when $k>0$.
		In chains with $H=\infty$, we have $\Vbf(s_j) = \gamma \Vbf(s_{1})$ for $1 < j \leq k$, hence the result.
		
		\textbf{(c)}
		This rule (as well as (d)) will not be used if $k=n$ because $j$ ranges from 1 to n.
		In the case that $k<j<n$, Corollary \ref{lemVBFiToNext} applies, and hence $\Vbf(s_j) = F_j \Vbf(s_n)$.
		In chains with $G=1$, we have $\Vbf(s_n) = r_G + \gamma \Vbf(s_n) = \frac{r_G}{1-\gamma}$, hence the result.
		
		\textbf{(d)}
		As in (c), it holds that $\Vbf(s_j) = F_j \Vbf(s_n)$.
		In chains with $G=\frac 1 \infty$, we have  $\Vbf(s_n) = r_G + \gamma \Vbf(s_1)$.
		However, $\Vbf(s_1)$ here is decided by whether $k=0$ or not.
		If $k=0$, then since $\Vbf(s_1) = F_1 \Vbf(s_n)$, we have $\Vbf(s_n) = \frac {r_G} {1-\gamma F_1}$, hence the first part of (d).
		If $0<k<n$, then $\Vbf(s_1) = \frac{r_D}{1-\gamma}$ as in (a) and (b), resulting in $\Vbf(s_n) = r_G+\frac{\gamma}{1-\gamma}r_D$, and hence the second part of (d).
\end{proof}

Lemma \ref{lemExpressionV} covers the closed-form expressions of state values for all four prototype chains, all chain states, and all possible $\allsth{\pbf}$.
For example, to get the expressions for policy $\pi^{-+}_3$ in an $(H = \infty, G=1)$ chain with length $n=8$, rule (b) should be used for $s_1$ to $s_3$, and rule (c) for $s_4$ to $s_8$.
Rule (c) and (d) are particularly useful, because they are not limited to prototype chains, but can also be applied to any chain MDPs with $G=1$ or $G = \frac 1 \infty$.

The following lemma reveals the dominating power of $\allsth{\pbf}$, which significantly reduces the difficulty of further analysis.

\begin{lemma} [Dominating Power of $\bm{\allsth{\pbf}}$]
	In any prototype chain with any possible settings of $p_1, ..., p_{n-1} \in [{0,1}]$, $r_G>0$, and $r_D>0$,
	every policy $\pi$ that is not within the family of $\allsth{\pbf}$ is dominated by some policy in $\allsth{\pbf}$, i.e. there exist some $0 \leq k \leq n$ and $0 \leq i \leq n$ such that $V^\pi(s) \leq \Vbf(s)$ for all $s \in S$ and $V^\pi(s_i) < \Vbf(s_i)$.
	\label{lemDominance}
\end{lemma}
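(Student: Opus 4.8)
The plan is to show that any policy outside $\allsth{\pbf}$ can be transformed into a member of $\allsth{\pbf}$ by a finite sequence of single-state modifications, each of which weakly improves the value function at every state and at least one of which is strict. The starting point is a combinatorial characterization: a policy lies outside $\allsth{\pbf}$ (Definition \ref{defPiBackFwd}) exactly when its backward states do not form a prefix, and this is equivalent to the existence of an \emph{adjacent inversion}, i.e. an index $i$ with $\pi(s_i)=\afor$ and $\pi(s_{i+1})=\aback$. Indeed, if no adjacent inversion existed, $\pi(s_i)=\afor$ would force $\pi(s_{i+1})=\afor$, making the forward states a suffix and $\pi$ monotone.

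The engine of the argument is a single-state improvement step applied to the lower state of an adjacent inversion. Let $\pi'$ agree with $\pi$ everywhere except that $\pi'(s_i)=\aback$. By the single-state policy-improvement theorem (a consequence of the monotonicity of the Bellman operator of Equation \ref{eqBellman1}) it suffices to verify the one-step inequality $Q^\pi(s_i,\aback)\ge V^\pi(s_i)$ to conclude $V^{\pi'}(s)\ge V^\pi(s)$ for all $s$; the theorem absorbs all the feedback that the change induces around the chain's loops (including the return-to-$s_1$ behaviour governed by the productivity $G$), so no value function has to be recomputed. I verify the one-step inequality using Lemma \ref{lemViToNext}: since $\pi(s_i)=\afor$, we have $V^\pi(s_i)=\frac{\gamma p_i}{1-\gamma(1-p_i)}V^\pi(s_{i+1})$, a contraction factor that is at most $1$. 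A short case split on the hazardousness and on whether $i=1$ then closes it. For $i>1$ the backward action earns reward $0$ and routes to $s_{i-1}$ ($H=1$) or $s_1$ ($H=\infty$); in the $H=\infty$ case the relation $V^\pi(s_{i+1})=\gamma V^\pi(s_1)$ forced by $\pi(s_{i+1})=\aback$ makes the inequality immediate from the contraction factor, while in the $H=1$ case the pair $s_i\leftrightarrow s_{i+1}$ forms a zero-reward loop so that $V^\pi(s_i)=0$ and the inequality is trivial. For $i=1$ the same loop forces $V^\pi(s_1)=0$ while $Q^\pi(s_1,\aback)=r_D>0$.

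With the improvement step in hand, I iterate it, always flipping the lower state of some adjacent inversion from $\afor$ to $\aback$. Each flip strictly enlarges the set of backward states, so after at most $n$ flips the policy has no adjacent inversion and is therefore some $\pbf\in\allsth{\pbf}$; composing the weak improvements gives $V^\pi(s)\le\Vbf(s)$ for all $s$, which is the ``for all $s$'' half of the claim. It remains to produce one state of strict inequality, and here I split on $V^\pi(s_1)$. If $V^\pi(s_1)=0$, then since every $\pbf$ has $\Vbf(s_1)>0$ by the closed forms of Lemma \ref{lemExpressionV}, the final monotone policy is already strict at $s_1$. If $V^\pi(s_1)>0$, then the lowest adjacent inversion of $\pi$ must occur at some $i>1$ (an $i=1$ inversion would force $V^\pi(s_1)=0$), and tracing the value back along the backward prefix to the $r_D$-loop at $s_1$ shows the escape value is positive, so the corresponding first flip satisfies $Q^\pi(s_i,\aback)>V^\pi(s_i)$ strictly; single-state improvement then yields strict gain at $s_i$, which only increases under the remaining flips and hence persists in $\Vbf$.

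The main obstacle I anticipate is the strictness bookkeeping rather than the weak domination: the one-step comparison degenerates to an equality whenever the relevant escape value vanishes, so the argument must confirm that every such degeneracy coincides with a state (namely $s_1$, or a trapped forward state) at which the monotone endpoint is provably strictly larger. Getting the dichotomy on $V^\pi(s_1)$ to cover all four prototypes uniformly---while keeping the contraction factor $\frac{\gamma p_i}{1-\gamma(1-p_i)}<1$ available, which holds since $\gamma<1$ and $p_i>0$---is the delicate part; the weak-domination skeleton, by contrast, follows mechanically from the single-state policy-improvement theorem once the one-step inequality is checked.
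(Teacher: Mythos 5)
Your weak-domination skeleton is sound, and it is in essence a cleaned-up version of what the paper does: the paper's own proof also eliminates adjacent inversions flip by flip in the $M_C(H=\infty)$ case, and your appeal to the single-state policy-improvement theorem makes rigorous the paper's hand-wavy closing remark that the remaining states follow ``by recursively applying Lemma \ref{lemViToNext}.'' The genuine gap is in your strictness bookkeeping, specifically in the $H=1$ branch of the case $V^\pi(s_1)>0$. You claim the first flip, at the lowest adjacent inversion $i$, satisfies $Q^\pi(s_i,\aback)>V^\pi(s_i)$ because ``tracing the value back along the backward prefix'' yields a positive escape value. This tacitly assumes $s_{i-1}$ lies on the backward prefix, but the lowest inversion only guarantees monotonicity of $\pi$ on $s_1,\dots,s_i$: there may be a forward run $s_{j+1},\dots,s_{i-1}$ strictly between the backward prefix and the inversion, and every state in that run is trapped in the zero-reward loop. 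Concretely, in $M_C(H=1)$ take $\pi(s_1)=\aback$, $\pi(s_2)=\pi(s_3)=\afor$, $\pi(s_4)=\aback$: then $s_3\leftrightarrow s_4$ is a zero-reward loop, so $V^\pi(s_3)=0$ and $V^\pi(s_2)=\frac{\gamma p_2}{1-\gamma(1-p_2)}V^\pi(s_3)=0$, whence $Q^\pi(s_3,\aback)=\gamma V^\pi(s_2)=0=V^\pi(s_3)$ --- the first flip is an equality even though $V^\pi(s_1)=\frac{r_D}{1-\gamma}>0$. (Your $H=\infty$ strictness argument is correct, since there $Q^\pi(s_i,\aback)=\gamma V^\pi(s_1)$ genuinely exceeds $\frac{\gamma p_i}{1-\gamma(1-p_i)}\,\gamma V^\pi(s_1)$ when $V^\pi(s_1)>0$.)

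The repair is short, and is exactly the observation the paper uses for its $H=1$ branch: at the lowest inversion, $V^\pi(s_i)=V^\pi(s_{i+1})=0$ unconditionally, while your flipping process ends at an endpoint $\pbf$ whose backward prefix contains $s_{i+1}$ (flips only ever add backward states, and $s_{i+1}$ was already backward), so $k\geq i+1$ and by Lemma \ref{lemExpressionV}(a) $\Vbf(s_i)=\frac{\gamma^{i-1}}{1-\gamma}r_D>0$; strictness at $s_i$ then follows from your weak-domination chain alone, with no strict flip needed. Equivalently, strictness surfaces at the later flip in the cascade where the descending inversion finally meets the top of the backward prefix --- your final paragraph half-anticipates this (``a trapped forward state''), but the argument as written asserts strictness of the \emph{first} flip, which the example above refutes. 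One further minor inaccuracy: ``every $\pbf$ has $\Vbf(s_1)>0$'' fails for $\pi^{-+}_0$ when some $p_i=0$, which the lemma's hypothesis $p_i\in[0,1]$ permits (then $F_1=0$); this is harmless only because your endpoint always has $k\geq 1$, a fact worth stating explicitly in the $V^\pi(s_1)=0$ case.
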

\begin{proof}
	If a policy $\pi$ does not belong to the family of $\allsth{\pbf}$, then by Definition \ref{defPiBackFwd}, there exists some $i<n$ such that $\pi(s_i) = a^+$ and $\pi(s_{i+1}) = a^-$.
	
	For $M_C(H=1)$, we have $V^\pi(s_{i+1}) = \gamma V^\pi(s_i)$.
	By Lemma \ref{lemViToNext} we have $V^\pi(s_i) = \frac{\gamma p_i}{1 - \gamma (1-p_i)} V^\pi(s_{i+1})$, and therefore $V^\pi(s_{i+1}) = V^\pi(s_i) = 0$.
	However, by Lemma \ref{lemExpressionV}, both $\Vbf(s_i)$ and $\Vbf(s_{i+1})$ are positive and thus greater than 0.
	This even holds when $p_1=...p_{n-1}=0$ as long as $k=n$.
	Therefore, $\pi$ is dominated by the policies in $\allsth{\pbf}$ for $M_C(H=1)$.
	
	For $M_C(H=\infty)$, we have $V^\pi(s_{i+1}) = \gamma V^\pi(s_1)$.
	Again by Lemma \ref{lemViToNext}, we have $V^\pi(s_i) = \frac{\gamma p_i}{1 - \gamma (1-p_i)} V^\pi(s_{i+1}) = \frac{\gamma^2 p_i}{1 - \gamma (1-p_i)} V^\pi(s_{1})$.
	However $\frac{\gamma p_i}{1 - \gamma (1-p_i)} < 1$, and thus if we change $\pi(s_i)$ to $a^-$, then $V^\pi(s_{i})$ will increased to $\gamma V^\pi(s_{1})$, dominating the original $\pi$.
	If we continue improving $\pi$ by eliminating any cases such that $\pi(s_i) = a^+$ and $\pi(s_{i+1}) = a^-$, it will eventually turns into a policy that belongs to $\allsth{\pbf}$.
	Therefore, $\pi$ is dominated by some policy in $\allsth{\pbf}$ for $M_C(H=\infty)$.
	
	Other than these states it holds that $V^\pi(s) \leq \Vbf(s)$ by recursively applying Lemma \ref{lemViToNext}. 
	Since $M_C(H=1)$ and $M_C(H=\infty)$ covers all four prototype chains, this lemma has been proved.
\end{proof}

As mentioned at the beginning of this subsection, the estimated state values $\hat{V}$ can be obtained by substitute the estimated transition probabilities $\hat{P}$ for $P$ in the expressions of $V$.
Because $\hat{V}$ can be seen as actual state values in estimated chain $\hat{M}_C$, which differs from $M_C$ only in $p_1,...,p_{n-1}$, and because Lemma \ref{lemDominance} holds regardless to $p_1,...,p_{n-1}$, this lemma also holds for $\hat{V}$.
In other words, all policies outside $\allsth{\pbf}$ family is dominated by $\allsth{\pbf}$ with respect to $\hat{V}$.

\subsection{Solving the Success Probabilities}
\label{secClosedSPE}

We have worked out the expressions of traverse probability $\pr(\Etrav|M_C,\ops(m),\tau_m)$ that holds for any $\pi$.
Then we have the expressions of expected visit numbers $\allsth{\bar{N}_{s,a}}$ and $\allsth{\bar{N}_{s,a,s'}}$ at $\tau_m$, given that the traverse event happened.
According to the rules in Definition \ref{defThreeRulesNsa}, we have the distribution of $\allNsa$ and $\allNsas$, and then the distribution of estimated transition probabilities $\hat{P}$.
Lastly, we have the expressions of state value function $\Vbf$, and by replacing $P$ with $\hat{P}$ we have the expressions of estimated state values $\hat{V}^\pbf$ for the $\allsth{\pbf}$ family.

The time has come to confront the $\pi$-success probabilities $\allsth{\psucc^\pi}$ and the $\varepsilon$-success probability $\psucce$.
Actually, according to Lemma \ref{lemL1Decomp}, the first-level decomposition, the $\varepsilon$-success probability $\psucce$ is simply the sum of $\pi$-success probabilities $\psucc^\pi$ over all relevant policies $\pi \in \Pi^\varepsilon$.
As discussed in Remark \ref{remarkL1Decomp}, specifying the set of desired policies is a task that can be handled by RL practitioners themselves, and thus is not within the main concern of this paper.
Therefore, the ultimate challenge comes from the $\pi$-success probabilities $\allsth{\psucc^\pi}$.

As Lemma \ref{lemDominance} implies, all non-$\allsth{\pbf}$ policies are unconditionally dominated by $\allsth{\pbf}$, regardless of the estimated transition probabilities $\hat{P}$.
As a direct consequent, we have the following result for these dominated policies.

\begin{theorem}[$\bm{\pi}$-success Probability for Non-$\bm{\allsth{\pbf}}$ Policies]
	For any policy $\pi$ that does not belong to the $\allsth{\pbf}$ family, it holds that $\psucc^\pi_{M_C,\ops(m),\tau_m}=0$.
\end{theorem}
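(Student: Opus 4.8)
The plan is to reduce the claim to a necessary condition for a $\pi$-success and then contradict that condition using the dominance already established. By Assumption \ref{assPlan}, the planning algorithm outputs a policy only if its estimated value equals the estimated optimal value, so the event $E^\pi$ can occur only when $\hat{V}^\pi = \hat{V}^*$, i.e.\ $\hat{V}^\pi(s) = \max_{\pi'} \hat{V}^{\pi'}(s)$ for every $s$. It therefore suffices to show that, with probability one, a policy $\pi \notin \allsth{\pbf}$ satisfies $\hat{V}^\pi \neq \hat{V}^*$.

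First I would invoke Lemma \ref{lemDominance} in its estimated form. As noted in the discussion following that lemma, the dominance argument does not depend on the transition probabilities, so it applies verbatim to $\hat{V}$: for any $\pi \notin \allsth{\pbf}$ there exist some $\pbf$ and some state $s_i$ with $\hat{V}^\pi(s) \leq \Vbf(s)$ for all $s$ and $\hat{V}^\pi(s_i) < \Vbf(s_i)$. Chaining this with $\Vbf(s_i) \leq \hat{V}^*(s_i)$ gives $\hat{V}^\pi(s_i) < \hat{V}^*(s_i)$, so $\hat{V}^\pi \neq \hat{V}^*$ and $\pi$ is never selected by the planner. This yields $\psucc^\pi_{M_C,\ops(m),\tau_m} = \pr(E^\pi) = 0$.

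The only point requiring care is the domain on which the estimated dominance is applied. Lemma \ref{lemDominance} is stated for $p_1,\ldots,p_{n-1} \in [0,1]$, whereas here the relevant quantities are the estimates $\hat{p}_i = N_{s_i,a^+,s_{i+1}}/N_{s_i,a^+}$, which are defined only when $N_{s_i,a^+} > 0$. I would handle this via the second-level decomposition (Lemma \ref{lemL2Decomp}): write $\psucc^\pi = \pr(E^\pi \mid \Etrav^\pi)\,\pr(\Etrav^\pi)$, so it suffices to argue on the event $\Etrav^\pi$, on which every $\hat{p}_i$ is well defined and lies in $(0,1] \subseteq [0,1]$. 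The estimated dominance then applies on the entire conditioning event, forcing the conditional probability $\pr(E^\pi \mid \Etrav^\pi)$, and hence $\psucc^\pi$, to be zero.

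I do not expect a genuine obstacle, since the heavy lifting is done by Lemma \ref{lemDominance}; the proof is essentially a bookkeeping step that translates $E^\pi$ into the planner condition $\hat{V}^\pi = \hat{V}^*$ and checks that the estimated probabilities stay within the range for which dominance was proved. The only delicate aspect is ensuring that the \emph{strict} inequality (not merely weak dominance) survives the passage to $\hat{V}$, which is precisely why it matters that Lemma \ref{lemDominance} guarantees a state $s_i$ with strict inequality for every admissible probability vector.
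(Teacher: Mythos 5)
Your proposal is correct and follows essentially the same route as the paper's proof: both rest on the observation that Lemma \ref{lemDominance} holds for arbitrary transition probabilities and hence applies verbatim to the estimated values $\hat{V}$, so that by Assumption \ref{assPlan} a strictly dominated non-$\allsth{\pbf}$ policy can never be the planner's output. Your additional bookkeeping (conditioning on $\Etrav^\pi$ via Lemma \ref{lemL2Decomp} so that every $\hat{p}_i$ is well defined) is a sound refinement the paper leaves implicit, but it does not change the argument.
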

\begin{proof}
	Because Lemma \ref{lemDominance} does not rely on any specific setting of transition probabilities, this means that no matter what the estimated transition probabilities $\hat{p}_1, \hat{p}_2, ..., \hat{p}_{n-1}$ are (even if they are utterly inaccurate), those non-$\allsth{\pbf}$ policies are still dominated by $\allsth{\pbf}$ in their estimated values, and thus will never be outputted by the planning algorithm.
	Therefore, their $\pi$-success probabilities are always 0.
\end{proof}

This theorem allows us to simply ignore all policies outside the $\allsth{\pbf}$ family thereafter.
Now let us focus on the $\pi$-success events among the $\allsth{\pbf}$ family.
Clearly, the one in $\allsth{\pbf}$ that dominate every other family members must be the one that dominate all policies in $\varPi$.
Therefore, the occurrence of $\pi$-success event is equivalent to $\pi$ being the dominant within $\allsth{\pbf}$.

The dominance relationship themselves are decided by comparing the estimated state values $\hat{V}^\pi$ between family members.
Although there are $(n+1)$ policies in the $\allsth{\pbf}$ family, the following lemma removes the necessity of comparing $\binom{n+1}{2}$ pairs of policies, or $n \binom{n+1}{2}$ pairs of estimated state values.

\begin{lemma} [$\bm{\pi^{-+}_j}$-success in $\bm{M_C(G=1)}$]
	In $M_C(G=1)$, policy $\pi^{-+}_j$ with some $0 \leq j \leq n$ dominates $\allsth{\pbf}$ on $\hat{V}$ if and only if the following hold simultaneously.
	
	\emph{\textbf{(1)}} $\hat{V}^{\pi^{-+}_j}(s_j) > \hat{V}^{\pi^{-+}_{j-1}}(s_j)$.
		
	\emph{\textbf{(2)}} $\hat{V}^{\pi^{-+}_j}(s_{j+1}) > \hat{V}^{\pi^{-+}_{j+1}}(s_{j+1})$.
	
	For $\pi^{-+}_0$, only \emph{(2)} is required, while for $\pi^{-+}_n$ only \emph{(1)} is required.
	\label{lemCompPBF1}
\end{lemma}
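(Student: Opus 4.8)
The plan is to prove this as a two-sided characterization of when $\pi^{-+}_j$ dominates the entire family $\allsth{\pbf}$ with respect to the estimated values $\hat{V}$. The key structural fact to exploit is that the family $\allsth{\pbf}$ is totally ordered by the index $k$: policy $\pi^{-+}_k$ switches from $a^-$ to $a^+$ exactly at state $s_{k+1}$. I would first establish that in $M_C(G=1)$, comparing two adjacent family members $\pi^{-+}_j$ and $\pi^{-+}_{j\pm1}$ reduces to comparing their values at a single ``pivot'' state, because the two policies agree on all other states and differ only in the action taken at one state. Specifically, since $G=1$ decouples the goal self-loop from the rest of the chain, and since by Lemma~\ref{lemExpressionV} the value $\Vbf(s_j)$ for $j>k$ depends only on the states from $s_j$ onward, the two candidate policies $\pi^{-+}_j$ and $\pi^{-+}_{j-1}$ produce identical values at every state except possibly at $s_j$ (and states whose value propagates through $s_j$).

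**The forward direction and transitivity.** The nontrivial direction is showing that conditions (1) and (2) are \emph{sufficient}: from just two local comparisons I must conclude global dominance over all $(n+1)$ family members. The mechanism I would use is a monotonicity/unimodality argument. Define $f(k) := \hat{V}^{\pi^{-+}_k}(s_{\min(j,k)+1})$ or, more cleanly, track how $\hat{V}^{\pi^{-+}_k}(s_1)$ (or any fixed reference state) varies with $k$. The goal is to show that as a function of the switching index $k$, the value is \emph{unimodal} with its unique maximum at $k=j$. Conditions (1) and (2) pin down that $j$ beats its two immediate neighbors; the heavy lifting is a lemma stating that if $\pi^{-+}_j \succ \pi^{-+}_{j-1}$, then automatically $\pi^{-+}_j \succ \pi^{-+}_{j'}$ for all $j' < j$, and symmetrically on the other side. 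I would prove this by using Lemma~\ref{lemViToNext} and Corollary~\ref{lemVBFiToNext} to show each value $\hat{V}^{\pi^{-+}_k}(s_i)$ is obtained from $\hat{V}^{\pi^{-+}_k}(s_{i+1})$ by multiplying by the positive contraction factor $\frac{\gamma \hat{p}_i}{1-\gamma(1-\hat{p}_i)} < 1$; this monotone propagation lets a single-state inequality cascade along the chain, yielding the desired unimodality.

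**The boundary cases and the reverse direction.** The boundary policies $\pi^{-+}_0$ (all forward) and $\pi^{-+}_n$ (all backward) have only one neighbor in the family, so only one of (1), (2) is meaningful, which I would handle as degenerate instances of the same unimodality argument. The reverse direction---dominance implies (1) and (2)---is immediate: if $\pi^{-+}_j$ dominates all of $\allsth{\pbf}$, then in particular it dominates $\pi^{-+}_{j-1}$ and $\pi^{-+}_{j+1}$, and since these differ from $\pi^{-+}_j$ only at the relevant state, strict dominance forces the pointwise strict inequalities at $s_j$ and $s_{j+1}$ respectively.

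**The main obstacle.** The hard part will be rigorously establishing the unimodality cascade, i.e. justifying that beating one's immediate neighbor suffices to beat all further-away family members. The subtlety is that changing the switching index from $k$ to $k-1$ alters the action at $s_k$, which changes $\hat{V}^{\pi^{-+}_k}(s_k)$, and this perturbation must be shown to propagate \emph{monotonically} back toward $s_1$ without reversing sign. I expect the cleanest route is to compare adjacent policies via the difference $\hat{V}^{\pi^{-+}_k}(s_i) - \hat{V}^{\pi^{-+}_{k-1}}(s_i)$ and show this difference has a constant sign in $i$ determined solely by the sign at the pivot state $s_k$, using that each backward step multiplies the difference by the same positive factor $\frac{\gamma \hat{p}_i}{1-\gamma(1-\hat{p}_i)}$. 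Once this sign-preservation is in hand, the equivalence follows by a short induction on the distance $|j - k|$.
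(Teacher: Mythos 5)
Your overall architecture (reduce to comparisons with the two adjacent family members, cascade the advantage outward by induction, trivial converse) is sound and lands close to the paper's proof, but the mechanism you propose for the crucial cascade step is wrong. The perturbation you flag as ``the main obstacle'' does not propagate at all: in $M_C(G=1)$, two adjacent members $\pi^{-+}_k$ and $\pi^{-+}_{k-1}$ have identical estimated values at \emph{every} state except the single pivot $s_k$. For $i>k$ both take $a^+$, and since $G=1$ the forward value $\hat{F}_i\,r_G/(1-\gamma)$ (Lemma~\ref{lemExpressionV}(c)) depends only on $\hat{p}_i,\ldots,\hat{p}_{n-1}$; for $i<k$ both take $a^-$, and the backward values (Lemma~\ref{lemExpressionV}(a),(b)) are anchored at $s_1$ -- the backward dynamics route \emph{around} the pivot, never through it, in both the $H=1$ and $H=\infty$ cases. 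So there is no difference to propagate back toward $s_1$, and the factor $\frac{\gamma\hat{p}_i}{1-\gamma(1-\hat{p}_i)}$ you invoke for that propagation is inapplicable anyway: Lemma~\ref{lemViToNext} holds only at states where the policy takes $a^+$, whereas all states below the pivot take $a^-$ under both policies being compared. Your proposed sign-preservation lemma, as stated, is not the step the proof needs.

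The step your induction genuinely needs -- and which your sketch does not supply -- is that the boundary inequality at index $j$ implies the corresponding inequality at the next index: from condition (2), $\hat{V}^{\pi^{-+}_j}(s_{j+1})>\hat{V}^{\pi^{-+}_{j+1}}(s_{j+1})$, you must deduce $\hat{V}^{\pi^{-+}_{j+1}}(s_{j+2})>\hat{V}^{\pi^{-+}_{j+2}}(s_{j+2})$, so that $\pi^{-+}_{j+1}$ pairwise dominates $\pi^{-+}_{j+2}$ (they differ only at $s_{j+2}$) and transitivity of pointwise dominance closes the induction. This holds because one step toward the goal multiplies the forward value by $1/Y_i>1$ where $Y_i=\frac{\gamma\hat{p}_i}{1-\gamma(1-\hat{p}_i)}\leq\gamma<1$, while the backward value is multiplied by $\gamma$ (for $H=1$) or stays constant (for $H=\infty$); the symmetric cascade handles $k<j$ via condition (1). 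Once repaired this way, your argument is essentially the paper's in disguise: the paper compares $\pi^{-+}_j$ against an \emph{arbitrary} $\pi^{-+}_k$ with $k>j$ in one shot, observing that the two value functions coincide for $i\leq j$ and $i>k$, that $\hat{V}^{\pi^{-+}_j}$ strictly increases while $\hat{V}^{\pi^{-+}_k}$ weakly decreases across the disagreement window $j<i\leq k$, so the entire comparison collapses to the single state $s_{j+1}$, and finally that $\hat{V}^{\pi^{-+}_k}(s_{j+1})=\hat{V}^{\pi^{-+}_{j+1}}(s_{j+1})$ turns this into condition (2). The paper's slicing avoids the induction entirely, but both routes rest on the same monotone structure you correctly identified.
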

\begin{proof}
	Consider an arbitrary $\pbf$ such that $k>j$. 
	By Lemma \ref{lemExpressionV}, $\hat{V}^{\pi^{-+}_j}(s_i) = \hat{V}^{\pi^{-+}_k}(s_i)$ for all $i \leq j$ and $i > k$.
	For $j < i \leq k$, on the other hand, it holds that $\hat{V}^{\pi^{-+}_j}(s_{j+1}) < \hat{V}^{\pi^{-+}_j}(s_{j+2}) < ... < \hat{V}^{\pi^{-+}_j}(s_{k})$, and $\hat{V}^{\pi^{-+}_k}(s_{j+1}) \geq \hat{V}^{\pi^{-+}_k}(s_{j+2}) \geq ... \geq \hat{V}^{\pi^{-+}_k}(s_{k})$.
	Therefore, $\pi^{-+}_j$ dominates $\pbf$ if and only if $\hat{V}^{\pi^{-+}_j}(s_{j+1}) > \hat{V}^{\pi^{-+}_k}(s_{j+1})$.
	Since $\hat{V}^{\pi^{-+}_k}(s_{j+1}) = \hat{V}^{\pi^{-+}_{j+1}}(s_{j+1})$, that is equivalent to $\hat{V}^{\pi^{-+}_j}(s_{j+1}) > \hat{V}^{\pi^{-+}_{j+1}}(s_{j+1})$.
	Likewise, for $k<j$, $\pi^{-+}_j$ dominates $\pbf$ if and only if $\hat{V}^{\pi^{-+}_j}(s_j) > \hat{V}^{\pi^{-+}_{j-1}}(s_j)$.
	Hence the lemma.
\end{proof}

\begin{lemma} [$\bm{\pi^{-+}_j}$-success in $\bm{M_C(G=\frac 1 \infty)}$]
	
%
%
%
%

	In $M_C(G==\frac 1 \infty)$, policy $\pi^{-+}_0$ dominates $\allsth{\pbf}$ on $\hat{V}$ if and only if
	
	\emph{\textbf{(0)}} $\hat{V}^{\pi^{-+}_0}(s_1) > \hat{V}^{\pi^{-+}_1}(s_1)$.
	
	If this does not happen, then policy $\pi^{-+}_j$ with some $0 < j \leq n$ dominates $\allsth{\pbf}$ on $\hat{V}$ if and only if the following hold simultaneously.
	
	\emph{\textbf{(1)}} $\hat{V}^{\pi^{-+}_j}(s_j) > \hat{V}^{\pi^{-+}_{j-1}}(s_j)$.
		
	\emph{\textbf{(2)}} $\hat{V}^{\pi^{-+}_j}(s_{j+1}) > \hat{V}^{\pi^{-+}_{j+1}}(s_{j+1})$.
	
	For $\pi^{-+}_n$ only \emph{(1)} is required.

	\label{lemCompPBF2}
\end{lemma}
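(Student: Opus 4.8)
The plan is to mirror the proof of Lemma~\ref{lemCompPBF1}, isolating the anomalous policy $\pi^{-+}_0$ whose forward values carry the factor $\frac{1}{1-\gamma F_1}$ instead of the uniform multiplier shared by the policies with $k\geq 1$. First I would record the two structural facts from Lemma~\ref{lemExpressionV}: the backward values (states $s_i$ with $i\leq k$) are identical to those in $M_C(G=1)$ and depend only on $H$, while for \emph{every} $k\geq 1$ the forward values take the common form $\hat{V}^{\pi^{-+}_k}(s_i)=F_i\,c$ with $c:=r_G+\frac{\gamma}{1-\gamma}r_D$ for $i>k$. Consequently, restricted to the family $\{\pi^{-+}_k:k\geq 1\}$ the value functions have exactly the shape used in Lemma~\ref{lemCompPBF1} --- the only change being the positive constant $c$ in place of $\frac{r_G}{1-\gamma}$ --- so the telescoping identities $\hat{V}^{\pi^{-+}_j}(s_i)=\hat{V}^{\pi^{-+}_k}(s_i)$ for $i\leq\min(j,k)$ and $i>\max(j,k)$, together with the monotone behaviour in the intermediate band, carry over unchanged.

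For the characterisation of when $\pi^{-+}_0$ dominates, the forward direction is immediate: domination in particular forces $\hat{V}^{\pi^{-+}_0}(s_1)\geq\hat{V}^{\pi^{-+}_1}(s_1)$, which (ruling out the measure-zero ties) yields (0). For the converse I would use that $\hat{V}^{\pi^{-+}_0}(s_i)=F_i\,\hat{V}^{\pi^{-+}_0}(s_n)$ with $F_1<\dots<F_n=1$, so these forward values increase in $i$ and $\hat{V}^{\pi^{-+}_0}(s_1)$ is their minimum. Condition (0) says precisely that this minimum exceeds $\frac{r_D}{1-\gamma}=\hat{V}^{\pi^{-+}_k}(s_1)$, which is at least every backward value occurring for either $H=1$ or $H=\infty$; hence $\pi^{-+}_0$ beats all backward values at states $i\leq k$. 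On the overlapping forward region $i>k$ the difference factors as $\hat{V}^{\pi^{-+}_0}(s_i)-\hat{V}^{\pi^{-+}_k}(s_i)=F_i\big(\hat{V}^{\pi^{-+}_0}(s_n)-c\big)$, and the Bellman relations $\hat{V}^{\pi^{-+}_0}(s_n)=r_G+\gamma\hat{V}^{\pi^{-+}_0}(s_1)$ and $c=r_G+\gamma\frac{r_D}{1-\gamma}$ give $\hat{V}^{\pi^{-+}_0}(s_n)-c=\gamma\big(\hat{V}^{\pi^{-+}_0}(s_1)-\hat{V}^{\pi^{-+}_1}(s_1)\big)$, which is positive under (0). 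Combining the two regions yields pointwise domination over every $\pi^{-+}_k$.

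For the second half I would first note that the \emph{failure} of (0) is exactly what demotes $\pi^{-+}_0$: the same factoring gives $\hat{V}^{\pi^{-+}_1}(s_i)\geq\hat{V}^{\pi^{-+}_0}(s_i)$ at every state, so $\pi^{-+}_0$ is weakly dominated by $\pi^{-+}_1$ and drops out of the competition. The dominant policy is then sought inside $\{\pi^{-+}_k:k\geq 1\}$, where by the reduction of the first paragraph the argument of Lemma~\ref{lemCompPBF1} applies verbatim: dominating all larger indices is equivalent to (2), and dominating all smaller indices $k\geq 1$ is equivalent to (1), both obtained from the boundary comparisons at $s_{j+1}$ and $s_j$ respectively. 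For $j=n$ there is no larger index, so (2) is vacuous and only (1) remains; for $j=1$ the ``smaller neighbour'' named in (1) is $\pi^{-+}_0$ itself, and imposing the strict inequality (1) there upgrades the weak domination of $\pi^{-+}_0$ just established to a strict one, so that $\pi^{-+}_1$ is the unique maximiser.

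The step I expect to be the main obstacle is the converse of Part~A: propagating the single scalar inequality (0) to simultaneous pointwise domination over \emph{all} $\pi^{-+}_k$ in both hazardousness regimes. The clean route is the observation that $\hat{V}^{\pi^{-+}_0}$ is increasing along the chain, so its value at the start state is the bottleneck, combined with the goal-loop Bellman relation at $s_n$ that transfers the sign of (0) to the goal state and, through the positive factors $F_i$, to the entire forward region; the backward region then collapses to a single comparison against $\frac{r_D}{1-\gamma}$.
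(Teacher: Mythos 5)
Your proposal is correct and follows essentially the same route as the paper's proof: the decisive step in both is the factored identity $\hat{V}^{\pi^{-+}_0}(s_i)-\hat{V}^{\pi^{-+}_j}(s_i)=\gamma\hat{F}_i\bigl(\hat{V}^{\pi^{-+}_0}(s_1)-\hat{V}^{\pi^{-+}_1}(s_1)\bigr)$ for $i>j$ (you obtain it via the Bellman relation at $s_n$, the paper by direct algebra from Lemma~\ref{lemExpressionV}, a cosmetic difference), after which the subfamily $\{\pi^{-+}_k : k\geq 1\}$ is handled by reducing to Lemma~\ref{lemCompPBF1} exactly as the paper does. Your additional care with the backward-region comparison and the $j=1$ boundary case only makes explicit details the paper's terser proof leaves implicit.
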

\begin{proof}
	The additional condition ($0$) here is actually the special case of (2) with $j=0$.
	The complication of logic here is due to the fact that the always-forward policy $\pi^{-+}_{0}$ has a different state value expression compared to other $\allsth{\pbf}$ in $M_C(G=\frac 1 \infty)$.
	Let $\hat{F}_k := \prod_{i=k}^{n-1} \frac{\gamma \hat{p}_i}{1 - \gamma (1-\hat{p}_i)}$ for $k<n$ and $\hat{F}_n := 1$. 
	Consider the dominance relationship between $\pi^{-+}_{0}$ and any $\pi^{-+}_j$ with $j \neq 0$.
	By Lemma \ref{lemExpressionV}, for all $i>j$ we have $\hat{V}^{\pi^{-+}_0}(s_{i}) - \hat{V}^{\pi^{-+}_j}(s_{i}) 
	= \gamma \hat{F_i}(\frac{\hat{F}_1}{1 - \gamma \hat{F}_1} r_G - \frac{1}{1-\gamma} r_D) 
	= \gamma \hat{F_i} (\hat{V}^{\pi^{-+}_0}(s_1) - \hat{V}^{\pi^{-+}_1}(s_1))$.
	Therefore, $\pi^{-+}_0$ dominates any $\pi^{-+}_j$ with $j \neq 0$ if and only if $\hat{V}^{\pi^{-+}_0}(s_1) > \hat{V}^{\pi^{-+}_1}(s_1)$, just as in Lemma \ref{lemCompPBF1} for $\pi^{-+}_0$.
	If this condition is not satisfied, then $\pi^{-+}_{0}$ dominates no one in $\allsth{\pbf}$, and therefore Lemma \ref{lemCompPBF1} for $\pi^{-+}_j$ with $j \neq 0$ can be applied as usual, hence the lemma.
\end{proof}

By Lemma \ref{lemCompPBF1} and \ref{lemCompPBF2}, the occurrence of $\pi$-success event for any policy in $\allsth{\pbf}$ can now be decided simply by comparing one to three pairs of estimated state values.

More importantly, in both $\hat{V}^{\pi^{-+}_j}(s_j) > \hat{V}^{\pi^{-+}_{j-1}}(s_j)$ and $\hat{V}^{\pi^{-+}_j}(s_{j+1}) > \hat{V}^{\pi^{-+}_{j+1}}(s_{j+1})$, one side of the inequalities are always computed using rule (a) or (b) in Lemma \ref{lemExpressionV}, that is, $V^{\pi^{-+}_j}(s_j) = \frac{\gamma^{j-1}}{1-\gamma} r_D$ for $M_C(H=1)$, or $V^{\pi^{-+}_j}(s_j) = \frac{\gamma}{1-\gamma} r_D$ and $V^{\pi^{-+}_1}(s_1) = \frac{1}{1-\gamma} r_D$ for $M_C(H=\infty)$.
There is no estimated transition probability $\hat{P}$ involved in these expressions at all, and thus these estimated state values are constant and always equal to the actual values, regardless of $\hat{P}$.
The other side of the inequalities, on the other hand, are all in the form of $\hat{V}^\pbf(s_{k+1})$ for some $0 \leq k < n$.
Therefore, the $\pi$-success probability $\psucc^\pi$ in $M_C$ are essentially the joint probability that some specific $\hat{V}^\pbf(s_{k+1})$ as functions of random variables $\hat{P}$ exceed or be exceeded by some constants.

To provide readers a more direct impression, the expressions of success probability $\psucc^{\pi^{-+}_0}$ for the always-forward policy $\pi^{-+}_0$ is given in the following theorem.

\begin{theorem}[$\bm{\pi^{-+}_0}$-success Probability] 
	Let $\hat{F}_1 := \prod_{i=1}^{n-1} \frac{\gamma \hat{p}_i}{1 - \gamma (1-\hat{p}_i)}$, then it holds that
	\begin{equation*}
		\psucc^{\pi^{-+}_0}_{M_C(G=1),\ops(m),\tau_m} = \pr(\frac{\hat{F_1}}{1-\gamma} r_G > \frac{1}{1-\gamma}r_D|\allsth{N_{s,a}}) \, \pr(\Etrav|m),
	\end{equation*}
	\begin{equation*}
		\psucc^{\pi^{-+}_0}_{M_C(G=\frac 1 \infty),\ops(m),\tau_m} = \pr(\frac{\hat{F_1}}{1-\gamma F_1} r_G > \frac{1}{1-\gamma}r_D|\allsth{N_{s,a}}) \, \pr(\Etrav|m).
	\end{equation*}
	The visit numbers $\allsth{N_{s,a}}$ here are decided by Lemma \ref{lemExpressionNsa} and Definition \ref{defThreeRulesNsa}, and $\pr(\Etrav|m)$ equals to the one in Theorem \ref{theoTrav}.
	\label{theoPiSPE}
\end{theorem}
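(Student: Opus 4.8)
The plan is to exploit the divide-and-conquer machinery already assembled and collapse the $\pi^{-+}_0$-success probability into a single scalar comparison. First I would apply the second-level decomposition (Lemma \ref{lemL2Decomp}) to write $\psucc^{\pi^{-+}_0}_{M_C,\ops(m),\tau_m} = \pr(E^{\pi^{-+}_0}|\Etrav^{\pi^{-+}_0},\ops(m),\tau_m)\,\pr(\Etrav^{\pi^{-+}_0}|\ops(m),\tau_m)$. The second factor is immediately $\pr(\Etrav|m)$ by Theorem \ref{theoTrav}, since that traverse probability is independent of the policy. This isolates the real work into the conditional factor $\pr(E^{\pi^{-+}_0}|\Etrav^{\pi^{-+}_0},\cdot)$, i.e.\ the probability that the planning algorithm outputs $\pi^{-+}_0$ given a successful traverse.

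Next I would reduce that conditional factor to a dominance condition. By Assumption \ref{assPlan} the algorithm outputs $\pi^{-+}_0$ exactly when it attains the maximal estimated value among all policies (ties, which occur on a measure-zero set of $\hat{P}$, being broken uniformly). By the theorem immediately preceding this one, every policy outside $\allsth{\pbf}$ has zero $\pi$-success probability irrespective of $\hat{P}$, and each such policy is dominated by some member of $\allsth{\pbf}$; hence it suffices that $\pi^{-+}_0$ dominate the $\allsth{\pbf}$ family on $\hat{V}$. I would then invoke Lemma \ref{lemCompPBF1} for $M_C(G=1)$ and Lemma \ref{lemCompPBF2} for $M_C(G=\frac 1 \infty)$: for $\pi^{-+}_0$ all but one of the pairwise comparisons are eliminated, leaving the single inequality $\hat{V}^{\pi^{-+}_0}(s_1) > \hat{V}^{\pi^{-+}_1}(s_1)$ (condition (2) with $j=0$, equivalently condition (0)). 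This is precisely why the step is cheap rather than combinatorial.

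It then remains to substitute the closed forms of Lemma \ref{lemExpressionV}. The right-hand side $\hat{V}^{\pi^{-+}_1}(s_1)$ is computed by rule (a) or (b) (state $s_1$ has $j \leq k$), giving the constant $\frac{1}{1-\gamma}r_D$ with no dependence on the estimates $\hat{P}$. The left-hand side $\hat{V}^{\pi^{-+}_0}(s_1)$ is computed by rule (c), giving $\frac{\hat{F}_1}{1-\gamma}r_G$ in $M_C(G=1)$, and by rule (d) with $k=0$, giving $\frac{\hat{F}_1}{1-\gamma\hat{F}_1}r_G$ in $M_C(G=\frac 1 \infty)$ (both obtained from the actual expressions by replacing every $F$ with its estimate $\hat{F}$). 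Substituting these into $\hat{V}^{\pi^{-+}_0}(s_1) > \hat{V}^{\pi^{-+}_1}(s_1)$ reproduces exactly the two threshold inequalities in the statement. Finally, conditioning on $\Etrav^{\pi^{-+}_0}$ fixes the state-action visit numbers $\allsth{N_{s,a}}$ to the constants of Lemma \ref{lemExpressionNsa} (Rules 1--2), whereupon the transition counts are binomial (Rule 3 of Definition \ref{defThreeRulesNsa}) and $\hat{F}_1$ becomes a function of these binomials; the conditional factor is therefore exactly the stated probability $\pr(\cdot|\allsth{N_{s,a}})$, and multiplying by $\pr(\Etrav|m)$ closes the argument.

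I expect the only genuinely delicate point to be bookkeeping rather than mathematics: verifying that the right-hand threshold $\frac{1}{1-\gamma}r_D$ is $\hat{P}$-free, so that the success event is a one-sided threshold on the single random quantity $\hat{F}_1$, and being careful that in the $G=\frac 1 \infty$ case the denominator also carries the estimate, making the event $\{\hat{F}_1 : \frac{\hat{F}_1}{1-\gamma\hat{F}_1}r_G > \frac{1}{1-\gamma}r_D\}$ a threshold on the strictly increasing map $\hat{F}_1 \mapsto \frac{\hat{F}_1}{1-\gamma\hat{F}_1}$ rather than on a linear form. Everything else is a direct assembly of Lemmas \ref{lemL2Decomp}, \ref{lemExpressionNsa}, \ref{lemExpressionV}, \ref{lemCompPBF1}, \ref{lemCompPBF2}, Definition \ref{defThreeRulesNsa}, and Theorem \ref{theoTrav}.
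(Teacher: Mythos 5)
Your proposal is correct and takes essentially the same route as the paper's own proof, which likewise uses Lemmas \ref{lemCompPBF1}, \ref{lemCompPBF2} and \ref{lemExpressionV}(a)(b) to reduce the conditional factor to the single threshold event $\hat{V}^{\pi^{-+}_0}(s_1) > \frac{1}{1-\gamma}r_D$ given $\allsth{N_{s,a}}$, and then closes via Lemma \ref{lemExpressionV}(c)(d) and the second-level decomposition (Lemma \ref{lemL2Decomp}); your version simply makes explicit the steps the paper compresses (the traverse factor via Theorem \ref{theoTrav}, the restriction to the $\allsth{\pbf}$ family via the dominance results, and the visit-number bookkeeping via Lemma \ref{lemExpressionNsa} and Definition \ref{defThreeRulesNsa}). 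One incidental observation: your fully-hatted denominator $1-\gamma\hat{F}_1$ in the $G=\frac{1}{\infty}$ case is the form actually produced by the derivation and is consistent with the proof of Lemma \ref{lemCompPBF2}, so the unhatted $F_1$ in the theorem's printed statement appears to be a typographical slip, and your flagged point about the event being a threshold under the strictly increasing map $\hat{F}_1 \mapsto \hat{F}_1/(1-\gamma\hat{F}_1)$ is handled correctly.
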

\begin{proof}
	By Lemma \ref{lemCompPBF1}, Lemma \ref{lemCompPBF2} and Lemma \ref{lemExpressionV} (a)(b), we have \[\pr(E^{\pi^{-+}_0}|\Etrav,M_C,\ops(m),\tau_m) = \pr(\hat{V}^{\pi^{-+}_0}(s_1) > \frac{1}{1-\gamma}r_D|\allsth{N_{s,a}}).\]
	Then by Lemma \ref{lemExpressionV} (c)(d) and the second level decomposition Lemma \ref{lemL2Decomp}.
\end{proof}

For the sake of space we will not list all expressions of the $\pi$-success probability for $\allsth{\pbf}$ here.
The expressions for other policies in $\allsth{\pbf}$ can be obtained effortlessly by applying Lemma \ref{lemExpressionV} to Lemma \ref{lemCompPBF1} and Lemma \ref{lemCompPBF2} in the same manner as Theorem \ref{theoPiSPE}.

\begin{remark}
	By change $\hat{p}$ back to $p$ in the expressions of Theorem \ref{theoPiSPE}, we get the sufficient and necessary conditions of $r_G$ and $r_D$ that make $\pi^{-+}_0$ to be the actual optimal policy.
	\begin{description}[itemsep=1pt, topsep=4pt]
		\item[(a) $\bm{M_C(G=1)}$] $\frac{F_1}{1-\gamma} r_G > \frac{1}{1-\gamma}r_D$
		
		\item[(b) $\bm{M_C(G=\frac 1 \infty)}$] $\frac{F_1}{1-\gamma F_1} r_G > \frac{1}{1-\gamma}r_D$
	\end{description}
	where $F_1$ is defined as in Lemma \ref{lemExpressionV}. This answers the question of the value setting of $r_G$ and $r_D$ in Section \ref{secProtoChains}.
	\label{remarkRewardSetting}
\end{remark}

Since $\pr(\Etrav|m)$ can be computed by Theorem \ref{theoTrav} without effort, we have eventually transformed the $\pbf$-success probabilities into computing the joint probability of events $\hat{V}^\pbf(s_{k+1})>C$ and $\hat{V}^{\pi^{-+}_{k-1}}(s_{k})<C'$ for some constants $C$ and $C'$ that can also be computed easily by following Lemma \ref{lemExpressionV}.
Therefore, the question remain is whether the cumulative distribution of these $\hat{V}^\pbf(s_{k+1})$ can be computed or not.

The answer is ``Yes, arithmetically''.
By observing (c) and (d) in Lemma \ref{lemExpressionV}, it is clear that all relevant $\hat{V}^\pbf(s_{k+1})$ here are some function of $\hat{F}_{k+1}$, while $\hat{F}_{k+1}$ is a function of $\hat{P}$.
By Rule 2 and 3 of Definition \ref{defThreeRulesNsa}, the probability mass function of $\hat{P}$ is 
\begin{equation*}
	\pr\Big(\hat{p}_i = \frac b {\bar{N}_i^+}\Big) = \binom{\bar{N}_i^+}{b} p_i^b (1-p_i)^{\bar{N}_i^+ - b}.
\end{equation*}
Then the probability mass function of $\hat{F}_{k+1}$ is
\begin{equation*}
	\pr\bigg(\hat{F}_{k+1} = \prod_{i={k+1}}^{n-1} \frac{\gamma \frac{b_i}{\bar{N}_i^+}}{1 - \gamma (1-\frac{b_i}{\bar{N}_i^+})}\bigg) 
	= \prod_{i = k+1}^{n-1} \binom{\bar{N}_i^+}{b_i} p_i^{b_i} (1-p_i)^{\bar{N}_i^+ - {b_i}}.
\end{equation*}
By further substitute $\hat{F}_{k+1}$ for  $\hat{V}^\pbf(s_{k+1})$, we have the probability mass function of $\hat{V}^\pbf(s_{k+1})$ and thus its cumulative distribution function.
For example, in $M_C(G=1)$, it is
\begin{equation*}
	\pr(\hat{V}^\pbf(s_{k+1}) \leq x) = \sum_{\bm{b}} \mathbbm{1}\bigg(x \geq \frac{r_G}{1-\gamma}\prod_{i={k+1}}^{n-1} \frac{\gamma \frac{b_i}{\bar{N}_i^+}}{1 - \gamma (1-\frac{b_i}{\bar{N}_i^+})}\bigg) \prod_{i = k+1}^{n-1} \binom{\bar{N}_i^+}{b_i} p_i^{b_i} (1-p_i)^{\bar{N}_i^+ - {b_i}},
\end{equation*}
where $\mathbbm{1}(X)$ is the indicator function, $\bm{b}=(b_1, b_2, ..., b_{n-1})$ and $b_i$ ranges from $0$ to $\hat{N}^+_i$.
In conclusion, by combining Lemma \ref{lemExpressionV} (the expressions of $\hat{V}$), Lemma \ref{lemExpressionNsa} (the expressions of $\allsth{N_{s,a}}$), Lemma \ref{lemCompPBF1} and Lemma \ref{lemCompPBF2} ($\pbf$-success conditions), the success probabilities $\allsth{\psucc^\pbf}$ are solved from the arithmetic point of view.

\section{A Practical Approximation}
\label{SectionApproxSPE}
Despite having an arithmetic solution already, the actual computation process for $\pbf$-success probabilities is still rather complicated.
Therefore, we introduce a useful approximation to $\pbf$-success probabilities in this section, which is both easier to compute and more intuitive.

By examining (c) and (d) in Lemma \ref{lemExpressionV} more closely, it can be found that except for the case of $k=0$ in $M_C(G=\frac 1 \infty)$, the expressions of $\hat{V}^\pbf(s_{k+1})$ are linear to $\hat{F}_{k+1}$.
Since $\hat{F}_{k}$ itself takes the form of the product of sequence of $\frac{\gamma \hat{p}_i}{1 - \gamma (1-\hat{p}_i)}$, this inspires us to approximate the cumulative distributions of $\hat{V}^\pbf(s_{k+1})$ into log-normal distributions by applying the central limit theorem.

\begin{lemma}
	Let $X_i := \ln \frac{\gamma \hat{p}_i}{1 - \gamma (1-\hat{p}_i)}$, 
	$\mu_i := \mathbb{E}X_i$,
	$\sigma^2_i := \text{Var} \, X_i$, and $k$ be a positive integer.
	Then it holds that 
	\begin{equation*}
		\frac{\sum_{i=k}^{n-1}(X_i - \mu_i)}{\sqrt{\sum_{i=k}^{n-1}\sigma^2_i}} \xrightarrow{d} \mathcal{N}(0,1)
	\end{equation*}
	where $\mathcal{N}(0,1)$ is a standard normal distribution.
	\label{lemLognormal}
\end{lemma}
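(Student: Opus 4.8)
The plan is to read this statement as a central limit theorem for a triangular array of independent, non-identically distributed summands and to invoke the Lindeberg--Feller theorem. First I would establish independence of the $X_i$. By Rule 3 of Definition \ref{defThreeRulesNsa} the forward transition counts $N_{s_i,a^+,s_{i+1}}$ are independent binomial variables over disjoint sets of trials, so the estimates $\hat{p}_i = N_{s_i,a^+,s_{i+1}}/\bar{N}_i^+$ are independent across $i$; since each $X_i = \ln\frac{\gamma \hat{p}_i}{1-\gamma(1-\hat{p}_i)}$ is a deterministic function of the single variable $\hat{p}_i$, the $X_i$ are mutually independent.

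Next I would check that $\mu_i$ and $\sigma_i^2$ are finite and that the array is genuinely non-degenerate. Every statement here is made under the standing precondition that the traverse event $\Etrav$ has occurred, so each forward action has succeeded at least once and $\hat{p}_i \geq 1/\bar{N}_i^+ > 0$. Consequently $X_i$ takes finitely many values in a range that is bounded for each fixed $n$, its moments are finite, and $\sigma_i^2 > 0$ whenever $p_i \in (0,1)$. The essential structural point is that this is a triangular array: the law of $X_i$ depends on $n$ through the expected visit number $\bar{N}_i^+$ given by Lemma \ref{lemExpressionNsa}, so the convergence is to be understood as $n \to \infty$ with $k$ fixed.

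With independence and finite moments in hand, the statement reduces to verifying a Lindeberg-type condition for $s_n^2 := \sum_{i=k}^{n-1}\sigma_i^2$, which splits into two sub-claims. The first is that $s_n^2 \to \infty$: a delta-method expansion gives $\sigma_i^2 \approx g'(p_i)^2\, p_i(1-p_i)/\bar{N}_i^+$ with $g(p) = \ln\frac{\gamma p}{1-\gamma(1-p)}$, and substituting the closed forms of $\bar{N}_i^+$ from Lemma \ref{lemExpressionNsa} shows the partial sums diverge --- linearly in the prototypes where $\bar{N}_i^+$ does not grow with $n$, and logarithmically in the $H=\infty$ prototypes where $\bar{N}_i^+$ carries the factor $(n-i)$. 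The second sub-claim is asymptotic negligibility of the individual terms relative to $s_n$.

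The main obstacle is this second sub-claim, because the $X_i$ are \emph{not} uniformly bounded: on the rare event $\hat{p}_i = 1/\bar{N}_i^+$ the value of $X_i$ is of order $-\ln \bar{N}_i^+$, so I cannot simply argue that the truncation indicator $\mathbbm{1}(|X_i - \mu_i| > \eta s_n)$ vanishes for large $n$. I would handle this by passing to a Lyapunov condition with a small exponent $\delta > 0$ and bounding $\mathbb{E}|X_i - \mu_i|^{2+\delta}$ directly: the probability that $\hat{p}_i$ lands in its extreme lower tail decays geometrically in $\bar{N}_i^+$ (it is $\binom{\bar{N}_i^+}{b} p_i^{b} (1-p_i)^{\bar{N}_i^+-b}$ for small $b$), while the magnitude of $X_i$ there grows only logarithmically, so these contributions are dominated and $\sum_i \mathbb{E}|X_i-\mu_i|^{2+\delta} / s_n^{2+\delta} \to 0$. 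Once the Lyapunov, and hence Lindeberg, condition is confirmed, the Lindeberg--Feller theorem yields the claimed convergence to $\mathcal{N}(0,1)$; everything outside the tail estimate is routine bookkeeping.
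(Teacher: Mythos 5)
Your proposal takes exactly the route the paper intends: its own proof is only the one-line sketch ``by applying Lindeberg's Central Limit Theorem,'' with the full argument explicitly omitted for space, and you invoke the same Lindeberg--Feller theorem for a triangular array of independent, non-identically distributed summands. Your filled-in details --- independence of the $\hat{p}_i$ via Rule 3 of Definition \ref{defThreeRulesNsa}, the observation that the law of $X_i$ depends on $n$ through $\bar{N}_i^+$, the divergence rates of $s_n^2$ from Lemma \ref{lemExpressionNsa}, and the Lyapunov bound handling the logarithmically unbounded lower tail of $X_i$ --- are correct and constitute precisely the verification the paper leaves out.
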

\begin{proof}
	\textbf{(sketch)} By applying Lindeberg's Central Limit Theorem.
\end{proof}

For the sake of space and relevance, the full proof of this lemma will not be presented here.
The next lemma states that $\hat{F}_{k}$ can be approximated to log-normal distribution.

\begin{lemma}
	If $n$ is sufficiently large, then we have the approximation
	\begin{equation*}
		\hat{F}_{k} \dotsim \ln \mathcal{N}(\sum_{i=k}^{n-1}\mu_i, \sum_{i=k}^{n-1}\sigma^2_i),
	\end{equation*} 
	where $\ln\mathcal{N}(0,1)$ is a standard log-normal distribution such that $\ln X \sim \mathcal{N}(0,1)$ is equivalent to $X \sim \ln\mathcal{N}(0,1)$.
	\label{lemLognormalF}
\end{lemma}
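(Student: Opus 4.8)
The plan is to reduce the statement to the preceding lemma by passing to logarithms, since $\hat{F}_k$ is by construction a product of exactly the factors whose logarithms are the $X_i$.

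First I would record the exact identity
\[
	\ln \hat{F}_k = \sum_{i=k}^{n-1} \ln \frac{\gamma \hat{p}_i}{1-\gamma(1-\hat{p}_i)} = \sum_{i=k}^{n-1} X_i,
\]
which holds term by term from the definitions of $\hat{F}_k$ and $X_i$. The summands $X_i$ are independent because each is a function of $\hat{p}_i$ alone, and by Rule 3 of Definition \ref{defThreeRulesNsa} the transition counts $N_{s_i,a^+,s_{i+1}}$, and hence the estimates $\hat{p}_i$, arise from independent binomial trials across distinct chain states. Thus $\ln \hat{F}_k$ is a sum of independent terms with mean $\sum_{i=k}^{n-1}\mu_i$ and variance $\sum_{i=k}^{n-1}\sigma_i^2$.

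Next I would invoke Lemma \ref{lemLognormal}, which says that for large $n$ the standardized partial sum $(\sum_{i=k}^{n-1}(X_i-\mu_i))/\sqrt{\sum_{i=k}^{n-1}\sigma_i^2}$ is approximately $\mathcal{N}(0,1)$. Un-standardizing is the affine map $z \mapsto \sqrt{\sum_{i=k}^{n-1}\sigma_i^2}\,z + \sum_{i=k}^{n-1}\mu_i$, which carries a normal law to a normal law, so $\ln \hat{F}_k \dotsim \mathcal{N}(\sum_{i=k}^{n-1}\mu_i, \sum_{i=k}^{n-1}\sigma_i^2)$. Finally, since $x \mapsto e^x$ is continuous and strictly increasing, and since by the definition recalled in the statement $\ln Y \sim \mathcal{N}(\mu,\sigma^2)$ is equivalent to $Y \sim \ln\mathcal{N}(\mu,\sigma^2)$, exponentiating the approximate normality of $\ln \hat{F}_k$ yields the claimed approximate log-normality of $\hat{F}_k$ with the same parameters.

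The main obstacle is conceptual rather than computational. Lemma \ref{lemLognormal} is an asymptotic convergence-in-distribution statement about the \emph{standardized} sum, whereas the conclusion asserts an approximate distribution for the \emph{un-standardized} $\hat{F}_k$ at a finite (if large) $n$. The un-standardization step and the transfer of the approximation through $\exp$ are only heuristically valid at finite $n$; making them fully rigorous would require controlling the approximation error, for instance via a Berry--Esseen bound combined with a uniform-integrability argument to push the error through the exponential, and checking that $\sum_{i=k}^{n-1}\sigma_i^2$ stays bounded away from $0$ as $n$ grows. Because the lemma claims only a $\dotsim$ relation, I would present the \emph{take-log, apply CLT, un-standardize, exponentiate} chain as the intended justification and explicitly flag the finite-$n$ error as the place where an approximation rather than an exact limit is invoked.
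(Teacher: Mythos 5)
Your proposal matches the paper's own proof essentially step for step: the paper likewise uses the identity $\ln \hat{F}_k = \sum_{i=k}^{n-1} X_i$, invokes Lemma \ref{lemLognormal}, un-standardizes, and exponentiates to obtain the log-normal approximation. Your added remarks on independence of the $X_i$ and on the heuristic finite-$n$ nature of the $\dotsim$ step are sound commentary but do not change the route, which is the same as the paper's.
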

\begin{proof}
	Because $\hat{F_k} = \prod_{i=k}^{n-1} \frac{\gamma \hat{p}_i}{1 - \gamma (1-\hat{p}_i)}$, we have $\ln \hat{F_k} = \sum_{i=k}^{n-1} X_i$.
	Then by Lemma \ref{lemLognormal}, we have $	\frac{\ln \hat{F}_{k} - \sum_{i=k}^{n-1}\mu_i}{\sqrt{\sum_{i=k}^{n-1}\sigma^2_i}} \xrightarrow{d} \mathcal{N}(0,1)$. 
	Therefore, if $n$ is sufficiently large, we have $\ln \hat{F}_{k} \dotsim \mathcal{N}(\sum_{i=k}^{n-1}\mu_i, \sum_{i=k}^{n-1}\sigma^2_i)$, or $\hat{F}_{k} \dotsim \ln \mathcal{N}(\sum_{i=k}^{n-1}\mu_i, \sum_{i=k}^{n-1}\sigma^2_i)$.
\end{proof}

By Lemma \ref{lemLognormalF} and Lemma \ref{lemExpressionV}, $\hat{V}^\pbf(s_{k+1})$ can also be approximated to log-normal, except for the special case of $k=0$ in $M_C(G=\frac 1 \infty)$.
However, from the empirical results in Section \ref{SectionExp} it can be observed that, not only $\hat{V}^\pbf(s_{k+1})$ follows log-normal distribution very closely in general cases, even in the case of $k=0$ in $M_C(G=\frac 1 \infty)$ the approximation is rather good as well.

The remaining task in this section is to obtain the parameters of the log-normal distributions for $\hat{V}^\pbf(s_{k+1})$.
Actually, the parameters of a log-normal distribution can be written as a function of its mean and variance. Therefore, working out the mean and variance of $\hat{V}^\pbf(s_{k+1})$, rather than the original parameters of log-normal distribution, is sufficient for the purpose.

The following lemma, a special case of the delta method \citep{oehlert1992note}, provides a useful tool for approximating the mean and variance of functions random variables.

\begin{lemma}
	Suppose $X$ is a random variable with finite moments, $\mu_X$ being its mean and $\Var X$ being its variance.
	Suppose $f$ is a sufficiently differentiable function.
	Then it holds that
	\begin{equation*}
		\mathbb{E} f(X) \approx f(\mu_X),
	\end{equation*}
	\begin{equation*}
		\Var f(X) \approx f'(\mu_X)^2 \, \Var X.
	\end{equation*}
	\label{lemDeltaMethod}
\end{lemma}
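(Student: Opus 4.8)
The plan is to prove both approximations via a first-order Taylor expansion of $f$ about the mean $\mu_X$. Writing, by Taylor's theorem with Lagrange remainder,
\[
	f(X) = f(\mu_X) + f'(\mu_X)(X-\mu_X) + \tfrac{1}{2}f''(\xi)(X-\mu_X)^2
\]
for some $\xi$ between $X$ and $\mu_X$, the strategy is to take expectations and variances of the linearised form $f(X) \approx f(\mu_X) + f'(\mu_X)(X-\mu_X)$ and argue that the quadratic remainder is negligible when $\Var X$ is small. This is exactly the regime in which the delta method is standardly invoked.

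First I would establish the mean approximation. Taking the expectation of the expansion and using linearity, the constant term $f(\mu_X)$ passes through unchanged, while the first-order term contributes $f'(\mu_X)\,\mathbb{E}(X-\mu_X) = 0$ since $\mathbb{E}(X-\mu_X)=0$ by definition of the mean. The remaining second-order term is of order $\mathbb{E}(X-\mu_X)^2 = \Var X$, which is treated as negligible in the small-dispersion regime. This leaves $\mathbb{E}f(X) \approx f(\mu_X)$.

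Next I would establish the variance approximation by working with the linear approximation $f(X) \approx f(\mu_X) + f'(\mu_X)(X-\mu_X)$, having discarded the quadratic remainder. Because $f(\mu_X)$ is a deterministic constant it does not affect the variance, and $f'(\mu_X)$ is a deterministic scalar that factors out as its square; hence $\Var f(X) \approx \Var\bigl[f'(\mu_X)(X-\mu_X)\bigr] = f'(\mu_X)^2\,\Var(X-\mu_X) = f'(\mu_X)^2\,\Var X$, using that variance is invariant under additive constants and scales by the square of multiplicative constants.

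The main obstacle is not the algebra but the justification that the discarded higher-order terms are genuinely negligible. Since the statement is phrased as an approximation ($\approx$) rather than an equality, it suffices to observe that the errors are controlled by higher moments of $X-\mu_X$ (of order $\Var X$ and smaller) together with bounds on the higher derivatives of $f$ supplied by the hypothesis that $f$ is sufficiently differentiable. A fully rigorous remainder bound would invoke the Lagrange form of Taylor's theorem displayed above, but for the purposes of this approximation the heuristic argument is adequate, and a complete treatment can be found in the cited reference on the delta method.
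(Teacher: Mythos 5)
Your proposal is correct and follows essentially the same route as the paper: a first-order Taylor expansion of $f$ about $\mu_X$, taking expectations so that the linear term vanishes via $\mathbb{E}(X-\mu_X)=0$, and computing the variance of the linearised form using invariance under additive constants and quadratic scaling under multiplication. Your explicit use of the Lagrange remainder and the observation that the discarded error is of order $\Var X$ is, if anything, slightly more careful than the paper's own sketch.
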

\begin{proof}
	By Taylor's theorem, $f(X) = f(\mu_X) + f'(\mu_X)(X-\mu_X) + \text{Remainder}$.
	Therefore, $\mean f(X) = \mean f(\mu_X) + \mean f'(\mu_X)(X-\mu_X) + \mean[\text{Remainder}] \approx \mean f(\mu_X) + \mean f'(\mu_X)(X-\mu_X) = f(\mu_X) + f'(\mu_X) \mean (X-\mu_X)$.
	Because $\mean (X-\mu_X) = 0$, we have $\mean f(X) \approx f(\mu_X)$.
	
	Similarly, $\Var f(X) \approx \Var f(\mu_X) + \Var f'(\mu_X)(X-\mu_X) = f'(\mu_X)^2 \, \Var(X-\mu_X)$.
	Because $\Var(X-\mu_X) = \Var X$, we have $\Var f(X) \approx f'(\mu_X)^2 \, \Var X.$
\end{proof}

By applying Lemma \ref{lemDeltaMethod} to $\hat{F}_{k+1}$, the mean and the variance of $\hat{F}_{k+1}$ can be derived.
Then by applying the lemma again to $\hat{V}^\pbf(s_{k+1})$, we have the mean and the variance of $\hat{V}^\pbf(s_{k+1})$ that decide the log-normal distribution.
The concrete results are as follows.

\begin{lemma}
	Let $Y_i := \frac{\gamma \hat{p}_i}{1 - \gamma (1-\hat{p}_i)}$, and $C := \frac{1-\gamma}{\gamma}$.
	Then it holds that
	\begin{equation*}
		\mean \hat{F}_{k+1} = \prod_{i=k+1}^{n-1} \mean Y_i
	\end{equation*}
	\begin{equation*}
		\Var \hat{F}_{k+1} = \prod_{i=k+1}^{n-1} \big(
			\Var Y_i + (\mean Y_i)^2
		\big) - \prod_{i=k+1}^{n-1} (\mean Y_i)^2
	\end{equation*}
	where 
	\begin{equation*}
		\mean Y_i \approx \frac{p_i}{p_i + C}
	\end{equation*}
	\begin{equation*}
	 	\Var Y_i \approx \frac{C^2}{(p_i+C)^4} \cdot \frac{p_i(1-p_i)}{\bar{N}^+_i}.
	\end{equation*}
	\label{lemMeanVarF}
\end{lemma}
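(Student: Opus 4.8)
The plan is to treat $\hat{F}_{k+1}$ as a product of \emph{independent} random factors $Y_i$, handle the two product formulas by elementary expectation algebra, and then obtain the per-factor quantities $\mean Y_i$ and $\Var Y_i$ through the delta method (Lemma \ref{lemDeltaMethod}). First I would record the key algebraic simplification: writing $f(p) := \frac{\gamma p}{1-\gamma(1-p)}$ and using $C = \frac{1-\gamma}{\gamma}$, the denominator becomes $1-\gamma+\gamma p = \gamma(C+p)$, so that $f(p) = \frac{p}{p+C}$ and hence $Y_i = f(\hat{p}_i)$. This reformulation makes both the derivative computation and the final expressions transparent.

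Next I would establish independence of the factors. Under Rule 3 of Definition \ref{defThreeRulesNsa} the transition counts are drawn from binomials that depend only on the originating state-action pair; since $\hat{p}_i = \hat{P}(s_{i+1}|s_i,a^+) = N_{i,i+1}^+/\bar{N}^+_i$ involves only observations made at state $s_i$, the variables $\hat{p}_{k+1}, \dots, \hat{p}_{n-1}$, and therefore $Y_{k+1}, \dots, Y_{n-1}$, are mutually independent. From independence, $\mean \hat{F}_{k+1} = \mean \prod_i Y_i = \prod_i \mean Y_i$, which is the first formula. For the variance I would write $\Var \hat{F}_{k+1} = \mean(\hat{F}_{k+1}^2) - (\mean \hat{F}_{k+1})^2 = \prod_i \mean(Y_i^2) - \prod_i (\mean Y_i)^2$, again by independence, and then substitute $\mean(Y_i^2) = \Var Y_i + (\mean Y_i)^2$ to recover the stated product difference.

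The remaining step is the per-factor delta-method computation. I would apply Lemma \ref{lemDeltaMethod} with $X = \hat{p}_i$ and $f$ as above. Since the estimates are unbiased, $\mean \hat{p}_i = p_i$, so $\mean Y_i \approx f(p_i) = \frac{p_i}{p_i+C}$. Differentiating gives $f'(p) = \frac{C}{(p+C)^2}$, hence $f'(p_i)^2 = \frac{C^2}{(p_i+C)^4}$. By Rules 2 and 3 of Definition \ref{defThreeRulesNsa}, $\hat{p}_i$ is a scaled $\text{Binomial}(\bar{N}^+_i, p_i)$, so $\Var \hat{p}_i = \frac{p_i(1-p_i)}{\bar{N}^+_i}$, and Lemma \ref{lemDeltaMethod} then yields $\Var Y_i \approx \frac{C^2}{(p_i+C)^4} \cdot \frac{p_i(1-p_i)}{\bar{N}^+_i}$.

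I expect the main subtlety to lie not in the algebra but in the two places where the modelling assumptions are invoked. The product identities are exact only because the $Y_i$ are genuinely independent, which is an artefact of the simplifying Rules 1--3 rather than of the true MDP dynamics; and the expressions for $\mean Y_i$ and $\Var Y_i$ are only approximate because the delta method discards the Taylor remainder. I would therefore present the two product formulas as exact consequences of independence while retaining the $\approx$ signs on the delta-method estimates, exactly as in the statement.
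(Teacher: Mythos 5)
Your proposal is correct and follows essentially the same route as the paper's proof: rewrite $Y_i = \hat{p}_i/(\hat{p}_i + C)$, apply the delta method (Lemma \ref{lemDeltaMethod}) with $\mean \hat{p}_i = p_i$ and $\Var \hat{p}_i = p_i(1-p_i)/\bar{N}^+_i$ to get the per-factor mean and variance, and use the independence of the factors granted by Definition \ref{defThreeRulesNsa} to turn $\mean \hat{F}_{k+1}$ and $\mean \hat{F}_{k+1}^2$ into products. If anything, your explicit appeal to \emph{mutual independence} is slightly more careful than the paper, which invokes only that the $Y_i$ are ``uncorrelated'' even though the factorization $\mean\big[\prod_i Y_i^2\big] = \prod_i \mean Y_i^2$ genuinely requires independence.
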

\begin{proof}
	By Definition \ref{defThreeRulesNsa} and $\hat{p}_i = \frac{N_{i,i}^+}{N_i^+}$, we have $\mean \hat{p}_i = p_i$ and $\Var \hat{p}_i = \frac{p_i(1-p_i)}{\bar{N}^+_i}$.
	Because $Y_i := \frac{\gamma \hat{p}_i}{1 - \gamma (1-\hat{p}_i)} = \frac{\hat{p}_i}{\hat{p}_i + C}$, 
	by applying Lemma \ref{lemDeltaMethod} we have $\mean Y_i \approx \frac{p_i}{p_i + C}$ and $\Var Y_i \approx \frac{C^2}{(p_i+C)^4} \cdot \frac{p_i(1-p_i)}{\bar{N}^+_i}$.
	Because $\allsth{N_{i,i}^+}$ are regarded as independent variables, $\allsth{Y_i}$ are uncorrelated. 
	Therefore $\mean [\prod_{i=k+1}^{n-1} Y_i] = \prod_{i=k+1}^{n-1} \mean Y_i$.
	Also it holds that $\Var [\prod_{i=k+1}^{n-1} Y_i] = \mean[\prod_{i=k+1}^{n-1} Y_i^2] - (\mean[\prod_{i=k+1}^{n-1} Y_i])^2 = \prod_{i=k+1}^{n-1} \mean Y_i^2 - \prod_{i=k+1}^{n-1} (\mean Y_i)^2 = \prod_{i=k+1}^{n-1} \big(\Var Y_i + (\mean Y_i)^2 \big) - \prod_{i=k+1}^{n-1} (\mean Y_i)^2$.
	Since $\hat{F}_{k+1} = \prod_{i=k+1}^{n-1} Y_i$, the lemma is proved.
\end{proof}

\begin{lemma}
	The following holds for $\hat{V}^\pbf(s_{k+1})$ with $0 \leq k < n$ in corresponding $M_C$.
	\begin{description}
		\item[(a) $\bm{M_C(G=1)}$]
		\begin{align*}
			\mean \hat{V}^\pbf(s_{k+1}) &= \frac{r_G}{1-\gamma} \, \mean \hat{F}_{k+1}
		\\
			\Var \hat{V}^\pbf(s_{k+1}) &= \frac{r_G^2}{(1-\gamma)^2} \, \Var \hat{F}_{k+1}.
		\end{align*}
		
		\item[(b) $\bm{M_C(G=\frac 1 \infty)}$, $\bm{\pbf}$ with $\bm{k \neq 0}$]
		\begin{align*}
			\mean \hat{V}^\pbf(s_{k+1}) &= (r_G + \frac{\gamma}{1-\gamma} r_D) \, \mean \hat{F}_{k+1}
		\\
			\Var \hat{V}^\pbf(s_{k+1}) &= (r_G + \frac{\gamma}{1-\gamma} r_D)^2 \, \Var \hat{F}_{k+1}.
		\end{align*}
		
		\item[(c) $\bm{M_C(G=\frac 1 \infty)}$, $\bm{\pi^{-+}_0}$]
		\begin{align*}
			\mean \hat{V}^{\pi^{-+}_0}(s_{1}) &\approx \frac{r_G}{1 - \gamma \, \mean \hat{F}_{1}} \, \mean \hat{F}_{1}
		\\
			\Var \hat{V}^{\pi^{-+}_0}(s_{1}) &\approx \frac{r_G^2}{(1 - \gamma \, \mean \hat{F}_{1}) ^ 4} \, \Var \hat{F}_{1}.
		\end{align*}
	\end{description}
	\label{lemMeanVarV}
\end{lemma}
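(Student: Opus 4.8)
The plan is to split into the three stated cases according to whether the relevant state value is a \emph{linear} or a \emph{nonlinear} function of $\hat{F}_{k+1}$, drawing the closed-form value expressions from Lemma~\ref{lemExpressionV} and invoking the delta method of Lemma~\ref{lemDeltaMethod} only where nonlinearity actually appears. The mean and variance of $\hat{F}_{k+1}$ themselves are taken as given from Lemma~\ref{lemMeanVarF}.

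First, for cases (a) and (b) I would observe that Lemma~\ref{lemExpressionV}(c)(d) writes $\hat{V}^\pbf(s_{k+1})$ as a constant multiple of $\hat{F}_{k+1}$: in $M_C(G=1)$ the constant is $\frac{r_G}{1-\gamma}$, while in $M_C(G=\frac 1 \infty)$ with $k \neq 0$ it is $r_G + \frac{\gamma}{1-\gamma} r_D$. Writing $\hat{V}^\pbf(s_{k+1}) = c\,\hat{F}_{k+1}$, both results follow at once from linearity of expectation, $\mean(c\,\hat{F}_{k+1}) = c\,\mean \hat{F}_{k+1}$, and the scaling property of variance, $\Var(c\,\hat{F}_{k+1}) = c^2\,\Var \hat{F}_{k+1}$. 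These two identities dispose of (a) and (b) exactly, with no approximation beyond those already incurred for $\hat{F}_{k+1}$ in Lemma~\ref{lemMeanVarF}.

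The only substantive case is (c), where Lemma~\ref{lemExpressionV}(d) with $k=0$ gives the nonlinear expression $\hat{V}^{\pi^{-+}_0}(s_1) = g(\hat{F}_1)$ with $g(F) := \frac{r_G F}{1 - \gamma F}$. Here I would apply Lemma~\ref{lemDeltaMethod} with $X = \hat{F}_1$, yielding $\mean \hat{V}^{\pi^{-+}_0}(s_1) \approx g(\mean \hat{F}_1) = \frac{r_G\, \mean \hat{F}_1}{1 - \gamma\, \mean \hat{F}_1}$ and $\Var \hat{V}^{\pi^{-+}_0}(s_1) \approx g'(\mean \hat{F}_1)^2\,\Var \hat{F}_1$. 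A one-line differentiation, $g'(F) = \frac{r_G(1-\gamma F) + r_G \gamma F}{(1-\gamma F)^2} = \frac{r_G}{(1-\gamma F)^2}$, gives $g'(\mean \hat{F}_1)^2 = \frac{r_G^2}{(1-\gamma\, \mean \hat{F}_1)^4}$, which is exactly the stated variance.

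I do not anticipate a genuine obstacle: once the expressions of Lemma~\ref{lemExpressionV} are substituted, (a) and (b) are exact scalings and (c) reduces to a single-variable delta-method computation with the elementary derivative above. If anything warrants a remark, it is that the approximation in (c) is of the same heuristic order as the delta method already adopted throughout this section; since the paper defers empirical validation of the overall approximation to Section~\ref{SectionExp}, the proof itself need only carry out the substitution and the one derivative.
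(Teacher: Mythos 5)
Your proposal is correct and follows essentially the same route as the paper: cases (a) and (b) are dispatched exactly by the scaling identities $\mean[c\,\hat{F}_{k+1}] = c\,\mean \hat{F}_{k+1}$ and $\Var[c\,\hat{F}_{k+1}] = c^2\,\Var \hat{F}_{k+1}$, while case (c) is handled by the delta method of Lemma \ref{lemDeltaMethod} applied to $g(F) = \frac{r_G F}{1-\gamma F}$, whose derivative $g'(F) = \frac{r_G}{(1-\gamma F)^2}$ yields the stated fourth-power denominator. Your version merely makes explicit the one-line differentiation that the paper leaves implicit; no substantive difference.
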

\begin{proof}
	Proving (a) and (b) are trivial since $\mean[cX] = c\, \mean X$ and $\Var[cX] = c^2\,\Var X$ with $c$ being any constant.
	For case (c), since $V^{\pi^{-+}_0}(s_1) = \frac{F_1}{1-\gamma F_1} r_G$, we have to apply delta method.
	By Lemma \ref{lemDeltaMethod}, $\Var \hat{V}^{\pi^{-+}_0}(s_{1}) \approx ((\hat{V}^{\pi^{-+}_0})' (\mean \hat{F_1}))^2 \,\Var \hat{F}_1 = \big(\frac{r_G}{(1 - \gamma \, \mean \hat{F}_{1}) ^ 2}\big)^2 \, \Var \hat{F}_{1}$.
\end{proof}

By combining Lemma \ref{lemMeanVarF} and Lemma \ref{lemMeanVarV}, we are now able to approximate the mean and the variance of $\hat{V}^\pbf(s_{k+1})$.
Because the probability mass function of $\hat{V}^\pbf(s_{k+1})$ can be approximated by a log-normal, and the probability of $\pi$-success event equals to the joint probability of the satisfaction of conditions in Lemma \ref{lemCompPBF1} and Lemma \ref{lemCompPBF2}, the $\pi$-success probabilities can now be approximated as well.

As an example, the probability of $\pi^{-+}_0$-success event, $\psucc^{\pi^{-+}_0}_{M_C,\ops(m),\tau_m}$, can be approximated by the following theorem.

\begin{theorem}
	Let $\mu' := \ln \frac
			{\mu}
			{  \sqrt{ 1 + \sigma^2 / \mu^2 }  }
		$ and
	$\sigma' := 
		\sqrt{ \ln( 1 + \frac{\sigma^2}{\mu^2} ) }
	$ where $\mu = \mean \hat{V}^{\pi^{-+}_0}(s_{1})$ and $\sigma^2 = \Var \hat{V}^{\pi^{-+}_0}(s_{1})$ are computed as in Lemma \ref{lemMeanVarV}.
	Then it holds that
	\begin{equation*}
		\psucc^{\pi^{-+}_0}_{M_C,\ops(m),\tau_m} \approx \bigg[ 1 - \Phi \bigg(\frac{\ln \frac{r_D}{1-\gamma} - \mu'}{\sigma'} \bigg) \bigg] \prod_{i=1}^{n-1}(1-(1-p_i)^m)
	\end{equation*}
	where $\Phi$ is the cumulative distribution function of the standard normal distribution.
	\label{theoApproxPiSPE}
\end{theorem}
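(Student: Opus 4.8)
The plan is to start from the factorization already in hand rather than re-deriving it. By the second-level decomposition (Lemma \ref{lemL2Decomp}) and Theorem \ref{theoPiSPE}, the quantity $\psucc^{\pi^{-+}_0}_{M_C,\ops(m),\tau_m}$ factors as the conditional success probability $\pr(\hat{V}^{\pi^{-+}_0}(s_1) > \frac{r_D}{1-\gamma}|\allsth{N_{s,a}})$ times the traverse probability $\pr(\Etrav|m) = \prod_{i=1}^{n-1}(1-(1-p_i)^m)$ from Theorem \ref{theoTrav}. The second factor is already exactly the product appearing in the claim, so the entire task reduces to approximating the single tail probability $\pr(\hat{V}^{\pi^{-+}_0}(s_1) > \frac{r_D}{1-\gamma})$.

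Next I would establish that $\hat{V}^{\pi^{-+}_0}(s_1)$ is approximately log-normal. By Lemma \ref{lemLognormalF}, $\hat{F}_1$ is approximately log-normal for large $n$; in $M_C(G=1)$ we have $\hat{V}^{\pi^{-+}_0}(s_1) = \frac{r_G}{1-\gamma}\hat{F}_1$, a positive scalar multiple of $\hat{F}_1$ and hence itself log-normal, while in $M_C(G=\frac{1}{\infty})$ the map $\hat{F}_1 \mapsto \frac{r_G}{1-\gamma\hat{F}_1}\hat{F}_1$ is nonlinear but is covered by the moment-matched log-normal approximation noted after Lemma \ref{lemLognormalF}. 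In either case I would read off $\mu = \mean\hat{V}^{\pi^{-+}_0}(s_1)$ and $\sigma^2 = \Var\hat{V}^{\pi^{-+}_0}(s_1)$ from Lemma \ref{lemMeanVarV}, then convert them to the natural log-normal parameters via the standard moment-matching identities $\mu' = \ln\frac{\mu}{\sqrt{1+\sigma^2/\mu^2}}$ and $\sigma'^2 = \ln(1+\sigma^2/\mu^2)$, which are precisely the $\mu'$ and $\sigma'$ fixed in the statement.

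With the surrogate $X \dotsim \ln\mathcal{N}(\mu',\sigma'^2)$ in place, the tail is routine: since $\ln X \sim \mathcal{N}(\mu',\sigma'^2)$, standardizing gives $\pr(X > c) = 1 - \Phi\big(\frac{\ln c - \mu'}{\sigma'}\big)$, and setting $c = \frac{r_D}{1-\gamma}$ produces exactly the bracketed factor in the claim. Multiplying this conditional approximation by the traverse probability $\prod_{i=1}^{n-1}(1-(1-p_i)^m)$ then closes the argument.

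The main obstacle is justifying the log-normal surrogate rather than executing the computation. For $M_C(G=1)$ this is clean, but in the $k=0$ case of $M_C(G=\frac{1}{\infty})$ the nonlinearity of $\frac{F_1}{1-\gamma F_1}$ means $\hat{V}^{\pi^{-+}_0}(s_1)$ is not exactly log-normal even when $\hat{F}_1$ is; here one leans on the delta-method moments of Lemma \ref{lemMeanVarV}(c) together with a moment-matched log-normal whose fidelity is argued empirically rather than analytically (cf. Section \ref{SectionExp}). A secondary caveat is that Lemma \ref{lemLognormalF} requires $n$ large, so for short chains the Gaussian approximation to $\ln\hat{F}_1$ may be loose; I would flag this as the regime where the overall approximation is weakest.
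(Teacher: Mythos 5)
Your proposal is correct and follows essentially the same route as the paper's own proof: factor $\psucc^{\pi^{-+}_0}_{M_C,\ops(m),\tau_m}$ via Lemma \ref{lemL2Decomp} and Theorem \ref{theoPiSPE}, insert the traverse probability from Theorem \ref{theoTrav}, and approximate the tail $\pr(\hat{V}^{\pi^{-+}_0}(s_1) > \frac{r_D}{1-\gamma})$ by a moment-matched log-normal whose parameters $\mu', \sigma'$ come from Lemma \ref{lemMeanVarV}. Your added discussion justifying the log-normal surrogate (Lemma \ref{lemLognormalF}, the delta method, and the caveats for the $k=0$ case in $M_C(G=\frac{1}{\infty})$ and for small $n$) merely makes explicit what the paper's terse proof delegates to the surrounding text of Section \ref{SectionApproxSPE}.
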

\begin{proof}
	A log-normal distribution with mean $\mu$ and variance $\sigma^2$ has a cumulative distribution of $\pr(X \leq x) = \Phi(\frac{\ln x - \mu'}{\sigma'})$ where $\mu'$ and $\sigma'$ are computed as above.
	By Theorem \ref{theoPiSPE}, we have $\pr(E^{\pi^{-+}_0}|\Etrav,M_C,\ops(m),\tau_m) = \pr(\hat{V}^{\pi^{-+}_0}(s_1) > \frac{1}{1-\gamma}r_D|\allsth{N_{s,a}}) \approx 1 - \Phi \bigg(\frac{\ln \frac{r_D}{1-\gamma} - \mu'}{\sigma'} \bigg)$.
	Then by the second-level decomposition Lemma \ref{lemL2Decomp} and Theorem \ref{theoTrav}.
\end{proof}

Again for the sake of space we will not fully elaborate the approximation results for the entire $\allsth{\pbf}$ family.
The remaining ones can be easily obtained by replacing the $\pi$-success conditions according to Lemma \ref{lemCompPBF1} and Lemma \ref{lemCompPBF2}.
The computational complexity of this approximation is only $O(n)$, and therefore is more computationally efficient than applying Theorem \ref{theoPiSPE} directly be computing the exact distribution of $\hat{V}^\pbf(s_{k+1})$.

\section{Empirical Verification}
\label{SectionExp}
We conducted several experiments to verify our main results, in particular Theorem \ref{theoTrav} for the probability of traverse event, Lemma \ref{lemExpressionNsa} for the expected visit numbers, and Theorem \ref{theoApproxPiSPE} for the approximation of $\pi$-success probabilities $\psucc^\pi$.

The experiments were carried out by executing the Optimistic Prototype Strategy (OPS) in prototype chains.
The OPS was implemented based on R-MAX algorithm \citep{brafman2002r,kakade2003sample}.
The only difference between the OPS and the original R-MAX is that the former has to decide a final output policy while the latter has not.
Therefore, in OPS, when the number of time steps reaches $\tau_m$, i.e. when the optimistic exploration ends, an additional Value Iteration process will be executed with all observations collected so far as its input, in order to decide the final output policy.
This modify does not actually change the exploration behaviour of R-MAX.
The implemented OPS is also behaviourally indistinguishable to the one defined in Definition \ref{defProtoStrategy}. 
The only parameter of OPS is $m$, just as in its own definition, and as in the original R-MAX.

In all of the following experiments, the discount factor $\gamma$ was set to 0.998, and the stopping criteria of Value Iteration algorithm was $\textit{Bellman residual}<10^{-6}$.
The goal reward $r_G$ was set to 1, while the distracting reward $r_D$ was set to 0.001, unless otherwise stated.
According to Remark \ref{remarkRewardSetting}, this setting is sufficient to ensure the always-forward policy $\pi^{-+}_0$ be the optimal one.

As for the forward transition probability $\allsth{p_i}$,
two sets of settings are used.
In the first setting, $\allsth{p_i}$ was set the same fixed value for all states; the specific values used will be stated later.
In the second setting, $\allsth{p_i}$ took first $n-1$ numbers from a pre-generated table of random real numbers, i.e. $p_i = \text{table}_i$.
The numbers in this table were uniformly randomly chosen between $0.3$ and $0.7$.
To keep the results comparable, the table was created before the following experiments, and had never been changed thereafter.

Because our experiments involved comparing empirical distributions of binary random variable to the theoretical ones, all experiments were repeated for $1,000$ times to ensure that the empirical distributions can properly reflect the actual ones.

\subsection{Verification for Traverse Probability}
\label{secVerifyTrav}
The first theoretical result to be verified is the expression for the traverse probability in Theorem \ref{theoTrav}.
What the theorem claims is rather straightforward: in all four prototype chains, the traverse probability for $\ops(m)$ at $\tau_m$ is $\prod_{i=1}^{n-1}(1-(1-p_i)^m)$, regardless of the hazardousness $H$ or the goal productivity $G$ of the chains.

\begin{figure}
	\centering
	{\includegraphics[scale=0.7,trim=1cm 9.4cm 1cm 9.2cm]{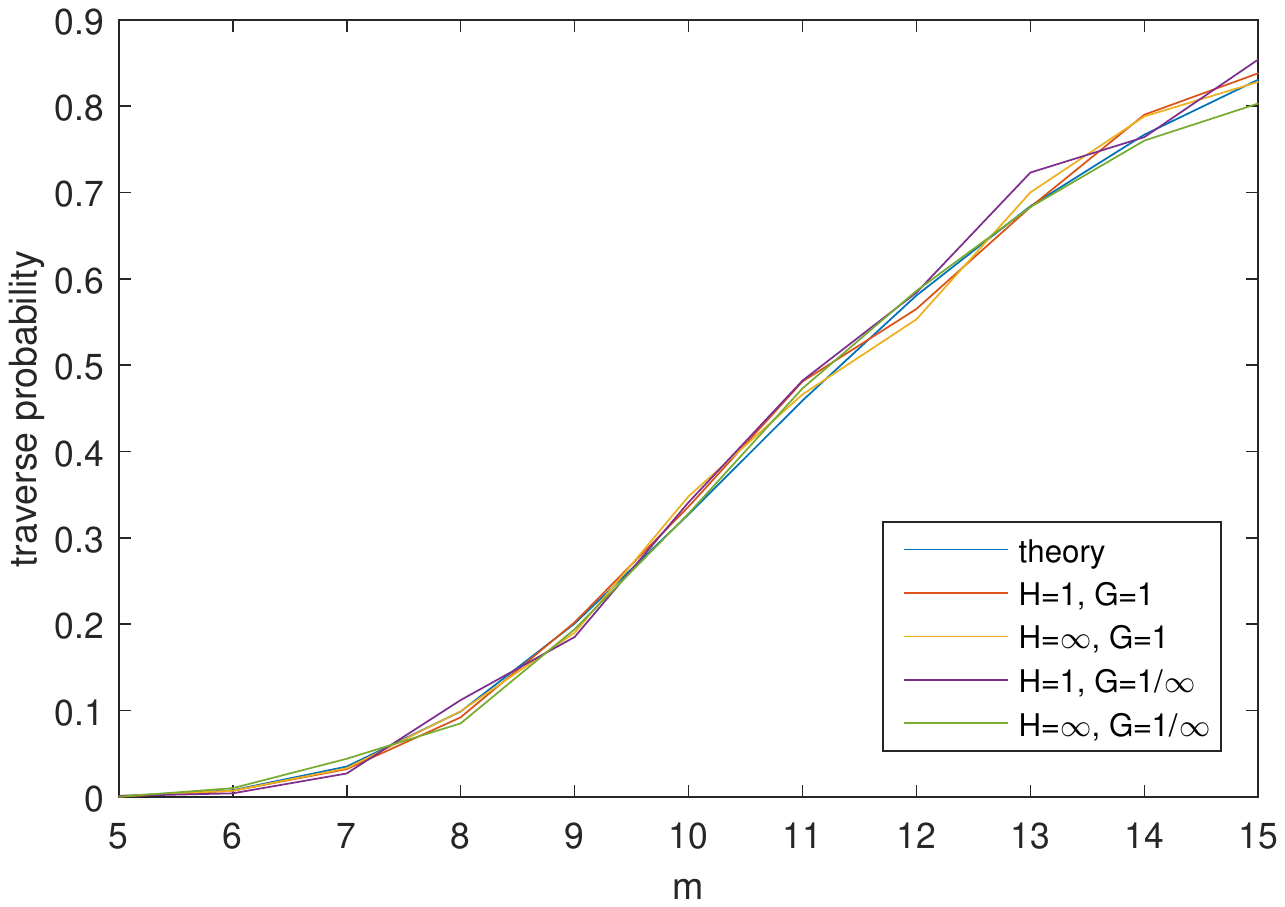}}
	\caption{Traverse probability with different $m$ under $n=40$, $\allsth{p_i}=0.3$.}
	\label{figTravTest3}
\end{figure}

We executed the OPS on all four prototype chains $M_C$ with length $n=40$ and $\allsth{p_i}=0.3$.
The exploration parameter $m$ of OPS was set from 5 to 15.
Each pair of $(M_C, m)$ were executed for 1000 times, and each run was terminated at $\tau=\tau_m$ or exceeding the maximum step budget of $300,000$.

The experimental results are shown in Figure \ref{figTravTest3}.
As can be seen from the figure, the difference between the traverse probabilities in four prototype chains is negligible, and all experimental results are very close to their theoretical values.
Friedman test was also carried out to test the null hypothesis that the results of the four prototype chains and the theoretical values come from the same distribution.
The resulting p-value was 0.8397, and thus the null hypothesis was not rejected, meaning that there is no significant difference between the traverse probability of four prototype chains and their theoretical values.

\begin{figure}
	\centering
	{\includegraphics[scale=0.7,trim=1cm 9.4cm 1cm 9.2cm]{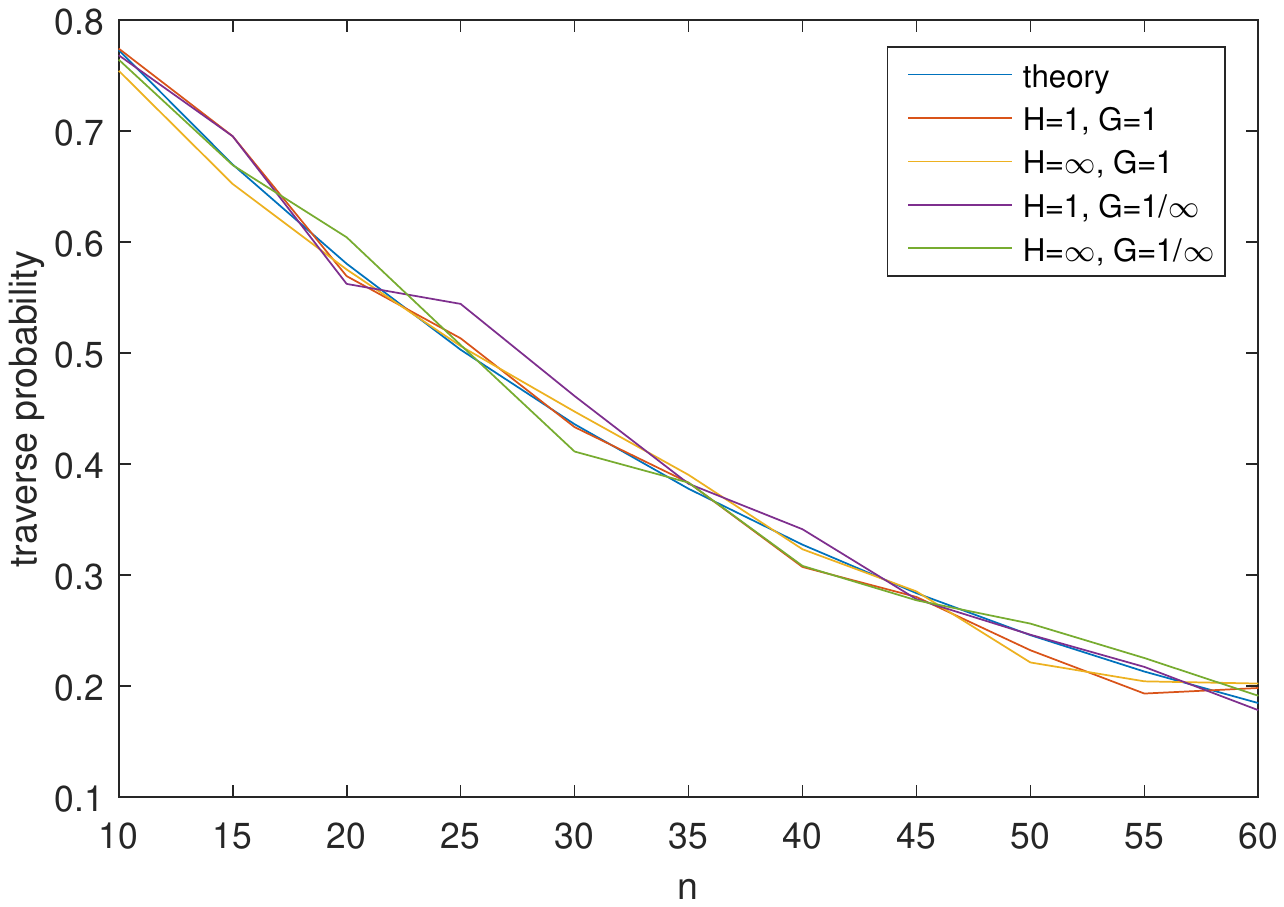}}
	\caption{Traverse probability with different $n$ under $m=10$ and $\allsth{p_i}=0.3$.}
	\label{figTravTest2}
\end{figure}

It is also interesting to see if the theoretical values are correct under different chain length.
The setting of $\allsth{p_i}=0.3$ was still used, while the length of chains $n$ was set as $10, 15, 20, ..., 60$.
The exploration parameter $m$ was set to 10.

The results are shown in Figure \ref{figTravTest2}.
Again, no significant difference between the actual traverse probabilities in four chains and their theoretical values were observed.
The result of Friedman test was 0.9665, failing to reject the null hypothesis that there is no difference between the actual and theoretical values.

\begin{figure}
	\centering
	{\includegraphics[scale=0.7,trim=1cm 9.4cm 1cm 9.2cm]{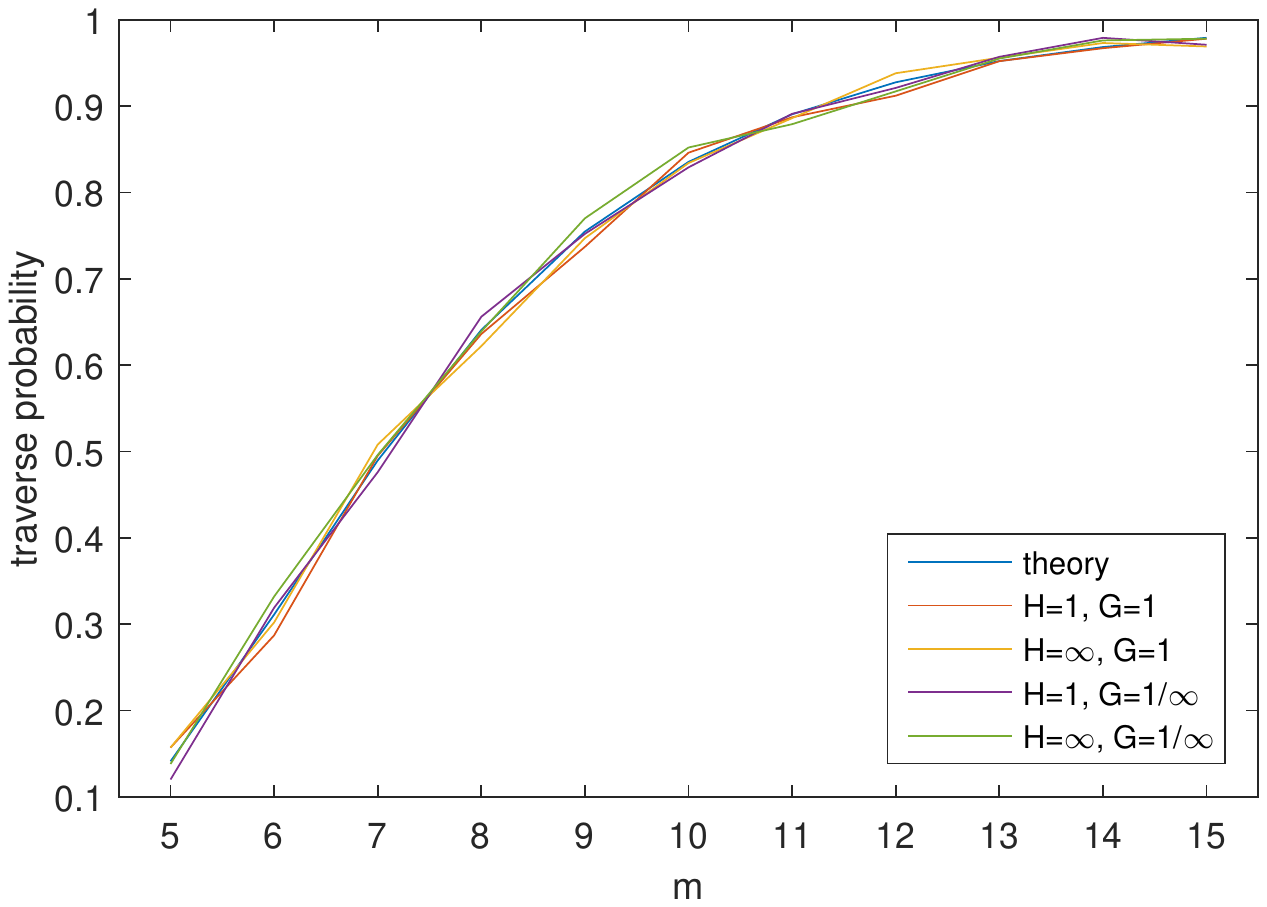}}
	\caption{Traverse probability with different $m$ under $n=40$, random $\allsth{p_i}$.}
	\label{figTravTest}
\end{figure}

Experiments on the chains with varying $\allsth{p_i}$ were conducted as well.
In these experiments, then length of chain was set to 40, and $\allsth{p_i}$ for four chains were set according to the same pre-generated random number table, as mentioned before.
The traverse probabilities under different $m$ (again, 5 to 15) were compared.

The results are shown in Figure \ref{figTravTest}.
Once more, there were no significant difference between the theoretical and the actual traverse probabilities.
The p-value of Friedman test was 0.4552, thus the null hypothesis claiming no difference was not rejected.

The above three sets of experimental results demonstrate that Theorem \ref{theoTrav} provides highly accurate predictions to the traverse probabilities.
Additionally, there are some intuitions that can be obtained from the results.
Figure \ref{figTravTest3} and Figure \ref{figTravTest} show that, in the same chain MDP, encouraging optimistic exploration by increasing $m$ will increase the traverse probability, and the rate of increase appears to be an s-shaped curve.
Figure \ref{figTravTest2}, on the other hand, shows that under the same degree of optimism, the traverse probability in larger chain MDPs are less than the smaller ones, and it can drop quite quickly (from $0.77$ to $0.18$ in our experiments) with the increase of scale.

Although the qualitative part of these intuitions can be drawn from general experiences easily, the quantitative part are not that obvious, especially for the rates of change.
By knowing the rates of change, practitioners are now able to reasonably make trade-off between the learning costs and the gain of such costs.
More importantly, because the theoretical values are highly accurate, these intuitions can be inferred directly from these theoretical values without any actual runs of experiment, eliminating the need of trial-and-error on the parameters.
This can be very helpful in real-world applications where interacting with the environment can be very expensive.

\subsection{Verification for Visit Numbers}
\label{secVerifyNsa}
The second theoretical result to be examined is Lemma \ref{lemExpressionNsa}, the expressions for expected visit numbers $\allsth{\hat{N}_s,a}$ at $\tau_m$ given that the traverse event has occurred.
The second and the third rules in Definition \ref{defThreeRulesNsa} should also be verified to see if 
these abstraction are able to conserve the dispersion of the visit numbers.
Because the occurrence of traverse event is a precondition of these results, all runs without occurrence of traverse event in this subsection were re-executed until the traverse event happens.

\begin{figure}
	\centering
	\begin{subfigure}[t]{0.46\textwidth}
		{\includegraphics[width=\textwidth,trim=6cm 9.5cm 6cm 9.5cm]{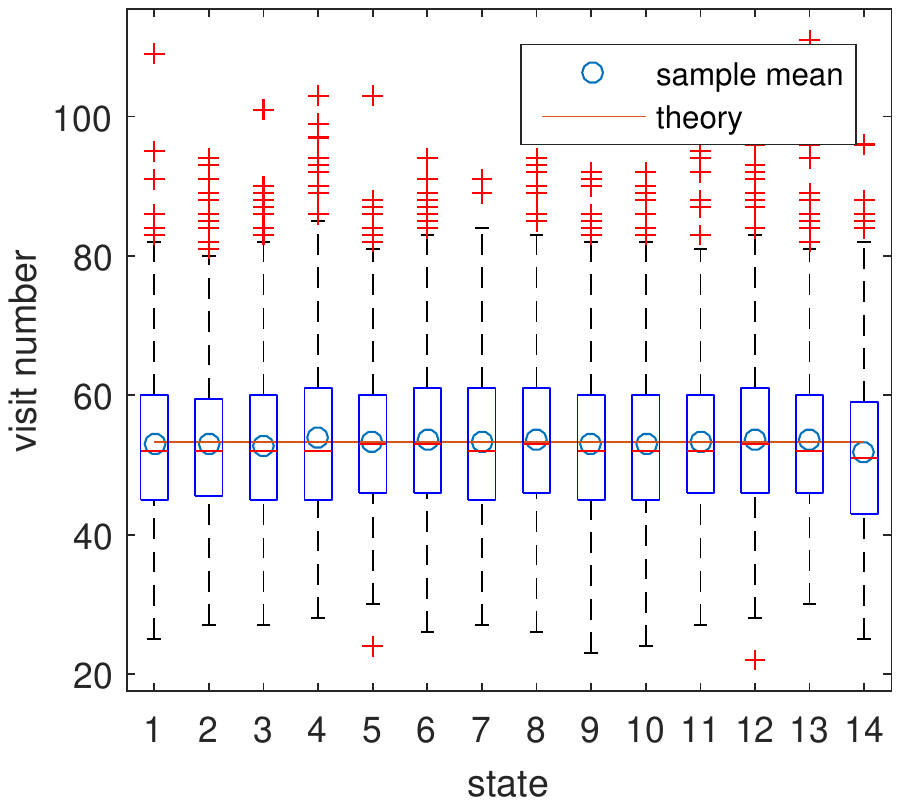}}
		\caption{$M_C(H=1, G=1)$}
	\end{subfigure}
	\begin{subfigure}[t]{0.46\textwidth}
		{\includegraphics[width=\textwidth,trim=6cm 9.5cm 6cm 9.5cm]{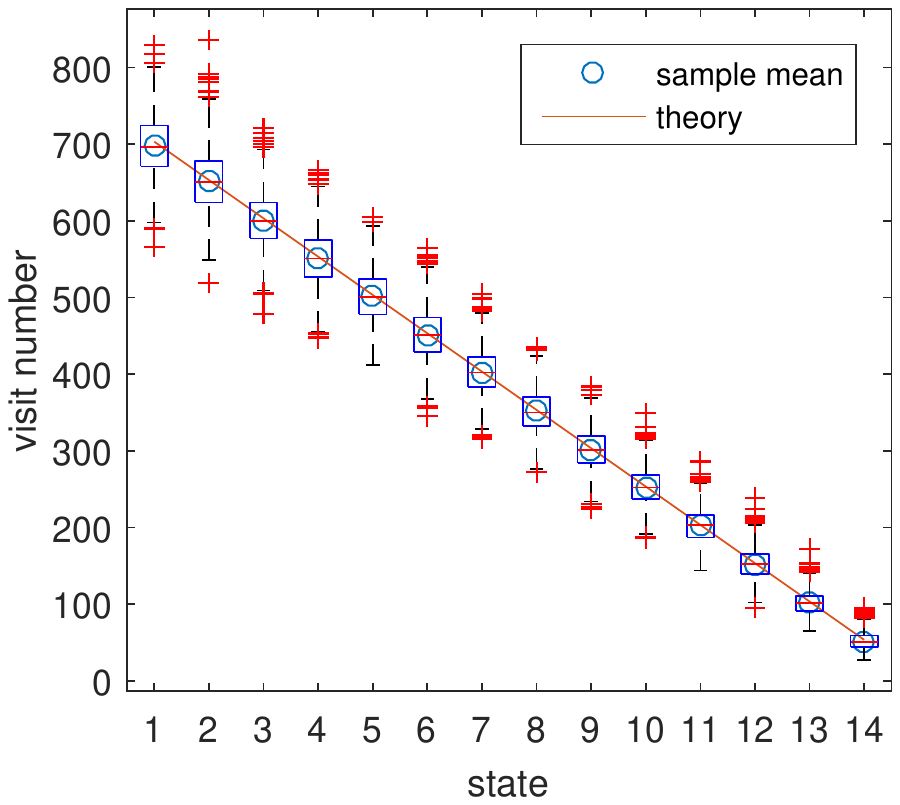}}
		\caption{$M_C(H=\infty, G=1)$}
	\end{subfigure}
	\begin{subfigure}[t]{0.46\textwidth}
		\includegraphics[width=\textwidth,trim=6cm 9.5cm 6cm 9.5cm]{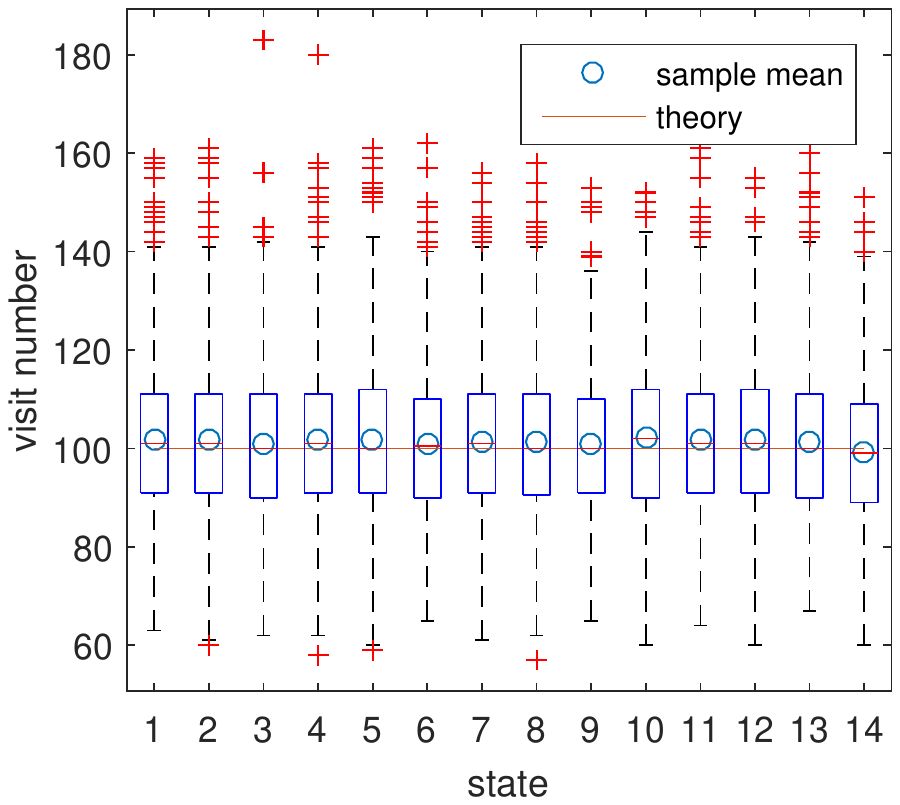}
		\caption{$M_C(H=1, G=1/\infty)$}
	\end{subfigure}
	\begin{subfigure}[t]{0.46\textwidth}
		\includegraphics[width=\textwidth,trim=6cm 9.5cm 6cm 9.5cm]{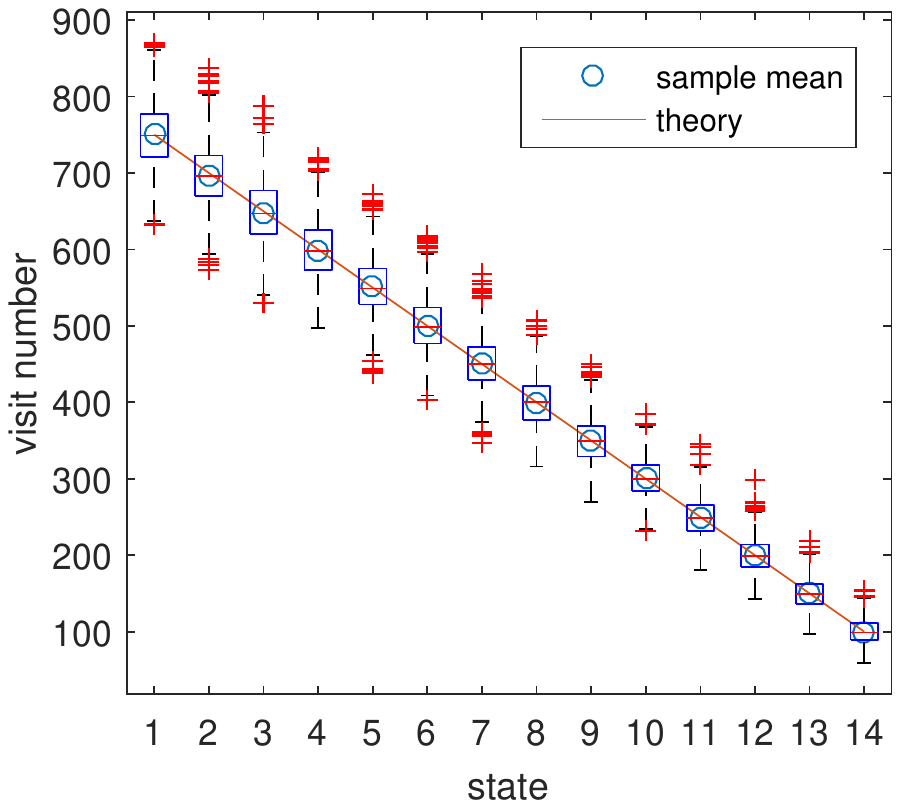}
		\caption{$M_C(H=\infty, G=1/\infty)$}
	\end{subfigure}
	\caption{Theoretical and empirical $\allsth{\bar{N}_i^+}$ in chains with $n=15$ and $\allsth{p_i}=0.3$.}
	\label{figNsa}
\end{figure}

According to Lemma \ref{lemExpressionNsa}, the expected visit numbers of backward actions and goal actions are trivially $m$, while the visit numbers of forward actions, $\bar{N}_i^+$, have different expressions in different prototype chains.
An interesting observation from these expressions is, $\bar{N}_i^+$ in $M_C(H=1)$ are invariant to the state position $i$, while in $M_C(H=\infty)$ they are linear to $i$.
If the forward transition probabilities $\allsth{p_i}$ are set identical, then $\bar{N}_i^+$ in $M_C(H=1)$ should have the same value, while in $M_C(H=\infty)$ they should be in a straight line.
Therefore, we executed OPS with $m=15$ in four prototype chains with $n=15$ and $\allsth{p_i}=0.3$ to see whether these properties actually exist.

The experimental results are shown in Figure \ref{figNsa}.
The distribution of the actual visit numbers in 1,000 runs are displayed by the box plots, and the sample means are marked by the circles.
The theoretical values are drawn in the solid line.
It is clear from the results that the expressions of expected visit numbers in Lemma \ref{lemExpressionNsa} are in accordance with the reality.
The observation mentioned above are confirmed as well: $\bar{N}_i^+$ remained the same in Figure \ref{figNsa} (a) and (c), while decreased linearly to the state position $i$ in (b) and (d).

Another important observation from Figure \ref{figNsa} is that all actual visit numbers $\allsth{N_i^+}$ showed certain degree of dispersion.
According to Rule 2 and Rule 3 in Definition \ref{defThreeRulesNsa}, the actual visit numbers $\allsth{N_i^+}$ should be dealt as if they were fixed to $\allsth{\bar{N}_i^+}$, while the dispersion of estimated transition probabilities $\allsth{\hat{p}_i}$ come from the binomial distribution followed by the transition visit numbers $\allsth{\bar{N}_{i,i}^+}$.
Since the impact of the dispersion on later analysis is propagated through $\allsth{\hat{p}_i}$, it is crucial to verify whether the dispersion are preserved through the abstraction of Definition \ref{defThreeRulesNsa}.

\begin{figure}
	\centering
	\begin{subfigure}[t]{0.46\textwidth}
		{\includegraphics[width=\textwidth,trim=6cm 9.6cm 6cm 9.0cm]{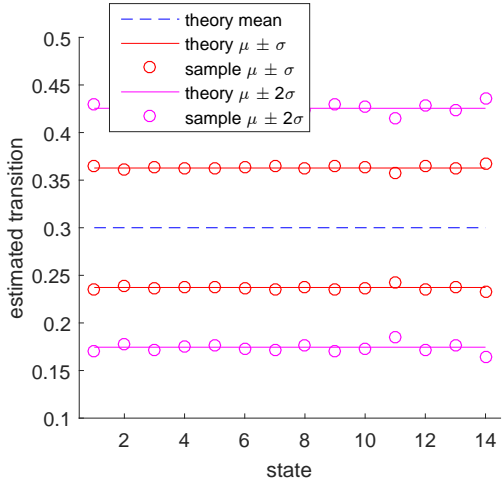}}
		\caption{$M_C(H=1, G=1)$}
	\end{subfigure}
	\begin{subfigure}[t]{0.46\textwidth}
		{\includegraphics[width=\textwidth,trim=6cm 9.6cm 6cm 9.0cm]{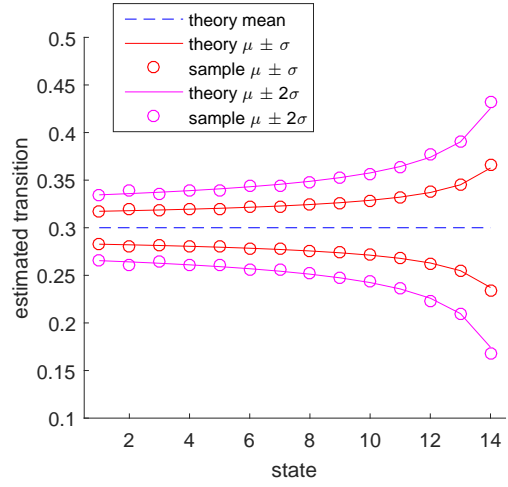}}
		\caption{$M_C(H=\infty, G=1)$}
	\end{subfigure}
	\begin{subfigure}[t]{0.46\textwidth}
		\includegraphics[width=\textwidth,trim=6cm 9.6cm 6cm 9.0cm]{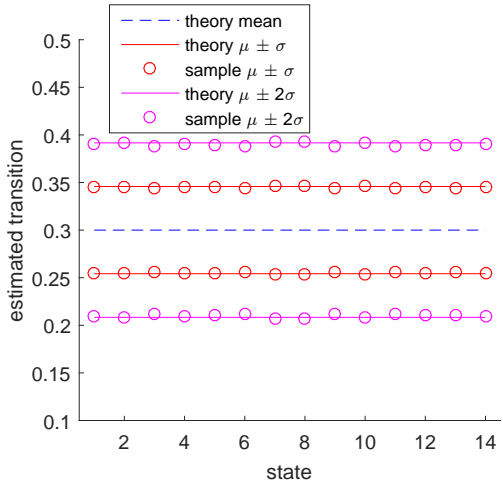}
		\caption{$M_C(H=1, G=1/\infty)$}
	\end{subfigure}
	\begin{subfigure}[t]{0.46\textwidth}
		\includegraphics[width=\textwidth,trim=6cm 9.6cm 6cm 9.0cm]{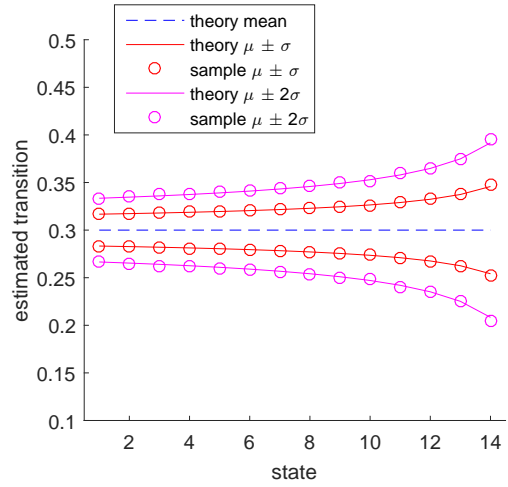}
		\caption{$M_C(H=\infty, G=1/\infty)$}
	\end{subfigure}
	\caption{Theoretical and empirical dispersion of $\allsth{\hat{p}_i}$ originated from $\allsth{N_{s,a}}$.}
	\label{figNsaDisp}
\end{figure}

Therefore, we calculated the standard deviation of  $\allsth{\hat{p}_i}$ from the experimental data above and compared them to the theoretical ones.
Specifically, according to Definition \ref{defThreeRulesNsa}, we have $N_{i,i}^+ \sim \text{Binomial}(\bar{N}_i^+, p_i)$ and  $\hat{p}_i = \frac{N_{i,i}^+}{\bar{N}_i^+}$, hence the theoretical standard deviation of $\hat{p}_i$ is $\sigma_{\hat{p}_i} = \sqrt{\frac{p_i(1-p_i)}{\hat{N}_i^+}}$.
This should match the empirical standard deviations of $\allsth{\hat{p}_i}$ if the abstraction of Definition \ref{defThreeRulesNsa} preserves the dispersion.

The results are shown in Figure \ref{figNsaDisp}.
In each plot, the dashed line in the middle is the theoretical mean $\mean \hat{p}_i = p_i$, and the pair of lines immediately above and below indicates the theoretical interval of $p_i \pm \sigma_{\hat{p}_i}$, while the outermost pair indicates $p_i \pm 2\sigma_{\hat{p}_i}$.
The empirical standard deviation from the experimental data are marked by the circles.
Clearly seen from the figure, the rules of Definition \ref{defThreeRulesNsa} effectively preserves the dispersion of estimated transitions originated from the visit numbers, which is exactly what we need for further analysis on $\pi$-success probabilities.

\subsection{Verification for Success Probability}
Now it comes to our primary result, the success probability, to be verified.
According to Theorem \ref{theoDecomp}, the $\varepsilon$-success probability $\psucce$ is merely a plain sum of $\pi$-success probabilities $\psucc^\pi$ over the set of relevant policies.
Since the $\pi$-success probabilities themselves are the product of the traverse probability $\pr(\Etrav^\pi)$ and the conditional probability $\pr(E^\pi|\Etrav^\pi)$,
and the former has already been verified in Section \ref{secVerifyTrav}, the latter is of main concern in this subsection.

As indicated in the conditions of $\pbf$-success (Lemma \ref{lemCompPBF1} and Lemma \ref{lemCompPBF2}), these probabilities are equivalent to the ones of which some certain estimated state values $\Vbf(s_{k+1})$ exceeding or being exceeded by some constants.
Section \ref{SectionApproxSPE} further provides a log-normal approximation to the cumulative distribution of $\Vbf(s_{k+1})$ to simplify the computation of theoretical $\pi$-success probabilities.
Therefore, our experiments were focused on verifying the goodness of this approximation to the distribution of $\Vbf(s_{k+1})$.

Experiments were conducted by executing OPS with $m$ ranged from 8 to 20 on four prototype chains.
As in Section \ref{secVerifyNsa}, the runs with traverse event not occurred were re-executed in order to block the effect of the possible absence of traverse event.
Nevertheless, we set length $n=20$ and the fixed transition probability $\allsth{p_i}=0.5$ to raise the probability of traverse event.
The same experiments were also conducted in four prototype chains with their $\allsth{p_i}$ set according to the pre-generated random number table.
The estimated state values $\hat{V}^{\pi^{-+}_0}(s_{1})$ at $\tau_m$ were collected for all these experiments.

\begin{figure}
	\centering
	\begin{subfigure}[t]{0.48\textwidth}
		{\includegraphics[width=\textwidth,trim=4cm 8.8cm 4cm 8.0cm]{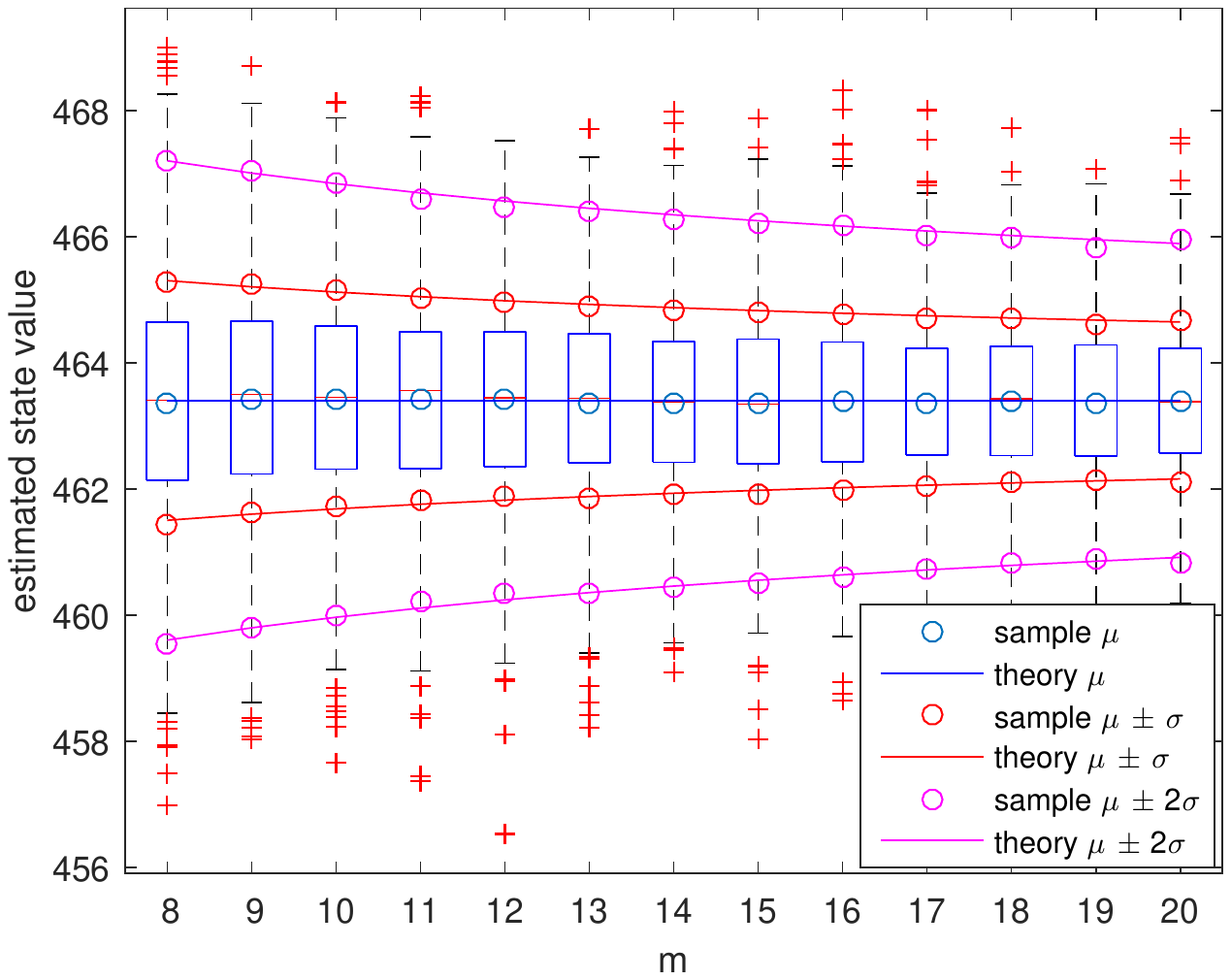}}
		\caption{$M_C(H=1, G=1)$}
	\end{subfigure}
	\begin{subfigure}[t]{0.48\textwidth}
		{\includegraphics[width=\textwidth,trim=4cm 8.8cm 4cm 8.0cm]{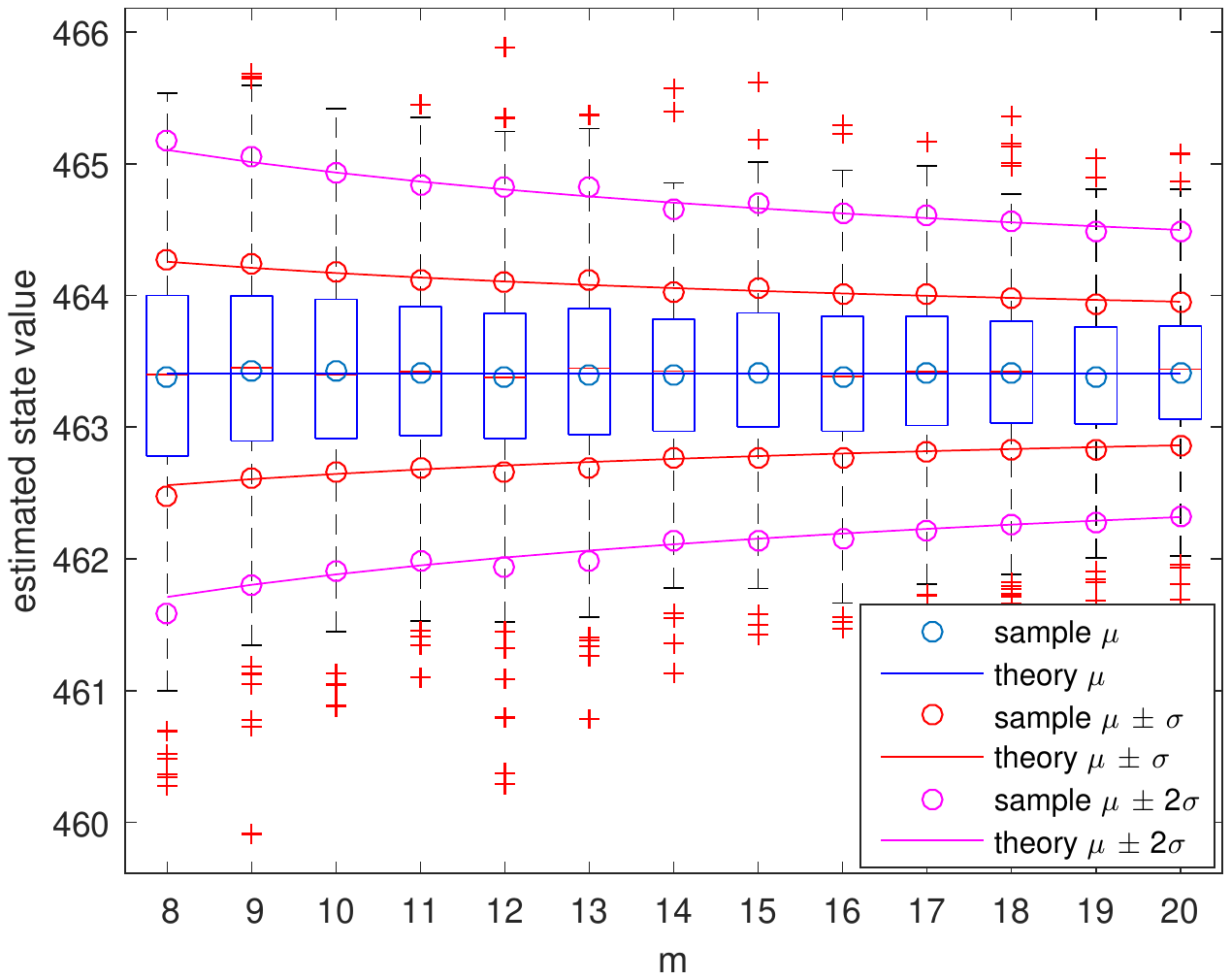}}
		\caption{$M_C(H=\infty, G=1)$}
	\end{subfigure}
	\begin{subfigure}[t]{0.48\textwidth}
		\includegraphics[width=\textwidth,trim=4cm 8.8cm 4cm 8.0cm]{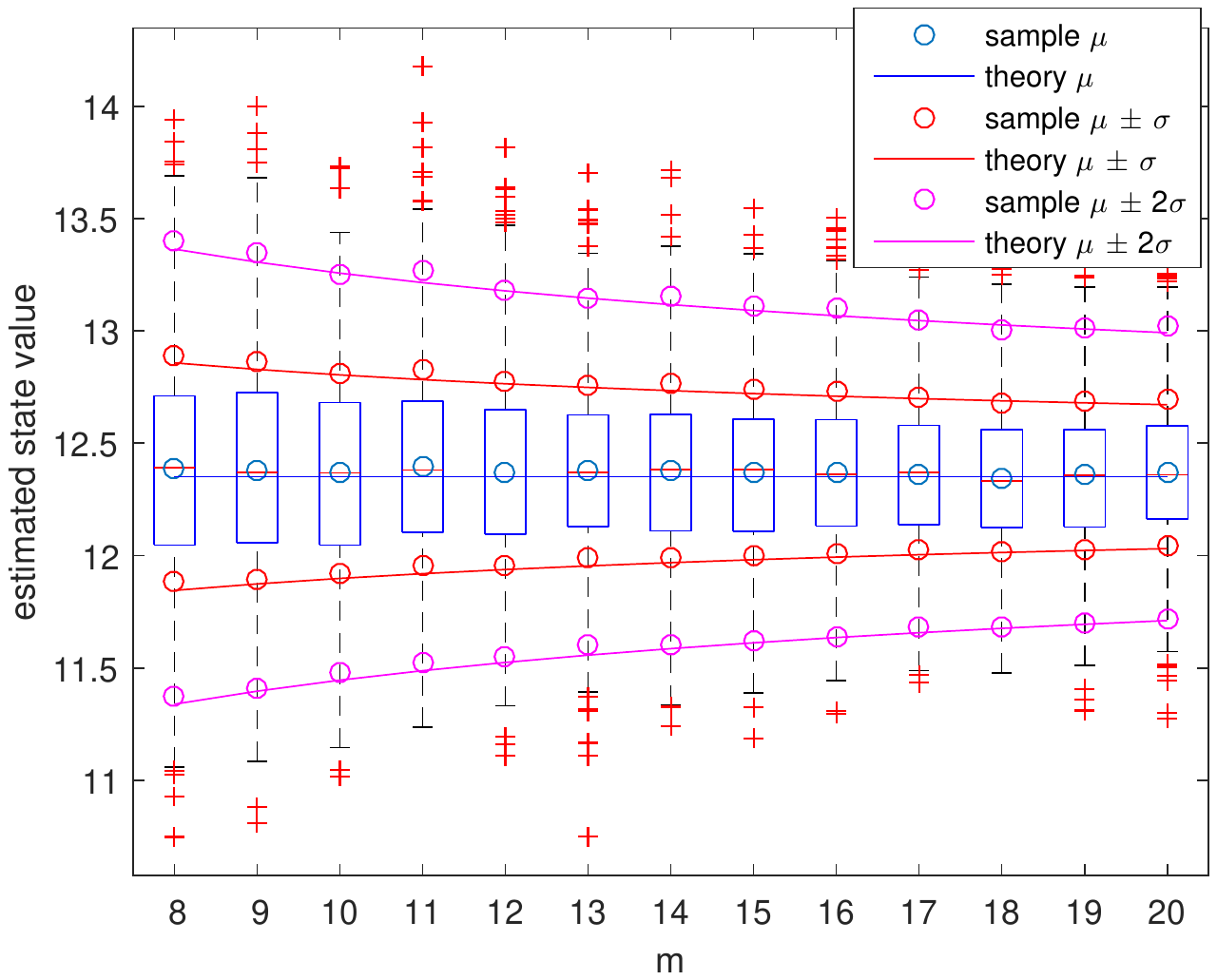}
		\caption{$M_C(H=1, G=1/\infty)$}
	\end{subfigure}
	\begin{subfigure}[t]{0.48\textwidth}
		\includegraphics[width=\textwidth,trim=4cm 8.8cm 4cm 8.0cm]{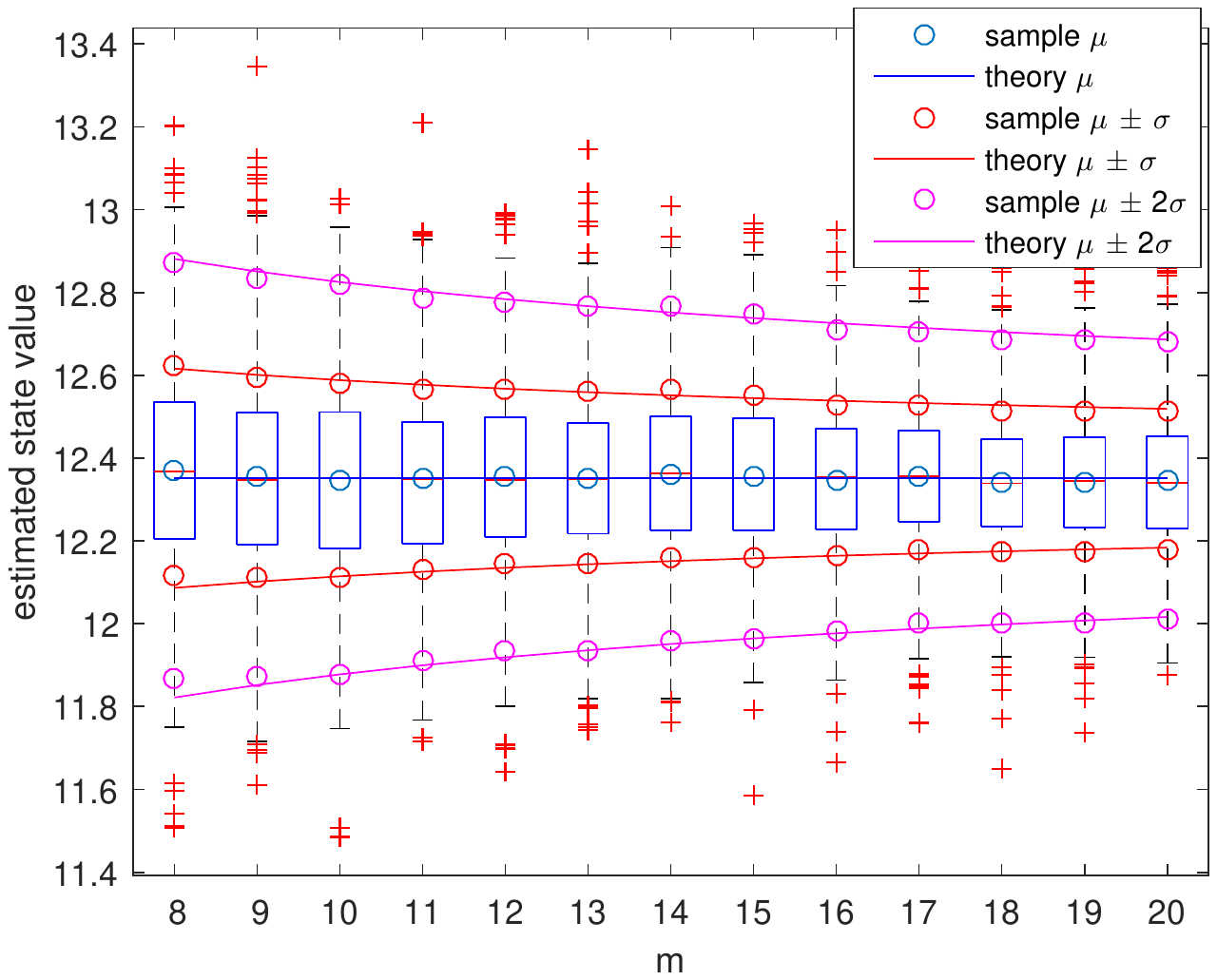}
		\caption{$M_C(H=\infty, G=1/\infty)$}
	\end{subfigure}
	\caption{Distribution of $V^{\pi^{-+}_0}(s_{1})$ with different $m$ under $n=20$ and $\allsth{p_i}=0.5$.}
	\label{figVbox}
\end{figure}

\begin{figure}
	\centering
	\begin{subfigure}[t]{0.48\textwidth}
		{\includegraphics[width=\textwidth,trim=4cm 8.8cm 4cm 8.0cm]{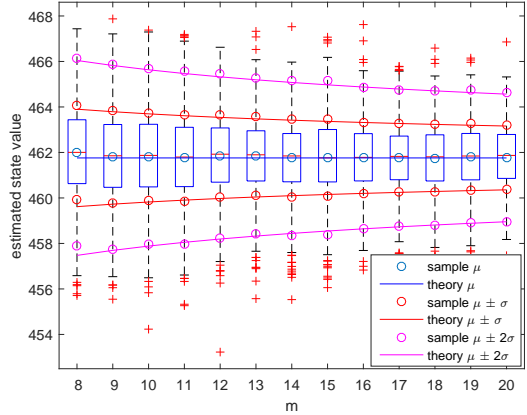}}
		\caption{$M_C(H=1, G=1)$}
	\end{subfigure}
	\begin{subfigure}[t]{0.48\textwidth}
		{\includegraphics[width=\textwidth,trim=4cm 8.8cm 4cm 8.0cm]{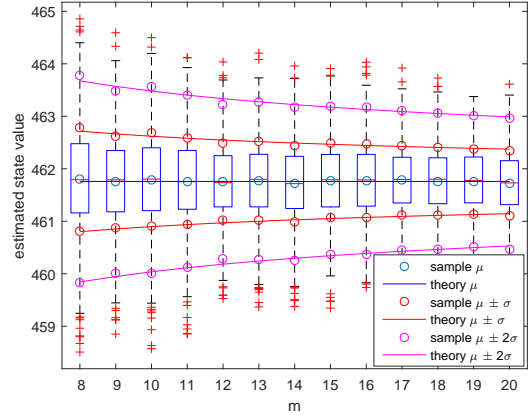}}
		\caption{$M_C(H=\infty, G=1)$}
	\end{subfigure}
	\begin{subfigure}[t]{0.48\textwidth}
		\includegraphics[width=\textwidth,trim=4cm 8.8cm 4cm 8.0cm]{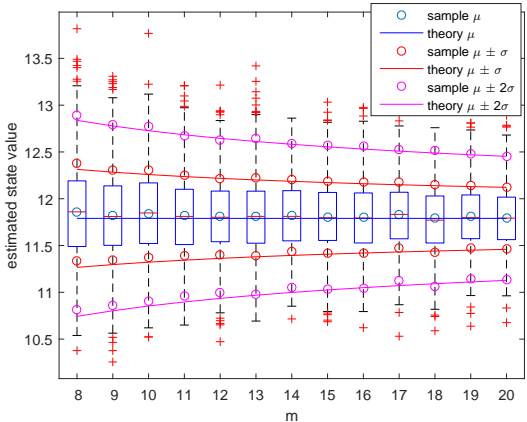}
		\caption{$M_C(H=1, G=1/\infty)$}
	\end{subfigure}
	\begin{subfigure}[t]{0.48\textwidth}
		\includegraphics[width=\textwidth,trim=4cm 8.8cm 4cm 8.0cm]{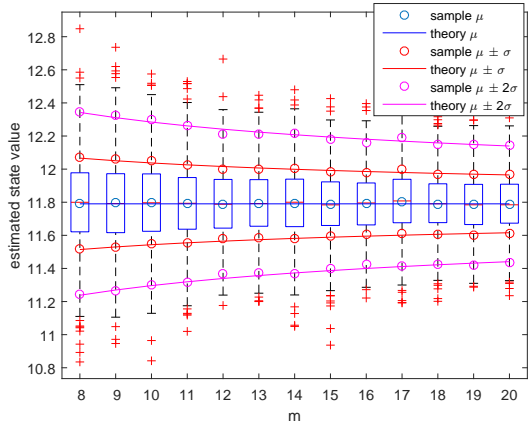}
		\caption{$M_C(H=\infty, G=1/\infty)$}
	\end{subfigure}
	\caption{Distribution of $V^{\pi^{-+}_0}(s_{1})$ with different $m$ under $n=20$ and random $\allsth{p_i}$.}
	\label{figVbox2}
\end{figure}

The theoretical and empirical distribution of $\hat{V}^{\pi^{-+}_0}(s_{1})$ are shown in Figure \ref{figVbox} for identical $\allsth{p_i}=0.5$, and in Figure \ref{figVbox2} for random $\allsth{p_i}$.
The empirical distributions are displayed in box plots.
Additionally, their sample mean and the points one and two standard deviation away are marked by circle.
Their corresponding theoretical values are drawn in solid lines.

No significant difference between the theoretical and empirical values can be observed from the figures.
One-sample Kolmogorov-Smirnov test was conducted to decide whether the null hypotheses that the data came from log-normal distributions with theoretical parameters suggested by Theorem \ref{theoApproxPiSPE} can be rejected or not.
The result was, at the 1\% significance level, none of the null hypotheses with different $m$ on different chains were rejected for all $m$ and all chains with $\allsth{p_i}=0.5$.
For chains with $\allsth{p_i}$ taken from the random number table, four rejects occurred within $4\times(20-8+1)=52$ groups of data.
One of them was $m=8$ in $M_C(H=1,G=1)$, and the rest were $m=8,10,17$ in $M_C(H=1,G=\frac 1 \infty)$. 

This is a reasonable result because as discussed in Section \ref{SectionApproxSPE}, according to Lemma \ref{lemLognormalF} and Lemma \ref{lemExpressionV}, $\hat{V}^{\pi^{-+}_0}(s_{1})$ approximates log-normal if $n$ is large enough, and if it is not the case that $G=\frac 1 \infty$.
Since no rejection occurred in $M_C(H=\infty,G=\frac 1 \infty)$, and only 3 rejections occurred within 13 groups of data in $M_C(H=1,G=\frac 1 \infty)$, it can be concluded that the log-normal approximation is highly accurate for $G=1$, and sufficiently accurate even for $G=\frac 1 \infty$.

Although the experiments above are already sufficient to support our theoretical results, it is still interesting to see how the actual success probability curve looks like.
Therefore, we conducted experiments to compare the theoretical success probability, approximated by cumulative distribution function of log-normal in Theorem \ref{theoApproxPiSPE}, with the empirical success probability of OPS running in prototype chains with $n=20$, $\allsth{p_i}=0.5$.
In order to increase the chance of the agent being distracted by backward actions so that it could be observed clearly, the distracting reward $r_D$ was set to the $0.993(1-\gamma)V^{\pi^{-+}_0}(s_{1})$.
Under this setting, the value $\hat{V}^{\pi^{-+}_0}(s_{1})$ estimated by the agent must be greater than $99.3\%$ of its real value, or the percent error must be less than $0.7\%$, to achieve a strict success, or otherwise the agent will be distracted by the backward actions.

\begin{figure}
	\centering
	{\includegraphics[scale=0.7,trim=1cm 9cm 1cm 9.2cm]{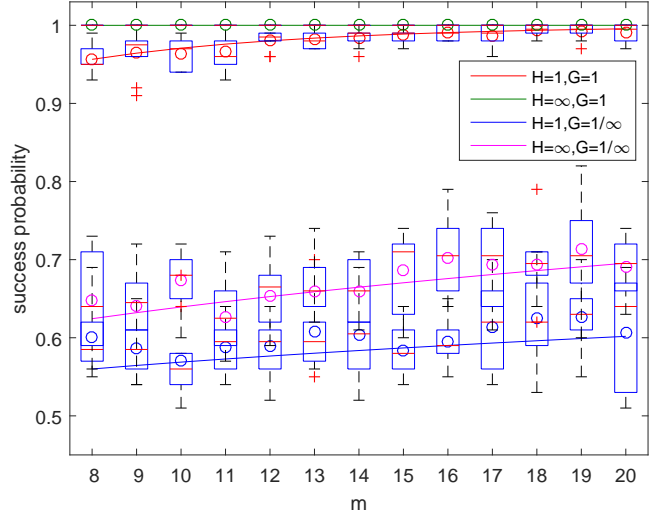}}
	\caption{Sample and theoretical $\pi^{-+}_0$-success probability with different $m$ under $n=20$, $\allsth{p_i}=0.5$, $r_D=0.993(1-\gamma)V^{\pi^{-+}_0}(s_{1})$.}
	\label{figSuccessProb}
\end{figure}

The experimental results is shown in Figure \ref{figSuccessProb}.
The sample success probability was computed from dividing the number of successful runs by the total number of runs 1000, and the results are marked by circles in the figure.
Additionally, each set of 1000 runs was uniformly separated to 10 groups, each with 100 runs, to draw the box plot for the success probability.
The approximated theoretical probabilities by Theorem \ref{theoApproxPiSPE} are displayed by the solid lines.
The four prototype chains respectively lying from the top to the bottom are $M_C(H=\infty,G=1)$, $M_C(H=1,G=1)$, $M_C(H=\infty,G=\frac 1 \infty)$, and $M_C(H=1,G=\frac 1 \infty)$.

From Figure \ref{figSuccessProb} it can be observed that our theoretical success probabilities can properly reflect their actual values.
There seems to be a slight underestimate of success probability for the theoretical results, most visible for $M_C(H=1, G=\frac 1 \infty)$.
The reason for this underestimate may be the inaccuracy of planning algorithm.
Actually, because the planning algorithm (Value Iteration in our experiments) was halted whenever Bellman residual was less than $10^{-6}$, the agent could not really tell the difference between the actions when their estimated values were very close.
This could increase the chance of fake success where if the stopping criteria was set smaller, the output policy could change.
Since this additional chance of success is not significant according the experimental results, it does not affect the validity of our approach.

More importantly, Figure \ref{figSuccessProb} demonstrates that the formulation of success probability  of exploration itself is helpful to capture and distinguish the difficulty of exploration in different MDPs.
Clearly, under the same degree of optimistic exploration, represented by $m$, the success probability of exploration in $M_C(H=\infty,G=1)$ is the highest with almost always 1, while in two $M_C(G=\frac 1 \infty)$ chains it is notably lower.
This displays the impact of goal productivity $G$.
Compared to $M_C(G=\frac 1 \infty)$, a $G=1$ indicates a much higher goal productivity, in which the fruitfulness of reaching the goal is less dependent to the hardness of arriving it.
Therefore, the values for policies always trying to achieve goal are more likely to be accurately estimated in these chains, resulting in a lower requirement for active exploration than the $G=\frac 1 \infty$ cases.

On the other hand, comparing the results with same goal productivity $G$ shows that the chains with higher hazardousness, $H=\infty$, require \textit{less} active exploration than the less hazardous ones, $H=1$.
This sounds a little bit counter-intuitive since a higher hazardousness should implicate a higher hardness of exploration by definition.
The reason behind this ostensible contradiction is actually very clear.
From the theoretical and empirical results of visit numbers in Section \ref{secVerifyNsa}, it can be discovered that in $M_C(H=\infty)$, the number of collected observations till the ending of exploration, $\tau=\tau_m$, is much greater than in $M_C(H=1)$.
In other words, the sample size used for estimating $\hat{V}$ in the cases of $H=\infty$ is larger than that of $H=1$, resulting in a higher accuracy for the former.
If we fix the success probability to the same value, then $H=\infty$ requires more observations than $H=1$, and thus the former is still harder to explore than the latter, which accords well with the intuition.
The moral is, as what we have discussed in Section \ref{secSolveQ3}, in order to compare the hardness of exploration among different MDPs, the number of observations $\tau_{\varepsilon,\delta}$ and the activeness of exploration $\theta_{\varepsilon,\delta}$ required to achieve certain outcome with respect to $(\varepsilon,\delta)$ must be considered together.

To summarize, the experimental results above verified that our analytical approach is able to predict the outcome of exploration with high accuracy.
Furthermore, they showed the usefulness of our approach in forecasting and explaining the exploration behaviour of the agent, as well as in comparing the hardness of exploration among different MDPs.

\section{Applying the Theory}
\label{SectionApplying}
Back in Sections \ref{SectionSolvingSPE} and \ref{SectionApproxSPE}, there has been a long journey climbing up the dependency graph (Figure \ref{figDependency}) to its very summit to obtain the success probability of exploration $\psucc$.
However, the actual procedure of obtaining $\psucc$ is not at all that complicated, since much of the previous analysis serves for justifying the approach rather than being a part of the necessary steps.
In this section, we provide an instance to demonstrate how the theory proposed in this paper can be applied in more general MDPs, then summarize the procedure of analysis in a short practice guide.

\subsection{An Instance of Application to General MDPs}
\label{secMaze}
In this subsection, we will demonstrate that our analytical approach is applicable not only to the prototypes, but also to the general MDPs.

\begin{figure}
	\centering
	\begin{subfigure}[t]{0.4\textwidth}
		\centering
		\begin{tikzpicture}[thick,font=\Large]
			\draw[ultra thick] (0,0) rectangle (100pt,100pt);
			\fill[black!70!white] (40pt,40pt) rectangle (60pt,60pt);
			\fill[black!70!white] (40pt,20pt) rectangle (60pt,40pt);
			\fill[black!70!white] (40pt,0pt) rectangle (60pt,20pt);
	
			\fill[gray!20!white] (20pt,40pt) rectangle (40pt,60pt);
			\fill[gray!20!white] (60pt,40pt) rectangle (80pt,60pt);
	
			\draw (30pt, 30pt) node {\textbf{S}};
			\draw (30pt, 50pt) node {\textbf{T}};
			\draw (70pt, 30pt) node {\textbf{G}};
			\draw (70pt, 50pt) node {\textbf{T}};
			
			\draw[step=20pt] (0,0) grid (100pt,100pt);
		\end{tikzpicture}
		\caption{}
	\end{subfigure}
	\begin{subfigure}[t]{0.4\textwidth}
		\centering
		\begin{tikzpicture}[thick,font=\Large]
			\draw[ultra thick] (0,0) rectangle (100pt,100pt);
			\fill[black!70!white] (40pt,40pt) rectangle (60pt,60pt);
			\fill[black!70!white] (40pt,20pt) rectangle (60pt,40pt);
			\fill[black!70!white] (40pt,0pt) rectangle (60pt,20pt);
	
			\fill[gray!20!white] (20pt,40pt) rectangle (40pt,60pt);
			\fill[gray!20!white] (60pt,40pt) rectangle (80pt,60pt);
	
			\draw (30pt, 30pt) node {{1}};
			\draw (10pt, 30pt) node {{2}};
			\draw (10pt, 50pt) node {{3}};
			\draw (10pt, 70pt) node {{4}};
			\draw (30pt, 70pt) node {{5}};
			\draw (50pt, 70pt) node {{6}};
			\draw (70pt, 70pt) node {{7}};
			\draw (90pt, 70pt) node {{8}};
			\draw (90pt, 50pt) node {{9}};
			\draw (90pt, 30pt) node {{10}};
			\draw (70pt, 30pt) node {{11}};
			\draw (10pt, 89pt) node {$4'$};
			\draw (30pt, 90pt) node {$5'$};
			
			\draw[step=20pt] (0,0) grid (100pt,100pt);
		\end{tikzpicture}	
		\caption{}	
	\end{subfigure}
	\caption{(a) The maze MDP. (b) State IDs are annotated for easier reference.}
	\label{figMazeDomain}
\end{figure}
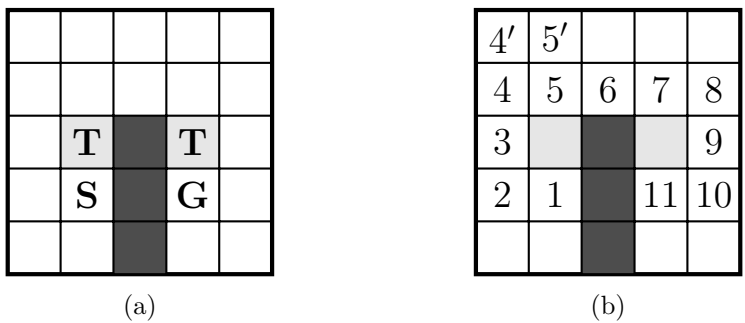

The MDP that will be investigated in this section is shown in Figure \ref{figMazeDomain} (a).
It is an instance of the maze domain, where the general objective for the agent is to reach goal from the start point without falling into the traps.

Specifically, the agent starts from the cell marked with ``S''.
At each time step, the agent can choose one of the four directions and attempt to go to the adjacent cell in that direction.
There is a probability $p$ that the attempt of moving succeeds, and if it fails, the agent will still be in the same cell at next time step.
This can be seen as an abstraction of the real-world source of uncertainty, such as the imperfection of state representation due to the discretization of continuous space, or the imperfection of action execution.
Also, the agent is not allowed to walk into the blocked cells, which are filled with black, or pass through the walls at the edges of the grid.
Stepping into the trapped cells marked with ``T'', on the other hand, will result in the agent to be transited back to the starting cell, and therefore this should be avoided.
Finally, at the goal cell marked with ``G'', an additional action of collecting the reward will be available to the agent.
By taking this action, the agent will be given some reward $r_G>0$, and will be send back to the start point.

The optimal policy in this MDP is to walk along the path that is annotated by the numbers [1-2-3-...-11] in Figure \ref{figMazeDomain} (b), and collect the reward at the goal point.
Other than this optimal one, there also exist many sub-optimal policies that are able to collect the goal reward.
For example, following the path of [1-...-4-$4'$-$5'$-5-...-11] is a possible policy of this kind.
Of course, there are also policies that can never reach goal, for example the one that take the path of [1-2-3-(trap)].

The research question here comes to be whether our approach for analysing the exploration efficiency can be applied to this maze MDP.
Although this maze MDP appears to be rather far from the typical chain MDPs since half of the possible positions are ``off-the-chain'' ones, exploring this MDP is nevertheless very similar to exploring a chain MDP.
Actually, from the chain perspective, the impact of the existence of off-the-chain states on the exploration for the optimal policy is very limited. 
Leaving from and returning to the chain states, for example from state $4$ to $4'$ and its reverse $4'$ to 4, are symmetric in this maze MDP.
The agent has no need to pass through chains states more often in order to explore off-the-chain ones, and vice versa.
As a result, despite that these off-the-chain states provide the agent some alternative policies to reach the goal, their actual impacts on the success probability of the optimal policy $\psucc^*$ are negligible.

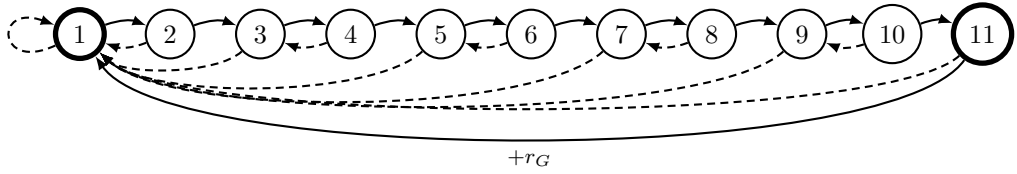
\begin{figure}
	\centering
	\begin{tikzpicture}[
		auto, ->, >=latex,
		thick, 
		font=\scriptsize,
		main node/.style={circle,draw,font=\small},
		special node/.style={circle,draw,font=\small, line width=2pt},
		b reset/.style={densely dashed, bend left = 43, max distance = 40},
		b start/.style={densely dashed, in = 160, out = 205, looseness = 8, left = 1},
		a fail/.style={in = 105, out = 60, looseness = 8, above = 1},
		a goal/.style={bend left = 55, looseness = 8,  max distance = 50},
	]

		\foreach [evaluate=\x as \pos using 1.2*\x-1] \x in {1, 11}
		{
			\node[special node] (\x) at (\pos,0) {$\x$};
		}
		\foreach [evaluate=\x as \pos using 1.2*\x-1] \x in {2,...,10}
		{
			\node[main node] (\x) at (\pos,0) {$\x$};
		}

		\foreach \x/\y in {1/2,2/3,3/4,4/5,5/6,6/7,7/8,8/9,9/10,10/11}
		{
			\path (\x) 	edge [bend left = 20] node {} (\y);
		}
		\foreach \x/\y in {1/2,3/4,5/6,7/8,9/10}
				{
					\path (\y) 	edge [bend left = 20, densely dashed] node {} (\x);
				}
		\foreach  [evaluate=\x as \bent using 0.7-0.03*\x] \x in {3,5,7,9,11}
		{
			\path (\x)  edge [b reset, looseness = \bent] node {} (1);
		}
		\path (1) edge [b start] node {} (1);
		\path (11) edge [a goal] node {$+r_G$} (1);
		
	\end{tikzpicture}
	\caption{Abstracted chain from the maze MDP.}
	\label{figMazeChain}
\end{figure}

Therefore, the original maze MDP can be abstracted to the chain MDP shown in Figure \ref{figMazeChain}.
As in the previous figures of chains, the forward actions, in this case the ones following [1-2-3-...-11], are drawn in solid arrows, while the backward actions are drawn in dashed arrows.
The self-transitions due to the failure of movement are not drawn in the figure for clarity.
All ``off-the-chain'' states and actions are ignored for aforementioned reasons.

The resulting chain MDP differs from the prototypes in two points. 
First, the hazardousness of the states are mixed with $H_i=1$ for odd $i$ and $H_i=\infty$ for even $i$.
The variety in hazardousness is due to the difference cases of connection to the trapped cells.
Second, all actions except for the goal action, including the backward actions, have a chance of $(1-p)$ to fail and result in a self-transition, while in prototype chains only forward actions have failure probabilities.
The second point actually does not change the expression of estimated state values for the optimal policy, since the possible effect of taking backward actions is irrelevant to the optimal policy itself.
Therefore, the main concern here is how the first points affects the visit numbers.

By undertaking the method provided in Section \ref{secClosedVisitNum}, specifically by solving the in and out numbers of transitions according to Lemma \ref{lemLambda} and Equation \ref{eqNsa}, the visit numbers for this chain MDP can be obtained easily.
Actually, it can be proved that they follow the recurrence formula as below:
\begin{equation*}
	\bar{N}^+_{i} = \begin{cases}
		\frac{(1+2p)m}{p} & i=n-1 \\
		\bar{N}^+_{i+1} & i \text{ is odd and } i<n-1 \\
		\bar{N}^+_{i+1}+m & i \text{ is even and } i<n-1. 
	\end{cases}
\end{equation*}

Other theoretical results are directly applicable to this abstracted chain MDP since there are no further critical difference between this one and the prototypes that could affect the validity of our theory.

\begin{figure}
	\centering
	{\includegraphics[scale=0.7,trim=1cm 9cm 1cm 9.2cm]{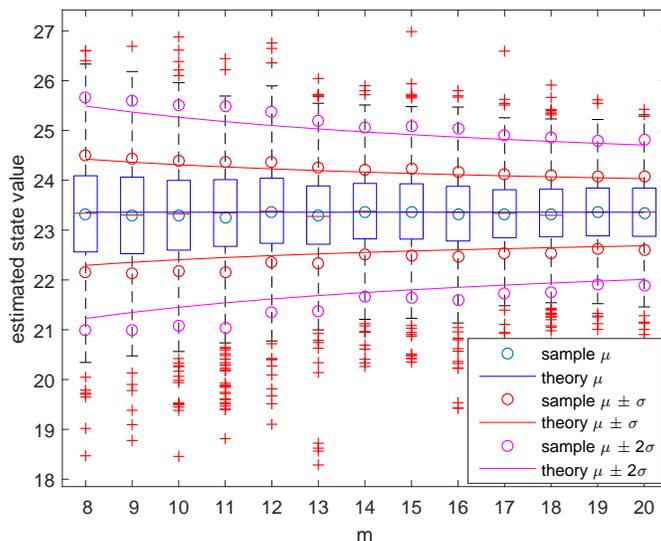}}
	\caption{Sample and theoretical distribution of $\hat{V}^{\pi^*}(s_1)$ in the maze MDP.}
	\label{figMazeResult}
\end{figure}

We conducted some experiments to verify our approach to the maze MDP.
The Optimistic Prototype Strategy with $m$ ranging from 8 to 20 was executed in the original maze MDP shown in Figure \ref{figMazeDomain} (a) with $p=0.5$ and $r_G=1$.
The theoretical and empirical distribution of $\hat{V}^{\pi^*}(s_1)$ is shown in Figure \ref{figMazeResult}.

As can be seen from the figure, our theoretical distribution is generally accurate, although there is a slight lose of dispersion in the theoretical distribution of $\hat{V}^{\pi^*}(s_1)$.
This is a reasonable result because we simply ignored all off-the-chain states when considering the visit numbers for the convenience of analysis.
By dealing with the visit numbers more carefully, this error is likely to be reduced.
Nevertheless, the lose of dispersion here does not have a strong impact on the validity of our analysis, since the difference is only noteworthy at the tails of the distribution.
Furthermore, Kolmogorov-Smirnov test did not reject the null hypothesis of the data being generated from the theoretical log-normal distribution at 1\% significance level under any setting of $m$, and thus the theoretical values are sufficiently accurate.

Having the cumulative distribution of $\hat{V}^{\pi^*}(s_1)$, we are now able to assess the success probability of exploration $\psucc^*$ according to Theorem \ref{theoPiSPE} without actually performing the learning procedure in the maze MDP.
The critical value appeared in Theorem \ref{theoPiSPE} is actually not of importance here.
If we are interested in a set of near-optimal policies that could yield $k\%$ of the optimal cumulative rewards, then we just need to compute the cumulative distribution at $k\%$ of the mean value of $\hat{V}^{\pi^*}(s_1)$, and see what the chance is for $\pi^*$ to be regarded as a near-optimal solution.

Given these points, our approach is applicable to general MDPs with only slight changes to the theoretical results given in previous sections.

\subsection{A Short Practice Guide to Our Approach}
The general steps of obtaining and analysing the success probability of exploration is summarized as follows.

\begin{enumerate}[label={(\arabic*)}]
	\item Abstract a chain MDP from the original MDP as in Section \ref{SectionChainPerspective} and Section \ref{secMaze}. 
	\begin{itemize}[nosep]
		\item Try to keep as much as possible of the critical characteristics that affects the hardness of exploration.
		\item It does not need to be one of the prototype chain defined in Section \ref{secProtoChains}, since most of the main results apply to non-prototype chains as well.
	\end{itemize}
	
	\item Abstract an easy-to-analysis version of exploration strategy from the original one.
	\begin{itemize}[nosep]
		\item If the original one is optimistic, it is highly recommended to carry out abstraction based on the Optimistic Prototype Strategy defined in Section \ref{secProtoStrategy}.
	\end{itemize}
	
	\item Evaluate the probability of traverse event.
	\begin{itemize}[nosep]
		\item In most chain MDPs, Theorem \ref{theoTrav} can be applied directly or with trivial modification for OPS and its modifications.
	\end{itemize}
	
	\item Evaluate the visit numbers.
	\begin{itemize}[nosep]
		\item Although Lemma \ref{lemExpressionNsa} cannot directly be applied to general chains, Lemma \ref{lemLambda} and Equation \ref{eqNsa} are still valid, and thus the expected visit numbers can usually be solved without effort as in Section \ref{secMaze}. 
		The result is likely to be similar in the form with the ones in Lemma \ref{lemExpressionNsa}.
		\item After obtaining the expected visit numbers, apply Rule 2 and Rule 3 of Definition \ref{defThreeRulesNsa} to evaluate the distribution of actual visit numbers.
	\end{itemize}
	
	\item Solve Bellman equations to obtain the expressions of state values.
	\begin{itemize}[nosep]
		\item Not only that this step is independent to  exploration strategies, but the expressions in Lemma \ref{lemExpressionV} (c) and (d) are also irrelevant to the hazardousness.
		Since most general chains differ from the prototype ones in that the hazardousness level varies among the chain states, these expressions are directly applicable to general chains.
		
		\item Expressions in Lemma \ref{lemExpressionV} (a) and (b) merely serve for deciding the critical values that the key state values must exceed or being exceeded, as discussed Lemma \ref{lemCompPBF1} and Lemma \ref{lemCompPBF2}.
		It is likely that they can be inferred directly from prior knowledge without solving Bellman equations, as in the example of maze domain in Section \ref{secMaze}.
	\end{itemize}
	
	\item Evaluate the final success probability.
	\begin{itemize}[nosep]
		\item  There are two key steps: first, decide the relevant estimated state values and their corresponding critical values; second, estimate the cumulative distribution of relevant estimated state values.
		
		\item The first step can be accomplished by following the dominance analysis in Lemma \ref{lemDominance}, Lemma \ref{lemCompPBF1} and Lemma \ref{lemCompPBF2}. In many cases Lemma \ref{lemCompPBF1} and Lemma \ref{lemCompPBF2} can be applied directly or by slight modification.
		
		\item The approximation by delta method introduced in Lemma \ref{lemDeltaMethod} in Section \ref{SectionApproxSPE} is recommended to decide the distribution of estimated state values.
		The results of Lemma \ref{lemMeanVarF} and Lemma \ref{lemMeanVarV} are widely applicable because the expressions here are not dependent to the hazardousness.
	\end{itemize}
\end{enumerate}

As can be seen from this summary, a considerable number of main results are directly applicable to the general  MDPs.
Although some of them may have to be adjusted if the original MDP or the exploration strategy deviate from the prototypes too much, in practice this might not be a problem due to the reasons explained in Section \ref{secProtoChains} and Section \ref{secProtoStrategy}, as long as the strategy is optimistic.

\section{Conclusion and Discussion}
\label{SectionDiscussion}
There has been a long-lasting gap between theory and practice of reinforcement learning.
Although several analytical frameworks have been established in the literature, in general they lack the ability to satisfy the practical needs.
This paper is an attempt of bridging the gap under a new framework, namely the success probability of exploration, so that the practice can actually benefit from the theory, rather than simply ignores it and relies on experiences and domain knowledge.

Looking back to the previous sections, we have formulated the success probability of exploration, introduced its basic properties, elaborated our concrete approach to evaluating it, and verified our approach via empirical results.
We have also showed that our novel framework does not suffer from the problems as the previous ones, and demonstrated that it can be used to comprehensively solve the three groups of questions mentioned in Section \ref{sec3Q}.
Although our framework and approach may not cover every problem encountered by us RL practitioners every day, we believe that our attempt here will be helpful for accelerating the process of closing the gap and making theories more useful.

Our analytical framework is the first one that is able to explain and predict the behaviour and possible outcome of exploration strategy in such a detailed manner.
To our best knowledge, there is no previous work concerning elaborating the visit numbers for state-action pairs and transitions as ours in Section \ref{secClosedVisitNum}, nor to connect them with the final outcomes as in Sections \ref{secClosedValueFunc} and \ref{secClosedSPE}.
We consider these parts as necessary steps for gaining deeper vision of exploration behaviour and sample efficiency of reinforcement learning, and therefore we hope our results could benefit a wider range of theorists as well. 

Nevertheless, much works still remain to be done in the future. 
The chain perspective in Section \ref{SectionChainPerspective} is still under construction, and a more rigorous formulation is in need.
A constructive or algorithmic method of abstracting chain MDPs from general ones is also highly desirable.
If such method is made available, then we will be able to compare the hardness of exploration between any two general MDPs, which may lead to deeper understanding of the efficiency of exploration and better strategies.
Another possible research direction is to extend the results to non-optimistic exploration strategies, for example the Bayesian approaches.
Lastly, our current theoretical analysis mainly focus on finite discrete MDPs; it is interesting to see if our theoretical results can be generalized to other cases, in particular the continuous MDPs.



%

\acks{We would like to acknowledge... }

\vskip 0.2in
\bibliography{paper_rtp}

\begin{thebibliography}{33}
\providecommand{\natexlab}[1]{#1}
\providecommand{\url}[1]{\texttt{#1}}
\expandafter\ifx\csname urlstyle\endcsname\relax
  \providecommand{\doi}[1]{doi: #1}\else
  \providecommand{\doi}{doi: \begingroup \urlstyle{rm}\Url}\fi

\bibitem[Abbeel and Ng(2005)]{abbeel2005exploration}
Pieter Abbeel and Andrew~Y Ng.
\newblock Exploration and apprenticeship learning in reinforcement learning.
\newblock In \emph{Proceedings of the 22nd international conference on Machine
  learning}, pages 1--8. ACM, 2005.

\bibitem[Auer and Ortner(2007)]{auer2006logarithmic}
Peter Auer and Ronald Ortner.
\newblock Logarithmic online regret bounds for undiscounted reinforcement
  learning.
\newblock In B.~Sch\"{o}lkopf, J.~C. Platt, and T.~Hoffman, editors,
  \emph{Advances in Neural Information Processing Systems 19}, pages 49--56.
  MIT Press, 2007.

\bibitem[Brafman and Tennenholtz(2002)]{brafman2002r}
Ronen~I. Brafman and Moshe Tennenholtz.
\newblock R-max--a general polynomial time algorithm for near-optimal
  reinforcement learning.
\newblock \emph{Journal of Machine Learning Research}, 3:\penalty0 213--231,
  2002.

\bibitem[Dearden et~al.(1998)Dearden, Friedman, and Russell]{dearden1998q}
Richard Dearden, Nir Friedman, and Stuart~J. Russell.
\newblock {Bayesian Q-learning}.
\newblock In \emph{Proceedings of the Fifteenth National Conference on
  Artificial Intelligence (AAAI-98)}, pages 761--768, 1998.

\bibitem[Farahmand(2011)]{farahmand2011action}
Amir-massoud Farahmand.
\newblock Action-gap phenomenon in reinforcement learning.
\newblock In \emph{Advances in Neural Information Processing Systems}, pages
  172--180, 2011.

\bibitem[Fiechter(1994)]{fiechter1994efficient}
Claude-Nicolas Fiechter.
\newblock Efficient reinforcement learning.
\newblock In \emph{Proceedings of the seventh annual conference on
  Computational learning theory}, pages 88--97. ACM, 1994.

\bibitem[Jaksch et~al.(2010)Jaksch, Ortner, and Auer]{jaksch2010near}
Thomas Jaksch, Ronald Ortner, and Peter Auer.
\newblock Near-optimal regret bounds for reinforcement learning.
\newblock \emph{Journal of Machine Learning Research}, 11\penalty0
  (Apr):\penalty0 1563--1600, 2010.

\bibitem[Kaelbling et~al.(1996)Kaelbling, Littman, and
  Moore]{kaelbling1996reinforcement}
Leslie~P. Kaelbling, Michael~L. Littman, and Andrew~W. Moore.
\newblock Reinforcement learning: A survey.
\newblock \emph{Journal of Artificial Intelligence Research}, pages 237--285,
  1996.

\bibitem[Kakade(2003)]{kakade2003sample}
Sham~M. Kakade.
\newblock \emph{On the sample complexity of reinforcement learning}.
\newblock PhD thesis, University College London, 2003.

\bibitem[Kearns and Singh(2002)]{kearns2002near}
Michael Kearns and Satinder Singh.
\newblock Near-optimal reinforcement learning in polynomial time.
\newblock \emph{Machine Learning}, 49\penalty0 (2-3):\penalty0 209--232, 2002.

\bibitem[Kocsis and Szepesv{\'a}ri(2006)]{kocsis2006bandit}
Levente Kocsis and Csaba Szepesv{\'a}ri.
\newblock Bandit based monte-carlo planning.
\newblock In \emph{European conference on machine learning}, pages 282--293.
  Springer, 2006.

\bibitem[Kolter and Ng(2009)]{kolter2009near}
J.~Zico Kolter and Andrew~Y. Ng.
\newblock Near-{B}ayesian exploration in polynomial time.
\newblock In \emph{Proceedings of the 26th International Conference on Machine
  Learning}, pages 513--520, 2009.

\bibitem[Lattimore and Hutter(2014)]{lattimore2014near}
Tor Lattimore and Marcus Hutter.
\newblock Near-optimal {PAC} bounds for discounted {MDP}s.
\newblock \emph{Theoretical Computer Science}, 558:\penalty0 125--143, 2014.

\bibitem[Li(2012)]{li2012sample}
Lihong Li.
\newblock Sample complexity bounds of exploration.
\newblock In \emph{Reinforcement Learning}, pages 175--204. Springer, 2012.

\bibitem[Littman et~al.(1995)Littman, Dean, and
  Kaelbling]{littman1995complexity}
Michael~L. Littman, Thomas~L. Dean, and Leslie~P. Kaelbling.
\newblock On the complexity of solving markov decision problems.
\newblock In \emph{Proceedings of the Eleventh conference on Uncertainty in
  artificial intelligence}, pages 394--402. Morgan Kaufmann Publishers Inc.,
  1995.

\bibitem[Maillard et~al.(2014)Maillard, Mann, and Mannor]{maillard2014hard}
Odalric-Ambrym Maillard, Timothy~A. Mann, and Shie Mannor.
\newblock How hard is my {MDP}? {T}he distribution-norm to the rescue.
\newblock In \emph{Advances in Neural Information Processing Systems}, pages
  1835--1843, 2014.

\bibitem[Mnih et~al.(2015)Mnih, Kavukcuoglu, Silver, Rusu, Veness, Bellemare,
  Graves, Riedmiller, Fidjeland, Ostrovski, et~al.]{mnih2015human}
Volodymyr Mnih, Koray Kavukcuoglu, David Silver, Andrei~A. Rusu, Joel Veness,
  Marc~G. Bellemare, Alex Graves, Martin Riedmiller, Andreas~K. Fidjeland,
  Georg Ostrovski, et~al.
\newblock Human-level control through deep reinforcement learning.
\newblock \emph{Nature}, 518\penalty0 (7540):\penalty0 529--533, 2015.

\bibitem[Oehlert(1992)]{oehlert1992note}
Gary~W. Oehlert.
\newblock A note on the delta method.
\newblock \emph{The American Statistician}, 46\penalty0 (1):\penalty0 27--29,
  1992.

\bibitem[Ortner and Ryabko(2012)]{ortner2012online}
Ronald Ortner and Daniil Ryabko.
\newblock Online regret bounds for undiscounted continuous reinforcement
  learning.
\newblock In \emph{Advances in Neural Information Processing Systems}, pages
  1763--1771, 2012.

\bibitem[Perlich(2011)]{perlich2011learning}
Claudia Perlich.
\newblock Learning curves in machine learning.
\newblock In \emph{Encyclopedia of Machine Learning}, pages 577–--580.
  Springer US, 2011.

\bibitem[Puterman(1994)]{puterman1994markov}
Martin Puterman.
\newblock \emph{Markov Decision Processes: Discrete Stochastic Dynamic
  Programming}.
\newblock Wiley-Interscience, 1994.

\bibitem[Rao and Whiteson(2012)]{rao2012v}
Karun Rao and Shimon Whiteson.
\newblock V-max: tempered optimism for better {PAC} reinforcement learning.
\newblock In \emph{Proceedings of the 11th International Conference on
  Autonomous Agents and Multiagent Systems-Volume 1}, pages 375--382, 2012.

\bibitem[Riedmiller et~al.(2009)Riedmiller, Gabel, Hafner, and
  Lange]{riedmiller2009reinforcement}
Martin Riedmiller, Thomas Gabel, Roland Hafner, and Sascha Lange.
\newblock Reinforcement learning for robot soccer.
\newblock \emph{Autonomous Robots}, 27\penalty0 (1):\penalty0 55--73, 2009.

\bibitem[Silver et~al.(2016)Silver, Huang, Maddison, Guez, Sifre, Van
  Den~Driessche, Schrittwieser, Antonoglou, Panneershelvam, Lanctot,
  et~al.]{silver2016mastering}
David Silver, Aja Huang, Chris~J. Maddison, Arthur Guez, Laurent Sifre, George
  Van Den~Driessche, Julian Schrittwieser, Ioannis Antonoglou, Veda
  Panneershelvam, Marc Lanctot, et~al.
\newblock Mastering the game of go with deep neural networks and tree search.
\newblock \emph{Nature}, 529\penalty0 (7587):\penalty0 484--489, 2016.

\bibitem[Strehl and Littman(2004)]{strehl2004empirical}
Alexander~L. Strehl and Michael~L. Littman.
\newblock An empirical evaluation of interval estimation for {m}arkov decision
  processes.
\newblock In \emph{Tools with Artificial Intelligence (ICTAI), 2004.}, pages
  128--135, 2004.

\bibitem[Strehl and Littman(2005)]{strehl2005mbie}
Alexander~L. Strehl and Michael~L. Littman.
\newblock A theoretical analysis of model-based interval estimation.
\newblock In \emph{Proceedings of the 22nd International Conference on Machine
  learning}, pages 856--863. ACM, 2005.

\bibitem[Strehl et~al.(2009)Strehl, Li, and Littman]{strehl2009pac}
Alexander~L. Strehl, Lihong Li, and Michael~L. Littman.
\newblock Reinforcement learning in finite {MDP}s: {PAC} analysis.
\newblock \emph{The Journal of Machine Learning Research}, 10:\penalty0
  2413--2444, 2009.

\bibitem[Sutton and Barto(1998)]{SuttonBarto98}
Richard~S. Sutton and Andrew~G. Barto.
\newblock \emph{Introduction to Reinforcement Learning}.
\newblock MIT Press, Cambridge, MA, USA, 1st edition, 1998.
\newblock ISBN 0262193981.

\bibitem[Szita and Szepesv{\'a}ri(2010)]{szita2010mormax}
Istv{\'a}n Szita and Csaba Szepesv{\'a}ri.
\newblock Model-based reinforcement learning with nearly tight exploration
  complexity bounds.
\newblock In \emph{Proceedings of the 27th International Conference on Machine
  Learning}, pages 1031--1038, 2010.

\bibitem[Valiant(1984)]{valiant1984theory}
Leslie~G. Valiant.
\newblock A theory of the learnable.
\newblock \emph{Communications of the ACM}, 27\penalty0 (11):\penalty0
  1134--1142, 1984.

\bibitem[Vlassis et~al.(2012)Vlassis, Ghavamzadeh, Mannor, and
  Poupart]{vlassis2012bayesian}
Nikos Vlassis, Mohammad Ghavamzadeh, Shie Mannor, and Pascal Poupart.
\newblock Bayesian reinforcement learning.
\newblock In \emph{Reinforcement Learning}, pages 359--386. Springer, 2012.

\bibitem[Whitehead(1991)]{whitehead1991q}
Steven~D. Whitehead.
\newblock Complexity and cooperation in {Q}-learning.
\newblock In \emph{Proceedings of the Eighth International Workshop on Machine
  Learning}, pages 363--367, 1991.

\bibitem[Zhang et~al.(2015)Zhang, Tang, and Yao]{zhang2015increasingly}
Liangpeng Zhang, Ke~Tang, and Xin Yao.
\newblock Increasingly cautious optimism for practical {PAC-MDP} exploration.
\newblock In \emph{Proceedings of the 24th International Joint Conference on
  Artificial Intelligence}, pages 4033--4040. AAAI Press, 2015.

\end{thebibliography}

\end{document}